\newtheorem{thm}{Theorem}[section]
\newtheorem*{thm*}{Theorem}
\newtheorem{conj}[thm]{Conjecture}
\newtheorem{prop}[thm]{Proposition}
\theoremstyle{definition}
\newtheorem{definition}[thm]{Definition}
\newtheorem*{definition*}{Definition}
\newtheorem*{example*}{Example}
\newtheorem*{framework*}{Unifying OT Framework}
\theoremstyle{remark}
\newtheorem{remark}{Remark}
\newtheoremstyle{smplain}
  {\topsep}   
  {\topsep}   
  {\itshape}  
  {0pt}       
  {\bfseries} 
  {.}         
  {5pt plus 1pt minus 1pt} 
  {\thmname{#1}\thmnumber{ {#2}}\thmnote{ (#3)}}          
\theoremstyle{smplain}
\newcounter{sdecorator}
\newtheorem{smthm}{Theorem}[subsection]
\newtheorem{smlem}[smthm]{Lemma}
\newcommand{\Mrow}[2]{\mathbf{{#1}}_{(#2,\_)}}
\newcommand{\Mcol}[2]{\mathbf{{#1}}_{(\_,#2)}}
\newcommand{\fun}[2]{#1\!\left(#2\right)}
\newcommand{\PP}[2]{\fun{P_{#1\!}}{#2}} 
\newcommand{\given}{\vert}
\newcommand{\dataSpace}{\mathcal D} 
\newcommand{\concept}{h} 
\newcommand{\conceptSpace}{\mathcal H}
\newcommand{\LL}{\PP{L}{\concept\given d}}
\newcommand{\LLpri}{\PP{L_0}{\concept}}
\newcommand{\LLmar}{\PP{L}{d}}
\newcommand{\TT}{\PP{T}{d\given\concept}} 
\newcommand{\TTpri}{\PP{T_0}{d}}
\newcommand{\TTmar}{\PP{T}{\concept}}
\newcommand{\rr}{\mathbf{r}} 
\newcommand{\cc}{\mathbf{c}}
\newcommand{\normalize}[2]{\mathscr{N}_{\text{#1}}\left(#2\right)}
\icmltitlerunning{Sequential Cooperative Bayesian Inference}
\begin{document}

\twocolumn[
\icmltitle{Sequential Cooperative Bayesian Inference}



\icmlsetsymbol{equal}{*}

\begin{icmlauthorlist}
\icmlauthor{Junqi Wang}{run}
\icmlauthor{Pei Wang}{run}
\icmlauthor{Patrick Shafto}{run}
\end{icmlauthorlist}

\icmlaffiliation{run}{CoDaS Lab, Department of Math \& CS, Rutgers University at Newark,
  New Jersey, USA}

\icmlcorrespondingauthor{Junqi Wang}{junqi.wang@rutgers.edu}
%

\icmlkeywords{Bayesian inference, cooperative inference, multi-agent learning, Machine Learning, ICML}

\vskip 0.3in

]



\printAffiliationsAndNotice{}

\begin{abstract}
  Cooperation is often implicitly assumed when learning from other agents. 
Cooperation implies that the agent selecting the data, and the agent learning
from the data, have the same goal, that the learner infer the intended
hypothesis.
Recent models in human and machine learning have demonstrated the
possibility of cooperation.
We seek foundational theoretical results for cooperative inference 
by Bayesian agents through sequential data.
We develop novel approaches analyzing consistency, rate of convergence and 
stability of Sequential Cooperative Bayesian Inference (SCBI). Our analysis of the effectiveness, sample efficiency and robustness show
that cooperation is not only possible in specific instances but theoretically well-founded in general.
We discuss implications for human-human and human-machine cooperation.




\end{abstract}


\section{Introduction}
\label{sec:intro}
Learning often occurs sequentially, as opposed to in batch, and from data provided by other agents, as opposed to from a fixed random sampling process. The canonical example of sequential learning from an agent occurs in educational contexts where the other agent is a teacher whose goal is to help the learner. 
However, instances appear across a wide range of contexts including informal learning, language, and robotics. 
In contrast with typical contexts considered in machine learning, it is reasonable to expect the cooperative agent to adapt their sampling process after each trial, consistent with the goal of helping the learner learn more quickly. It is also reasonable to expect that learners, in dealing with such cooperative agents, would know the other agent intends to cooperate and incorporate that knowledge when updating their beliefs. In this paper, we analyze basic statistical properties of such sequential cooperative inferences. 

Large behavioral and computational literatures highlight the importance cooperation for learning. 
Across behavioral sciences, cooperative information sharing is believed to be a core feature of human cognition. 
Education, where a teacher selects examples for a learner, is perhaps the most obvious case. Other examples appear in linguistic pragmatics \cite{frank2012predicting}, in speech directed to infants \cite{Eaves2016c}, and children's learning from demonstrations \cite{bonawitz2011double}. Indeed, theorists have posited that the ability to select data for and learn cooperatively from others explains humans' ability to learning quickly in childhood and accumulate knowledge over generations \cite{tomasello1999,csibra2009natural}.

Across computational literatures, cooperative information sharing is also believed to be central to human-machine interaction.  
Examples include pedagogic-pragmatic value alignment in robotics \cite{fisac2017pragmatic}, cooperative inverse reinforcement learning \cite{hadfield2016cooperative}, machine teaching \cite{zhu2013machine}, and Bayesian teaching \cite{Eaves2016c} in machine learning, and Teaching dimension in learning theory \cite{Zilles2008,Doliwa2014}. 
Indeed, rather than building in knowledge or training on massive amounts of data, cooperative learning from humans is a strong candidate for advancing machine learning theory and improving human-machine teaming more generally. 

While behavioral and computational research makes clear the importance of cooperation for learning, we lack mathematical results that would establish statistical soundness. 
In the development of probability theory, proofs of consistency and rate of convergence were celebrated results that put Bayesian inference on strong mathematical footing \cite{doob1949application}. 
Moreover, establishment of stability with respect to mis-specification ensured that theoretical results could apply despite the small differences between the model and reality \cite{kadane1978stable,berger1994overview}. Proofs of consistency, convergence, and stability ensure that intuitions regarding probabilistic inference were formalized in ways that satisfied basic desiderata. 

Our goal is to provide a comparable foundation for sequential Cooperative Bayesian Inference as statistical inference for understanding the strengths, limitations, and behavior of cooperating agents. Grounded strongly in machine learning \citep{murphy2012machine,ghahramani2015probabilistic} and human learning \citep{tenenbaum2011grow}, we adopt a probabilistic approach. 
We approach consistency, convergence, and stability using a combination of new analytical and empirical methods. 
The result will be a model agnostic understanding of whether and under what conditions sequential cooperative interactions result in effective and efficient learning. 

Notations are introduced at the end of this section.
Section~\ref{sec:scbi_construction} introduces the model of sequential
cooperative Bayesian inference (SCBI), and Bayesian inference (BI) as the
comparison. Section~\ref{sec:consistency} presents a new analysis approach which we
apply to understanding consistency of SCBI.
Section~\ref{sec:sample_efficiency} presents empirical results analyzing the sample
efficiency of SCBI versus
BI, showing convergence of SCBI is considerably faster.
Section~\ref{sec:stability} presents the empirical results testing 
robustness of SCBI to perturbations.
Section~\ref{sec:application} introduces an application of SCBI in Grid world model.
Section~\ref{sec:related}
describes our contributions in the context of related work, and
Section~\ref{sec:conclusion} discusses implications for machine learning and
human learning.  

\noindent
\textbf{Preliminaries.} 
Throughout this paper, for a vector $\theta$, we denote its $i$-th entry by
$\theta_i$ or $\theta(i)$. Similarly, for a matrix $\mathbf{M}$, we
denote the vector of $i$-th row by $\Mrow{M}{i}$, the vector of $j$-th column by
$\Mcol{M}{j}$, and the entry of $i$-th row and $j$-th column by
$\mathbf{M}_{(i,j)}$ or simply $\mathbf{M}_{ij}$. 
Further, let  $\mathbf{r},\mathbf{c}$ be the column vectors representing the row
and column marginals (sums along row/column) of $\mathbf{M}$. Let $\mathbf{e}_n$ or simply $\mathbf{e}$ be the vector of ones.
%
The symbol $\normalize{vec}{\theta, s}$ is used to denote the normalization of
a non-negative vector $\theta$, i.e., 
$\normalize{vec}{\theta,s}=\frac{s}{\sum\theta_i}\theta$ with $s=1$ if
absent. Similarly, 
the normalization of matrices are denoted by 
$\normalize{col}{\mathbf{M},\theta}$, with ``col'' indicating 
column normalization (for row normalization, write ``row'' instead), and $\theta$
denotes to which vector of sums the matrix is normalized.
The set of probability distributions on a finite set $\mathcal{X}$ is denoted by
$\mathcal{P}(\mathcal{X})$, we do not distinguish it with the simplex
$\Delta^{|\mathcal{X}|-1}$. 
The language of statistical models and estimators follows the
notations of the book \cite{miescke2008decision}.

\section{The Construction}
\label{sec:scbi_construction}
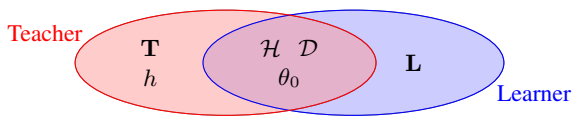
\begin{figure}[ht]
  \centering
  \begin{tikzpicture}
    \filldraw[fill=blue!40, opacity=0.5] (2.2,0) ellipse (2 and 0.7);
    \filldraw[fill=red!40, opacity=0.5] (0.5,0) ellipse (2 and 0.7);
    \draw[blue] (2.2,0) ellipse (2 and 0.7);
    \draw[red] (0.5,0) ellipse (2 and 0.7);
    \node at (-0.5, -0.2) {\small $h$};
    \node at (-0.5, 0.2) {\small $\mathbf{T}$};
    \node at (1.1,0.2) {\small $\mathcal{H}$};
    \node at (1.6,0.2) {\small $\mathcal{D}$};
    \node at (1.35,-0.2) {\small $\theta_0$};
    \node at (3, 0) {\small $\mathbf{L}$};
    \node [color=red] at (-1.9, 0.4) {\small Teacher};
    \node [color=blue] at (4.6,-0.4) {\small Learner};
  \end{tikzpicture}
  \caption{\label{fig:agent_knowledge}Two agents and their knowledge before starting.}
\end{figure}

In this paper, we consider cooperative communication models with two agents, which we call a
teacher and a learner. Let
$\mathcal{H}=\{1,\dots,m\}$
be the set of $m$ hypotheses, i.e., concepts to teach.
The shared \textbf{goal} is for the learner to infer the correct hypothesis
$h\in\mathcal{H}$ which is only known by the teacher at the beginning. 
To facilitate learning, the teacher passes one element from
a finite data set $\mathcal{D}=\{1,\dots, n\}$ sequentially. 
Each agent has knowledge about the relation between
$\mathcal{H}$ and $\mathcal{D}$, in terms of a positive matrix
whose normalization can be treated as the
likelihood matrix in a Bayesian sense.
Let $\mathbf{T}, \mathbf{L}\in\mathrm{Mat}_{n\times m}(\mathbb{R}^+)$ be the 
matrices for teacher and learner, respectively.

In order to construct a Bayesian theory, 
the learner has an initial prior $\theta_0$
on $\mathcal{H}$, which, along with posteriors $\theta_k (k\ge 1)$,
are elements in $\mathcal{P}(\mathcal{H})=\Delta^{m-1}:=\{\theta\in\mathbb{R}^m:\sum_{i=1}^m\theta(i)=1\}$.
Privately, the teacher knows the true hypothesis $h\in\mathcal{H}$ to teach.
To measure how well a posterior $\theta_k$ performs, 
we may view $h$ as a distribution on $\mathcal{H}$, namely $\widehat{\theta}=\delta_h\in\mathcal{P}(\mathcal{H})$, and calculate the $L^1$-distance $||\theta_k-\widehat{\theta}||_1$ on $\mathcal{P}(\mathcal{H})=\Delta^{m-1}\subseteq\mathbb{R}^m$.


We assume that $\mathcal{H}$, $\mathcal{D}$, $\mathbf{T}$,
$\mathbf{L}$ and $\theta_0$ satisfy:

(\textbf{i}) There are no fewer data than hypotheses ($n\ge m$).

(\textbf{ii}) The hypotheses are distinguishable, i.e., there is no
$\lambda\in\mathbb{R}$ such that
$\mathbf{T}_{(\_,i)}=\lambda\mathbf{T}_{(\_,j)}$ for any
$i\ne j$, and so is $\mathbf{L}$.

(\textbf{iii}) $\mathbf{T}$ is a \textit{scaled matrix} of $\mathbf{L}$, i.e.,
there exist invertible diagonal matrices $\mathbf{E_1}$ and $\mathbf{E_2}$
such that $\mathbf{T}=\mathbf{E_1LE_2}$. (Both agents aware this assumption,
though possibly neither know the other's matrix.)

(\textbf{iv}) $\theta_0$ is known by the teacher.

Our model is constructed and studied under 
these assumptions (Sec.~\ref{sec:consistency} and Sec.~\ref{sec:sample_efficiency}). 
We also studied stability under violations of (iii) and (iv), where we assume that
$\mathbf{T}$ and teacher's knowledge $\theta_0^T$ about $\theta_0$ is slightly
different from (some scaled matrix of) $\mathbf{L}$ and $\theta_0$ (Sec.~\ref{sec:stability}).
Assumption (iii) is a relaxation of 
the assumption of Bayesian inference that $\mathbf{T}=\mathbf{L}=\mathbf{M}$ is the likelihood matrix.
Practically, we may achieve (iii) by
adding to the common ground a shared
matrix $\mathbf{M}$ (e.g. joint distribution on $\mathcal{D}$ and $\mathcal{H}$) and scaling it to $\mathbf{T}$ and $\mathbf{L}$. 
We may obtain $\mathbf{M}$ by taking the same ground model or
using the same statistical data (e.g. historical statistical records).
In fact, with (iii), it does not affect the process of inference
whether $\mathbf{M}$ is accessible to agents.

In SCBI (see details in later this section), thanks to the property 
that a matrix and its scaled matrices behave the same
in Sinkhorn scaling \cite{hershkowitz1988classifications},
the pre-processings of $\mathbf{T}$ and  of $\mathbf{L}$ lead to the same results under (iii) and (iv). Thus
assumption (iii) is equivalent to:

(\textbf{iii'}) $\mathbf{T}=\mathbf{L}=\mathbf{M}$ where $\mathbf{M}$ is
a column-stochastic matrix.

We assume (iii') is valid until we discuss stability.

  

In our setup, the teacher teaches in sequence. At each round the teacher chooses
a data point from $\mathcal{D}$ by sampling according to a distribution. And the
learner learns by maintaining a
posterior distribution on $\mathcal{H}$ through Bayesian inference with likelihood matrices not necessarily fixed.

Formally, the teacher's job is to select a sequence of data
$(d_k)_{k\in\mathbb{N}}$ by first constructing a sequence of random variables $(\mathrm{D}_k)_{k\in\mathbb{N}}$, then
sampling each $d_k$ as a realization of $\mathrm{D}_k$. Each $d_k$ is given to the learner
at round $k$.
And the learner's job is to produce a sequence of posteriors
$(\theta_k)_{k\in\mathbb{N}}$ on $\mathcal{P}(\mathcal{H})$.
To calculate $\theta_k$, learner can use the matrix $\mathbf{L}$,
the initial prior $\theta_0$ 
which is common knowledge,
and the sequence of data $(d_i)_{i\le k}$ which is visible at round $k$.
The learner find each posterior by giving a function
$S_k((d_i)_{i\le k};\mathbf{L},\theta_0)$
\footnote{we may omit $\mathbf{L}$ and (or) $\theta_0$ when there is no ambiguity.}
for $k>0$.
We may further define $S_0(\varnothing;\mathbf{L},\theta_0)=\theta_0$.

Since $(d_k)_{k\in\mathbb{N}}$ is generated by a sequence of random variables $(\mathrm{D}_k)_{k\in\mathbb{N}}$,
the function $S_k$ can be treated as
a function taking $(\mathrm{D}_i)_{i\le k}$ as inputs
and producing a random variable $\Theta_k$ as output. We call the distribution of $\Theta_k$
by $\mu_k\in\mathcal{P}(\mathcal{P}(\mathcal{H}))=\mathcal{P}(\Delta^{m-1})$.
The $S_k$'s as functions of random variables are called \textit{estimators}. 

Being a special case of the above framework,
Bayesian inference dealing with sequential data is a well-studied model.
However, there is no cooperation in Bayesian inference since
the teaching distribution and learning likelihood are constant on time (the teacher side is typically left implicit).
To introduce cooperation following cooperative inference \cite{YangYGWVS18},
we propose Sequential Cooperative Bayesian Inference
(SCBI), which is a sequential version of the cooperative inference. 

\subsection{Sequential Cooperative Bayesian Inference}
\label{subsec:CI_intro}
Sequential Cooperative Bayesian Inference (SCBI) assumes that the two agents---a
teacher and a learner---cooperate to facilitate 
learning. Prior research has formalized this cooperation (in a single-round
game) as a system of two 
interrelated equations in which the teacher's choice of data depends on the
learner's inference, and the learner's inference depends on reasoning about the
teacher's choice. This prior research into such Cooperative Inference has
focused on batch selection of data \cite{YangYGWVS18,wang2018generalizing}, 
and has been shown to be formally equivalent to Sinkhorn scaling
\cite{wang2019mathematical}. Following this principle, we propose a new
\textit{sequential} setting in 
which the teacher chooses data sequentially, and both agents update the likelihood at each
round to optimize learning. 

\noindent
\textbf{Cooperative Inference.}
Let $P_{L_{0}}(h)$ be the learner's prior of hypothesis $h\in \conceptSpace$, 
$P_{T_{0}}(d)$ be the teacher's prior of selecting data $d\in \dataSpace$. Let
$P_{T}(d|h)$ be the teacher's likelihood of selecting $d$ to convey $h$ 
and $P_{L}(h|d)$ be the learner's posterior for $h$ given $d$. 
\textbf{Cooperative inference} is then a system of two equations shown below,
with $\LLmar$ and $\TTmar$ the normalizing constants:
\begin{equation}\label{eq:LT}
 \small
 \LL = \frac{\TT \LLpri}{\LLmar},\ 
 \TT = \frac{\LL \TTpri}{\TTmar}.
\end{equation}
It is shown \citep{wang2018generalizing,wang2019mathematical} that
Eq.~\eqref{eq:LT} can be solved using Sinkhorn scaling, where
$(\rr,\cc)$-\textbf{Sinkhorn scaling} of a matrix $\mathbf{M}$ is simply the
iterated  
alternation of row normalization of $\mathbf{M}$ with respect to $\rr$ and
column normalization of $\mathbf{M}$ with respect to $\cc$. 
The limit of such iterations exist if the sums of elements in $\rr$ and $\cc$
are the same \cite{schneider1989matrix}.  

\noindent
\textbf{Sequential Cooperation.}
SCBI allows multiple rounds of teaching and requires each choice of data to
be generated based on cooperative inference, with the learner updating their beliefs between each round. 
In each round, based on the data being taught
and the learner's
initial prior on $\mathcal{H}$ as common knowledge,
the teacher and learner update their common likelihood matrix 
according to cooperative inference (using Sinkhorn scaling),
then the data selection and inference proceed based on the updated likelihood matrix. 
\begin{algorithm}[tb]
   \caption{SCBI, without assumption (iii')}
   \label{alg:scbi}
\begin{algorithmic}
    \STATE {\bfseries == Teacher's Part: ==}
    \STATE {\bfseries Input:} $\mathbf{T}\in\mathrm{Mat}_{n\times m}(\mathbb{R}^+)$,
    $\theta_0$, $h\in\mathcal{H}$, ($\widehat{\theta}=\delta_h$)
    \STATE{\bfseries Output:} Share $(d_1,d_2,\dots)$ to learner
    \FORALL{$i\ge 1$}
    \STATE sample $d_i\sim\mathscr{N}_{\text{vec}}\left(\mathbf{T}^{\langle n\theta_{i-1}\rangle}_{\phantom{==}(\_,h)}, 1\right)$.
    \STATE $\theta_i\gets\mathbf{T}^{\langle n\theta_{i-1}\rangle}_{\phantom{==}(d_i,\_)}$\ \  estimation of learner's posterior
    \ENDFOR
    \STATE {\bfseries == Learner's Part: ==}
    \STATE {\bfseries Input:} $\mathbf{L}\in\mathrm{Mat}_{n\times m}(\mathbb{R}^+)$,
    $\theta_0$, $(d_1,d_2,\dots)$
    \STATE{\bfseries Output:} $(\theta_0,\theta_1,\theta_2,\dots)$ posteriors
    \FORALL{$i\ge 1$}
    \STATE $\theta_i\gets\mathbf{L}^{\langle n\theta_{i-1}\rangle}_{\phantom{==}(d_i,\_)}$
    \ENDFOR
\end{algorithmic}
\vspace{0.2cm}
{\small Note: $\mathbf{T}^{\langle n\theta_{i-1}\rangle}$, $\mathbf{L}^{\langle n\theta_{i-1}\rangle}$ are the $\mathbf{M}^{\langle \mathbf{c}_{k-1}\rangle}$ in the text.}
\end{algorithm}



Precisely, starting from learner's prior $S_0=\theta_0\in\Delta^{m-1}$,
let the data been taught up to round $k$ be $(d_1,\dots,d_{k-1})$ and the posterior of
the learner after round $k\!-\!1$
be
$\theta_{k-1}=S_{k-1}(d_1,\dots,d_{k-1};\theta_0)\in\mathcal{P}(\mathcal{H})$, which is actually predictable for both agents (obvious for $k=1$ and
inductively correct for $k>1$ by later argument).
To teach, the teacher calculates the Sinkhorn 
scaling of $\mathbf{M}$ given the uniform row sums
$\mathbf{r}_{k-1}=\mathbf{e}_n=(1,1,\dots,1)^\top$ and column sums
$\mathbf{c}_{k-1}=n\theta_{k-1}$ (to make the sum of $\mathbf{r}_{k-1}$ equal
that of $\mathbf{c}_{k-1}$, which guarantees the existence of the limit in
Sinkhorn scaling), denoted by 
$\mathbf{M}^{\left\langle\mathbf{c}_{k-1}\right\rangle}$. 
The teacher's data selection is proportional to columns of
$\mathbf{M}^{\left\langle\mathbf{c}_{k-1}\right\rangle}$. Thus 
let $\mathbf{M}_k$ be the
column normalization of $\mathbf{M}^{\left\langle\mathbf{c}_{k-1}\right\rangle}$ by
$\mathbf{e}_m$, i.e., 
$\mathbf{M}_k=\normalize{col}{
  \mathbf{M}^{\left\langle\mathbf{c}_{k-1}\right\rangle},\mathbf{e}_m}$. 
Then the teacher defines $\mathrm{D}_k$ using
distribution $(\mathbf{M}_k)_{(\_,h)}$ on set $\mathcal{D}$
and samples $d_k\sim\mathrm{D}_k$,
then passes $d_k$ to the learner. 

On learner's side, the learner obtains the likelihood matrix $\mathbf{M}_k$ in the same way as above and
applies normal Bayesian inference with datum $d_k$ past from the teacher.
First, learner takes the prior to be the
posterior of the last round, $\theta_{k-1}=\frac{1}{n}\mathbf{c}_{k-1}$, then
multiply it by the likelihood of selecting $d_k$ --- 
the row of $\mathbf{M}_k$ corresponding to $d_k$, which results
$\mathring{\theta}_k=(\mathbf{M}_k)_{(d_k,\_)}\mathrm{diag}(\theta_{k-1})$. 
Then the posterior $\theta_k$ is obtained by row normalizing $\mathring{\theta}_k$. 
Inductively, in the next round, the learner will start with $\theta_k$ and
$\mathbf{c}_k=n\theta_k$.
The learner's calculation in round $k$ can be simulated by the teacher, so the
teacher can predict $\theta_k$, which inductively shows the assumption (teacher knows $\theta_{k-1}$) in
previous paragraph.

The calculation can be simplified. Consider
that the 
vector $\mathbf{c}_{k-1}$, being proportional to the prior, is used in
$\mathbf{M}_k$
$=\mathscr{N}_{\text{col}}(\mathbf{M}^{\left\langle\mathbf{c}_{k-1}\right\rangle},
\mathbf{e}_m)=$ $\mathbf{M}^{\left\langle\mathbf{c}_{k-1}\right\rangle}
\left(\mathrm{diag}(n\theta_{k-1})\right)^{-1}$, then
$\mathring{\theta}_k$ $=$ $\left(
  \mathbf{M}^{\left\langle\mathbf{c}_{k-1}\right\rangle} 
  \left(\mathrm{diag}(n\theta_{k-1})\right)^{-1} 
  \mathrm{diag}(\theta_{k-1})
\right)_{(d_k,\_)}$ $=$ $
\frac{1}{n}\mathbf{M}^{\left\langle\mathbf{c}_{k-1}\right\rangle}_{\phantom{===}(d_k,\_)}$.  
Furthermore, since $\mathbf{M}^{\left\langle\mathbf{c}_{k-1}\right\rangle}$ is
row normalized to $\mathbf{e}_m$, each row of it is a  probability distribution on
$\mathcal{H}$. Thus
$S_k(d_1,\dots,d_k)=\theta_k=n\mathring{\theta}_{k-1}=
\mathbf{M}^{\left\langle\mathbf{c}_{k}\right\rangle}_{\phantom{=}{(d_k,\_)}}$.
\footnote{See Supplementary Material for detailed examples. }

The simplified version of SCBI algorithm is given in
Algorithm~\ref{alg:scbi}.


\subsection{Bayesian Inference: the Control}
In order to test the performance of SCBI, we recall the classical Bayesian
inference (BI). In BI, a fixed likelihood matrix $\mathbf{M}$ is used throughout
the communication process. Bayes' rule requires $\mathbf{M}$ to be the
conditional distribution on the set of data given 
each hypothesis, thus $\mathbf{M}=\mathbf{T}=\mathbf{L}$ 
is column-stochastic as in assumption (iii').

For the teacher, given $h\in\mathcal{H}$, 
the teaching distribution is the column vector 
$P_{h}=\mathbf{M}_{(\_,h)}\in\mathcal{P}(\mathcal{D})$.
This defines random variable $\mathrm{D}_k$.
Then the teacher selects data via i.i.d. sampling 
according to $P_{h}$. The random variables
$(\mathrm{D}_k)_{k\ge 1}$ are identical.


The learner first chooses a prior $\theta_0\in\mathcal{P}(\mathcal{H})$
($\theta_0=S_0$ is part of the model, usually the uniform distribution), 
then uses Bayes' rule
with likelihood $\mathbf{M}$ to update the posterior distribution repeatedly.
Given taught datum $d$, the map from the prior $\theta$ to the posterior distribution is denoted by $B_d(\theta)=\normalize{vec}{\mathbf{M}_{(d,\_)}\mathrm{diag}(\theta),1}$.
Thus the learner's estimation over $\mathcal{H}$ given a sequential data $(d_1,
\dots, d_k)$
can be written recursively by $S_0=\theta_0$, and
$S_k(d_1,\dots,d_k)=B_{d_k}(S_{k-1}(d_1,\dots,d_{k-1}))$.
Thus, by induction, $S_k(d_1,\dots,d_k)=(B_{d_k}\circ B_{d_{k-1}}\circ\dots\circ B_{d_1})(S_0)$.


\section{Consistency}\label{sec:consistency}
We investigate the effectiveness of the estimators 
in both BI and SCBI by testing their \textit{consistency}:
setting the true hypothesis 
$h\in\mathcal{H}$,
given $(\mathrm{D}_k)$, $(S_k)$ and $\theta_0$,
we
examine the convergence (using the $L^1$-distance
on $\mathcal{P}(\mathcal{H})$) of the posterior sequence
$(\Theta_k)=(S_k(\mathrm{D}_1,\dots,\mathrm{D}_k))$ 
as sequence of random variables and 
check whether the limit is $\widehat{\theta}$ as a constant
random variable.




\subsection{BI and KL Divergence}

The consistency of BI has been well studied since Bernstein and von
Mises and Doob \cite{doob1949application}. 
In this section, we state it in our situation and derive a formula for the rate
of convergence, as a baseline for the cooperative theory. 
Derivations and proofs can be found in the Supplementary Material.

\begin{thm}\label{thm:BI_consistency}[\citep[Theorem 7.115]{miescke2008decision}]
  In BI, the sequence of posteriors $(S_k)$ is strongly consistent at
  $\widehat{\theta}=\delta_h$ for each $h\in\mathcal{H}$, with arbitrary choice
  of an interior point $\theta_0\in(\mathcal{P}(\mathcal{H}))^\circ$
  (i.e. $\theta_0(h)>0$ for all $h\in\mathcal{H}$) as prior.
\end{thm}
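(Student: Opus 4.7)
The plan is to reduce the claim to a direct application of the strong law of large numbers (SLLN) on log-likelihood ratios, exploiting that $\mathcal{H}$ is finite and that, by the positivity of $\mathbf{M}$ built into the construction, all relevant quantities are bounded away from $0$ and $\infty$. Under BI the data $\mathrm{D}_1,\mathrm{D}_2,\dots$ are i.i.d.\ with law $P_h=\mathbf{M}_{(\_,h)}$, so the explicit form of iterated Bayes gives
\begin{equation*}
\theta_k(j) \;\propto\; \theta_0(j)\prod_{i=1}^{k}\mathbf{M}_{d_i,j}.
\end{equation*}
To conclude $\theta_k\to \widehat{\theta}=\delta_h$ almost surely in $L^1$, it suffices to show $\theta_k(j)/\theta_k(h)\to 0$ a.s.\ for every $j\neq h$, since the entries are nonnegative and sum to one.

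Taking logarithms and dividing by $k$,
\begin{equation*}
\frac{1}{k}\log\frac{\theta_k(j)}{\theta_k(h)} \;=\; \frac{1}{k}\log\frac{\theta_0(j)}{\theta_0(h)} \;+\; \frac{1}{k}\sum_{i=1}^{k}\log\frac{\mathbf{M}_{d_i,j}}{\mathbf{M}_{d_i,h}}.
\end{equation*}
The first term vanishes because $\theta_0$ is an interior point, so $\theta_0(h),\theta_0(j)>0$. The summands in the second term are i.i.d.\ under $P_h$ with finite mean, so SLLN yields almost sure convergence to $\mathbb{E}_{d\sim P_h}\!\left[\log\mathbf{M}_{d,j}/\mathbf{M}_{d,h}\right] = -D_{\mathrm{KL}}(P_h\,\|\,P_j)$. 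Thus $\log(\theta_k(j)/\theta_k(h))\to-\infty$ a.s.\ provided $D_{\mathrm{KL}}(P_h\,\|\,P_j)>0$, and the finiteness of $\mathcal{H}$ promotes these $m-1$ almost sure statements to joint almost sure $L^1$ convergence.

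The one step worth a careful check is the strict positivity of $D_{\mathrm{KL}}(P_h\,\|\,P_j)$ for $j\neq h$. By Gibbs' inequality, this holds iff $P_h\neq P_j$ as distributions on $\mathcal{D}$. Under the column-stochastic normalization (iii'), two columns of $\mathbf{M}$ are proportional if and only if they are equal, so assumption (ii) (distinguishability of hypotheses, i.e.\ no two columns proportional) forces $P_h\neq P_j$. This is the only place the structural assumptions enter beyond positivity; everything else is bookkeeping. There are no martingale or measurability subtleties because $\mathcal{H}$ is finite and the likelihood is bounded and strictly positive, so the argument should amount to a short derivation rather than invoking \cite{miescke2008decision} as a black box, and it sets up the KL-based baseline rate used to compare against SCBI in the next section.
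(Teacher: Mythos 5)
Your proposal is correct and follows essentially the same route as the paper's own proof: the explicit product form of the posterior, the log-ratio decomposition, the strong law of large numbers giving the limit $-\mathrm{KL}(P_h\,\|\,P_j)$, and strict positivity via column distinguishability. The only cosmetic difference is that the paper applies the SLLN to the empirical frequencies $\alpha_k(d)/k$ of each datum rather than directly to the i.i.d.\ log-likelihood-ratio summands, which is an equivalent bookkeeping choice.
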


\begin{remark}\label{rmk:consistency}
  For a fixed true distribution $\widehat{\theta}$, \textit{strong consistency} of $(S_k)_{k\in\mathbb{N}}$ is defined to be: the
  sequence of posteriors $\Theta_k$  
  given by the estimator $S_k$, as a sequence of random variables, converges to $\widehat{\theta}$ (as a constant random variable)
  almost surely according to 
  random variables $(\mathrm{D}_k)_{k\in\mathbb{N}}$ that the 
  teacher samples from.
  If the
  convergence is in probability, the sequence of estimators is said to be
  \textit{consistent}. 
\end{remark}

\begin{remark}
  Theorem~\ref{thm:BI_consistency} also assumes that hypotheses are
  distinguishable (Section~\ref{sec:scbi_construction}). 
  In a general theory of statistical models,
  $\widehat{\theta}$ is not necessarily $\delta_h$ for some $h\in \mathcal{H}$.
  However, in BI, it is critical to have $\widehat{\theta}=\delta_h$, 
  since BI with a general $\widehat{\theta}\in\mathcal{P}(\mathcal{H})$ is almost 
  never consistent or strongly consistent.
\end{remark}

Consistency---independent of the choice of prior $\theta_0$
interior of $\mathcal{P}(\mathcal{H})$---guarantees that BI is always
effective.

\noindent
\textbf{Rate of Convergence.}
After effectiveness, we provide the efficiency of BI in terms of asymptotic rate of convergence.

\begin{thm}
  \label{thm:roc_bi}
  In BI, with $\widehat{\theta}=\delta_h$ for some $h\in\mathcal{H}$,
  let $\Theta_k(h)(\mathrm{D}_1,\dots,\mathrm{D}_k):=S_k(h|\mathrm{D}_1,\dots,\mathrm{D}_k)$ be the $h$-component of
  posterior given $\mathrm{D}_1,\dots,\mathrm{D}_k$ as random variables valued in $\mathcal{D}$. Then
  $\dfrac{1}{k}\log\left(\dfrac{\Theta_{k}(h)}{1-\Theta_{k}(h)}\right)$ converges to a constant
  $\mathrm{min}_{h'\ne h}\left\{\mathrm{KL}(\Mcol{M}{h},\Mcol{M}{h'})\right\}$
  almost surely.
\end{thm}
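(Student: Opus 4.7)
The plan is to write the log-odds of $\Theta_k(h)$ in closed form via Bayes' rule, then apply the strong law of large numbers (SLLN) to each of the $m$ log-likelihood averages and control the resulting sum with a log-sum-exp sandwich bound. Concretely, since the data $\mathrm{D}_1,\dots,\mathrm{D}_k$ are i.i.d.\ with law $P_h=\mathbf{M}_{(\_,h)}$, an unrolling of the Bayes recursion gives
\[
\Theta_k(h')\;=\;\frac{\theta_0(h')\prod_{i=1}^k\mathbf{M}_{(\mathrm{D}_i,h')}}{\sum_{h''\in\mathcal{H}}\theta_0(h'')\prod_{i=1}^k\mathbf{M}_{(\mathrm{D}_i,h'')}},
\]
so that
\[
\log\!\frac{\Theta_k(h)}{1-\Theta_k(h)}\;=\;\log\theta_0(h)+\sum_{i=1}^k\log\mathbf{M}_{(\mathrm{D}_i,h)}-\log\!\sum_{h'\ne h}\theta_0(h')\prod_{i=1}^k\mathbf{M}_{(\mathrm{D}_i,h')}.
\]
The first term vanishes after dividing by $k$, so the task reduces to analyzing the remaining two.

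Next, I would apply SLLN to each average $\bar Z_k^{h'}:=\tfrac{1}{k}\sum_{i=1}^k\log\mathbf{M}_{(\mathrm{D}_i,h')}$ separately for each $h'\in\mathcal{H}$. Because entries of $\mathbf{M}$ are strictly positive, these log-likelihoods are bounded and integrable, so almost surely
\[
\bar Z_k^{h'}\;\longrightarrow\;\mathbb{E}_{P_h}\!\left[\log\mathbf{M}_{(\mathrm{D},h')}\right]\;=\;\sum_d\mathbf{M}_{(d,h)}\log\mathbf{M}_{(d,h')}\;=:\;\mu_{h'}.
\]
Taking a finite intersection of the probability-one events gives simultaneous a.s.\ convergence for all $h'$, which is crucial for the next step.

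The main (but standard) step is handling $\tfrac{1}{k}\log\sum_{h'\ne h}\theta_0(h')\exp(k\bar Z_k^{h'})$. Using the elementary log-sum-exp inequalities
\[
\max_{h'\ne h}\!\left(\tfrac{\log\theta_0(h')}{k}+\bar Z_k^{h'}\right)\;\le\;\tfrac{1}{k}\log\!\sum_{h'\ne h}\theta_0(h')e^{k\bar Z_k^{h'}}\;\le\;\max_{h'\ne h}\!\left(\tfrac{\log\theta_0(h')}{k}+\bar Z_k^{h'}\right)+\tfrac{\log(m-1)}{k},
\]
and noting $\theta_0(h')>0$ (interior prior), both bounds tend a.s.\ to $\max_{h'\ne h}\mu_{h'}$. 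Combining this with the SLLN limit for the $h$-term yields
\[
\tfrac{1}{k}\log\!\tfrac{\Theta_k(h)}{1-\Theta_k(h)}\;\xrightarrow{\mathrm{a.s.}}\;\mu_h-\max_{h'\ne h}\mu_{h'}\;=\;\min_{h'\ne h}(\mu_h-\mu_{h'})\;=\;\min_{h'\ne h}\mathrm{KL}(\mathbf{M}_{(\_,h)},\mathbf{M}_{(\_,h')}),
\]
which is the claimed rate. The only delicate point is the log-sum-exp sandwich, which is routine once one observes that the finiteness of $\mathcal{H}$ makes the additive slack $\tfrac{\log(m-1)}{k}$ vanish uniformly; everything else is a direct application of SLLN together with the distinguishability assumption, which guarantees $\min_{h'\ne h}\mathrm{KL}(\mathbf{M}_{(\_,h)},\mathbf{M}_{(\_,h')})>0$.
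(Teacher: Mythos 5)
Your proposal is correct and follows essentially the same route as the paper: unroll the Bayes recursion into a product of likelihoods, apply the strong law of large numbers to identify the almost-sure limits of the per-hypothesis log-likelihood averages, and sandwich the aggregated mass $1-\Theta_k(h)$ between its largest term and $(m-1)$ times that term so the limit becomes $\mu_h-\max_{h'\ne h}\mu_{h'}=\min_{h'\ne h}\mathrm{KL}(\mathbf{M}_{(\_,h)},\mathbf{M}_{(\_,h')})$. The only cosmetic differences are that the paper applies the SLLN to the empirical frequencies $\alpha_k(d)/k$ of each datum rather than directly to the log-likelihood sums, and phrases the sandwich in terms of the slowest-decaying component $\Theta_k(\eta)$ rather than your log-sum-exp bound (which handles ties in the minimizing KL a bit more cleanly).
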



\begin{remark}
  We call
  $\mathrm{min}_{h'\ne
    h}\left\{\mathrm{KL}(\Mcol{M}{h},\Mcol{M}{h'})\right\}$
  the \textit{asymptotic rate of convergence} (RoC) of BI,
  denoted by $\mathfrak{R}^{\text{b}}(\mathbf{M};h)$.
\end{remark}

\subsection{SCBI as a Markov Chain}
From the proof of Theorem~\ref{thm:BI_consistency}, the pivotal property is that the variables
$\mathrm{D}_1,\mathrm{D}_2,\dots$ are commutative in posteriors (the variables can occur in any
order without affecting the posterior) thanks to commutativity of
multiplication. 
However, in SCBI, the commutativity does not hold, since the likelihood matrix depends
on previous outcome. Thus the method used in BI analysis no longer works here.

Because the likelihood matrix
$\mathbf{M}_k=\mathbf{M}^{\left\langle\mathbf{c}_{k-1}\right\rangle}$ depends
on the predecessive state only, the process is in fact Markov, we may
analyze the model as a Markov chain on the continuous state space
$\mathcal{P}(\mathcal{H})$.

To describe this process, let $\mathcal{P}(\mathcal{H})=\Delta^{m-1}$ be the
space of states, and let
$h\in\mathcal{H}$ be the true hypothesis to teach
($\widehat{\theta}=\delta_h$), let learner's prior be
$S_0=\theta_0\in\mathcal{P}(\mathcal{H})$, or say, the distribution of learner's
initial state 
is $\mu_0=\delta_{\theta_0}\in\mathcal{P}(\mathcal{P}(\mathcal{H}))$.

\noindent
\textbf{The operator $\Psi$.} 
In the Markov chain, in each round, the transition operator maps the prior as a probability
distribution on state space
$\mathcal{P}(\mathcal{H})=\Delta^{m-1}$ to the posterior as another, i.e.,
$\Psi(h):\mathcal{P(\mathcal{P}(\mathcal{H}))}\rightarrow\mathcal{P(\mathcal{P}(\mathcal{H}))}$.

To make the formal definition of $\Psi(h)$ simpler, we need to define some maps.
For any $d\in\mathcal{D}$, 
let $T_d:\Delta^{m-1}\rightarrow\Delta^{m-1}$ be the map bringing the learner's
prior to posterior when data $d$ is chosen by the teacher, that is, $T_d$ sends
each normalized vector $\theta$ to $T_d(\theta)=\mathbf{M}^{\left\langle
    n\theta\right\rangle}_{\phantom{~~}(d,\_)}$ according to SCBI. 
Each $T_d$ is a bijection based on the uniqueness of Sinkhorn scaling limits of
$\mathbf{M}$, shown in \cite{hershkowitz1988classifications}. Further, the map
$T_d$ is continuous on $\Delta^{m-1}$ and smooth in its interior according to
\cite{wang2019mathematical}.
Continuity and smoothness of $T_d$ make it natural to induce a push-forward
$T_{d\ast}:\mathcal{P}(\Delta^{m-1})\rightarrow\mathcal{P}(\Delta^{m-1})$ on
Borel measures. Explicitly, $(T_{d\ast}(\mu))(E)=\mu(T^{-1}_d(E))$ for each Borel
measure $\mu\in\mathcal{P}(\Delta^{m-1})$ and each Borel measurable set
$E\subseteq\Delta^{m-1}$.
Let $\tau:\mathcal{P}(\mathcal{H})\rightarrow\mathcal{P}(\mathcal{D})$ be the
map 
of teacher's adjusting sample distribution based on the learner's prior, that is, given a 
learner's prior $\theta\in\Delta^{m-1}$, by definition of SCBI, the
distribution of the teacher is adjusted to
$\tau(\theta)=\frac{\mathbf{M}^{\left\langle
      n\theta\right\rangle}_{\phantom{~~}(\_,h)}}{n\theta(h)}=
(n\theta(h))^{-1}(T_1(\theta)(h),\dots,T_n(\theta)(h))$. Each component $d$
of $\tau$ is denoted by $\tau_d$. We can use $\tau$ 
only for $\theta_0=\delta_{h}$ in which case teacher can trace learner's state.
Now we can define $\Psi(h)$ formally.
\begin{definition}
  \label{def:the_operator}
  Given a fixed hypothesis $h\in\mathcal{H}$, or say
  $\delta_h\in\mathcal{P}(\mathcal{H})$, the 
  operator
  $\Psi(h):\mathcal{P}(\Delta^{m-1})\rightarrow\mathcal{P}(\Delta^{m-1})$
  translating a prior as a Borel measure
  $\mu$ 
  to the posterior distribution $\Psi(h)(\mu)$
  according to one round of SCBI
  is given below, for any Borel measurable set $E\subset \Delta^{m-1}$.
  \begin{equation}\small
    \label{eq:the_operator}
    \left(\Psi(h)(\mu)\right)(E):=\displaystyle\int\limits_E\sum_{d\in\mathcal{D}}
    \tau_d(T_d^{-1}(\theta))\mathrm{d}\left(T_{d\ast}(\mu)\right)(\theta).
  \vspace{-0.2cm}
  \end{equation}
\end{definition}
\vspace{-0.1cm}
In our case, we start with a distribution $\delta_{\theta}$ where
$\theta\in\mathcal{P}(\mathcal{H})$ is the prior of the learner on the set of
hypotheses. In each round of inference, there are $n$ different
possibilities according to the data taught.
Thus in any
finite round $k$, the distribution of the posterior is the sum of at most $n^k$
atoms (actually, we can prove $n^k$ is exact). Thus in the following
discussions, we assume that $\mu$ is atomic. The $\Psi$ action on an atomic
distribution is determined by
that of an atom:
\vspace{-0.1cm}
\begin{equation}\small
  \label{eq:op_psi_delta}
  \Psi(h)(\delta_{\theta}) = 
  \sum_{i=1}^n\dfrac{\mathbf{M}^{\left\langle n\theta\right\rangle}_{\phantom{~~}(i,h)}}
  {n\theta(h)}\delta_{\left(\mathbf{M}^{\left\langle n\theta\right\rangle}_{\phantom{~~}(i,\_)}\right)}.
\end{equation}
Moreover, since the SCBI behavior depends only on the prior (with fixed
$\mathbf{M}$ and $h$) as a random variable, the same operator
$\Psi(h)$ applies to every round in 
SCBI. Thus we can conclude that the following proposition is valid:

\begin{prop}\label{prop:markov_property}
  Given 
  $h\in\mathcal{H}$, let $\widehat{\theta}=\delta_h$,
  the sequence of estimators $(S_k)_{k\in\mathbb{N}}$ in SCBI forms a time-homogeneous Markov
  chain on state space $\mathcal{P}(\mathcal{H})$ with transition operator
  $\Psi(h)$ characterized by Eq.~\eqref{eq:the_operator} and
  Eq.~\eqref{eq:op_psi_delta}.
\end{prop}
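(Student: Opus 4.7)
The plan is to verify three things in sequence: that conditioned on the current state $\Theta_k$ the next state $\Theta_{k+1}$ is independent of the earlier history; that the one-step rule carries no dependence on the round index; and that the resulting kernel agrees with the explicit formulas in Eq.~\eqref{eq:the_operator} and Eq.~\eqref{eq:op_psi_delta}.

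For the Markov property, I would observe that by construction of Algorithm~\ref{alg:scbi} the update splits into two deterministic ingredients driven by $\Theta_k$ alone. Given $\Theta_k=\theta$, the teacher samples $\mathrm{D}_{k+1}$ from the distribution $\tau(\theta)\in\mathcal{P}(\mathcal{D})$, and then the learner sets $\Theta_{k+1}=T_{\mathrm{D}_{k+1}}(\theta)$. Both $\tau$ and $\{T_d\}_{d\in\mathcal{D}}$ depend on the history only through $\theta$ and on the fixed $\mathbf{M}$ and $h$; therefore the conditional law of $\Theta_{k+1}$ given $\sigma(\mathrm{D}_1,\dots,\mathrm{D}_k)$ coincides with its conditional law given $\Theta_k$ alone. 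Because the same $\tau$ and $\{T_d\}$ are applied in every round, the chain is time-homogeneous.

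It then remains to identify the transition operator. On a Dirac prior $\delta_\theta$ the recipe above immediately gives $\Psi(h)(\delta_\theta)=\sum_{d\in\mathcal{D}}\tau_d(\theta)\,\delta_{T_d(\theta)}$, which, after unpacking $\tau_d(\theta)=\mathbf{M}^{\langle n\theta\rangle}_{(d,h)}/(n\theta(h))$ and $T_d(\theta)=\mathbf{M}^{\langle n\theta\rangle}_{(d,\_)}$, is exactly Eq.~\eqref{eq:op_psi_delta}. For a general Borel measure $\mu$, I would integrate the atomic formula against $\mu$:
\begin{equation*}
(\Psi(h)(\mu))(E) \;=\; \int_{\Delta^{m-1}}\sum_{d\in\mathcal{D}}\tau_d(\theta)\,\mathbf{1}_E(T_d(\theta))\,d\mu(\theta),
\end{equation*}
and then perform a change of variables $\theta'=T_d(\theta)$ summand by summand to rewrite this as $\int_E\sum_d\tau_d(T_d^{-1}(\theta))\,d(T_{d\ast}\mu)(\theta)$, matching Eq.~\eqref{eq:the_operator}.

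The main technical obstacle is the measurability bookkeeping needed to make that change of variables rigorous. One needs $T_d$ to be a Borel measurable bijection of $\Delta^{m-1}$ so that $T_{d\ast}\mu$ is well defined, $T_d^{-1}$ to be Borel so that $\tau_d\circ T_d^{-1}$ is a legitimate integrand, and Fubini-style exchange of the finite sum over $\mathcal{D}$ with the integral. All three follow cleanly from the continuity of each $T_d$ on $\Delta^{m-1}$ and its smoothness on the interior, established in \cite{hershkowitz1988classifications} and \cite{wang2019mathematical}, together with the continuity of $\tau_d$ as a rational expression in Sinkhorn-scaled entries. Once these inputs are in hand, the argument reduces to routine verification.
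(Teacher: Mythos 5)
Your proposal is correct and follows essentially the same route as the paper, which justifies the proposition directly from the construction: the one-step update $(\tau,\{T_d\})$ is a function of the current state $\theta$ alone (with $\mathbf{M}$ and $h$ fixed), the same rule applies at every round, and the induced kernel on Dirac masses is Eq.~\eqref{eq:op_psi_delta}, extending to general Borel measures via the pushforwards $T_{d\ast}$ as in Eq.~\eqref{eq:the_operator}. The measurability points you flag are handled in the paper exactly as you suggest, by the continuity and bijectivity of each $T_d$ from \cite{hershkowitz1988classifications,wang2019mathematical}.
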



Thanks to the fact that the SCBI is a time homogeneous Markov process, we can
further show the consistency.




\begin{thm}[Consistency]\label{thm:consistency_SCBI}
  In SCBI, let $\mathbf{M}$ be a positive matrix. If the teacher is teaching
  one hypothesis $h$ (i.e.,
  $\widehat{\theta}=\delta_h\in\mathcal{P}(\mathcal{H})$), and the prior
  distribution 
  $\mu_0\in\mathcal{P}(\Delta^{m-1})$ satisfies
  $\mu_0=\delta_{\theta_0}$ with $\theta_0(h)>0$, then the estimator sequence $(S_k)$ is
  consistent, for each $h\in\mathcal{H}$, i.e., 
  the posterior random variables $(\Theta_k)_{k\in\mathbb{N}}$ 
  converge to the constant random variable $\widehat{\theta}$ in probability.
\end{thm}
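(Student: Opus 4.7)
The plan is to leverage the Markov structure from Proposition~\ref{prop:markov_property} by tracking two scalar processes built from $(\Theta_k)$: the true-hypothesis coordinate $X_k := \Theta_k(h)$ and, for each competitor $h' \neq h$, the log-odds $L_k^{h'} := \log(\Theta_k(h')/\Theta_k(h))$. Write $\widetilde{\mathbf{M}}(\theta) := \mathbf{M}^{\langle n\theta \rangle}$ and let $q(\theta), r^{h'}(\theta) \in \Delta^{n-1}$ be the $\ell^1$-normalized $h$- and $h'$-columns of $\widetilde{\mathbf{M}}(\theta)$; the SCBI rule says $d \sim q(\Theta_k)$ and $\Theta_{k+1}$ equals the $d$-th row of $\widetilde{\mathbf{M}}(\Theta_k)$.

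A one-step Cauchy--Schwarz inequality on the $h$-column gives $\mathbb{E}[X_{k+1}-X_k \mid \Theta_k=\theta]=\theta(h)\bigl(n\|q(\theta)\|_2^2-1\bigr)\geq 0$, so $(X_k)$ is a bounded submartingale and $X_k \to X_\infty \in [0,1]$ almost surely by Doob's theorem. The parallel calculation for the log-odds yields $\mathbb{E}[L_{k+1}^{h'}-L_k^{h'} \mid \Theta_k=\theta]=-\mathrm{KL}(q(\theta) \,\|\, r^{h'}(\theta)) \leq 0$. A Jensen bound shows $-\log X_k$ is a supermartingale with $\sup_k \mathbb{E}[-\log X_k] \leq -\log \theta_0(h) < \infty$, and since $L_k^{h'} \leq -\log X_k$ pointwise, $\sup_k \mathbb{E}[(L_k^{h'})^+] < \infty$; by the supermartingale convergence theorem, $L_k^{h'}$ converges almost surely to some $L_\infty^{h'} \in [-\infty, \infty)$.

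The target is to show $L_\infty^{h'} = -\infty$ almost surely for every $h' \neq h$, which forces $\Theta_k(h') \to 0$ and hence $\Theta_k \to \delta_h$ in $L^1$ (hence in probability). First, on $\{X_\infty = 0\}$ the identity $\sum_{h''}\Theta_k(h'') = 1$ would force $\Theta_k(h')/\Theta_k(h) \to +\infty$ for some $h'$, violating $L_\infty^{h'} < +\infty$; so $X_\infty > 0$ almost surely. Second, on $\{L_\infty^{h'} > -\infty\} \cap \{X_\infty > 0\}$, the Doob decomposition $L_k^{h'}=L_0^{h'}+M_k+A_k$ combined with finite convergence of $L_k^{h'}$ forces the compensator $A_k=-\sum_{j<k}\mathrm{KL}(q(\Theta_j)\|r^{h'}(\Theta_j))$ to a finite limit, so the KL terms are summable and vanish at any subsequential limit $\theta^\star$ of $(\Theta_k)$. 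But $\mathrm{KL}(q(\theta^\star) \| r^{h'}(\theta^\star)) = 0$ says the $h$- and $h'$-columns of $\widetilde{\mathbf{M}}(\theta^\star)$ are proportional; since Sinkhorn scaling has the form $D_1(\theta)\mathbf{M} D_2(\theta)$ with positive diagonal factors, it preserves pairwise column proportionality, so this would force $\mathbf{M}_{(\_,h)} \propto \mathbf{M}_{(\_,h')}$, contradicting assumption (ii).

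The main technical obstacle is the localization underlying the last step. Although $X_\infty > 0$ rules out the worst boundary scenario, the chain could still approach faces of $\Delta^{m-1}$ in the remaining coordinates $h'' \neq h, h'$, where the Sinkhorn scaling becomes singular and continuity of $q(\cdot), r^{h'}(\cdot)$ at $\theta^\star$ is not automatic. Making the subsequential-limit argument rigorous requires leveraging the smoothness of Sinkhorn scaling in the interior of the simplex from \cite{wang2019mathematical} together with a careful joint accounting of the drift across all competitors, so that the KL-vanishing condition can be evaluated at any accumulation point and yield the desired contradiction.
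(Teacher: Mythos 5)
Your architecture overlaps substantially with the paper's: the one-step Cauchy--Schwarz computation showing $\mathbb{E}[\Theta_{k+1}(h)\mid\Theta_k]\ge\Theta_k(h)$ is exactly the paper's Lemma on expectation monotonicity, the ratio $\Theta_k(h')/\Theta_k(h)$ being drift-free is the paper's key invariance lemma, the identity $\mathbb{E}[L_{k+1}^{h'}-L_k^{h'}\mid\Theta_k=\theta]=-\mathrm{KL}(q(\theta)\,\|\,r^{h'}(\theta))$ is the computation the paper uses for its rate-of-convergence theorem, and the cross-ratio/column-proportionality endgame is the same. The difference is that you argue pathwise via Doob convergence while the paper argues in expectation and measure. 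However, there is a genuine gap, and it sits precisely where the paper does its main technical work: you must show that the KL drift cannot vanish in the limit on the event $\{L_\infty^{h'}>-\infty\}\cap\{X_\infty>0\}$, and you explicitly defer this. The paper closes the analogous gap with its Lemma on the ``middle region'' $\Delta_{[a,b]}=\{a\le\theta(h)\le b\}$: it establishes a uniform positive lower bound on a \emph{two-step} drift of $\theta(h)$ over this compact set (a one-step bound is insufficient because the $h$-column of the scaled matrix can be constant at interior points), where the impossibility of zero drift is derived from a completing-the-square identity on the row sums of the doubly-scaled matrices together with Sinkhorn invariance of cross-ratios. Without an analogue of this uniform bound --- or at least a proof that $\theta\mapsto\mathrm{KL}(q(\theta)\|r^{h'}(\theta))$ extends continuously and positively to the relevant compact set including faces where coordinates $h''\ne h,h'$ vanish --- your subsequential-limit contradiction does not go through, so the proposal as written does not prove the theorem. (A viable repair: show $q(\theta)$ and $r^{h'}(\theta)$ lie in a fixed compact subset of the open simplex $\Delta^{n-1}$ uniformly in $\theta$, using that entry ratios within a column of $\mathbf{M}^{\langle n\theta\rangle}$ are controlled by cross-ratios of the positive matrix $\mathbf{M}$; then the paper's Lemma expressing KL in terms of the fixed, non-trivial cross-ratios gives a uniform positive lower bound on the drift.)

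Two smaller technical missteps also need attention. First, the a.s. convergence of $L_k^{h'}$ to a limit in $[-\infty,\infty)$ does not follow from the supermartingale property plus $\sup_k\mathbb{E}[(L_k^{h'})^+]<\infty$; the standard convergence theorem requires control of the negative part (or $L^1$-boundedness), which fails here when $\mathbb{E}[L_k^{h'}]\to-\infty$. The correct route is to observe that $e^{L_k^{h'}}=\Theta_k(h')/\Theta_k(h)$ is a nonnegative martingale (this is the content of the paper's invariance lemma, upgraded to a conditional statement), which converges a.s. to a finite limit; this also immediately gives your claim that $X_\infty>0$ a.s. Second, in the Doob decomposition $L_k^{h'}=L_0^{h'}+M_k+A_k$, finite convergence of $L_k^{h'}$ does \emph{not} by itself force $A_k$ to converge finitely, since $M_k$ could drift to $+\infty$ on that event; ruling this out requires the bounded-increment martingale dichotomy, which in turn needs the (true, but unproven in your sketch) fact that the one-step log-odds increments are uniformly bounded by extremal log-cross-ratios of $\mathbf{M}$.
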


\begin{remark}\label{rmk:consistency_SCBI}
  The assumption in Theorem~\ref{thm:consistency_SCBI} that 
  $\theta_0(h)>0$ is necessary in any type of
  Bayesian inference 
  since it is impossible to get the correct answer in posterior by Bayes' rule,
  if it is excluded in the prior at the beginning.
  In practice, the prior distribution is usually chosen to be
  $\mu_0=\delta_{\mathbf{u}}$ with the uniform distribution vector in
  $\mathcal{P}(\mathcal{H})$, i.e.,
  $\mathbf{u}=\frac{1}{m}(1,\dots,1)^\top\in\Delta^{m-1}$. 
\end{remark}

\noindent
\textbf{Rate of Convergence.}
Thanks to consistency, we can calculate the
asymptotic rate of convergence for SCBI.

\begin{thm}
  \label{thm:roc_scbi}
  With matrix $\mathbf{M}$, hypothesis $h\in\mathcal{H}$, and a 
  prior $\mu_0=\delta_{\theta_0}\in\mathcal{P}(\Delta^{m-1})$ same as in Theorem.~\ref{thm:consistency_SCBI},
  let $\theta_k$ denote a sample value of the posterior
  $\Theta_k$ after $k$ rounds of SCBI, then 
  \begin{equation}\small
      \label{sm:eq:roc_SCBI_converge}
      \lim_{k\rightarrow\infty}\mathbb{E}_{\mu_k}\left[
        \dfrac{1}{k}\log\left(\dfrac{\theta_{k}(h)}{1-\theta_{k}(h)}\right)
      \right]=\mathfrak{R}^{\mathrm{s}}(\mathbf{M};h)
  \end{equation}
  where $\mathfrak{R}^{\mathrm{s}}(\mathbf{M};h):=\min_{h\ne
    h'}\mathrm{KL}\left(\mathbf{M}^\sharp_{(\_,h)},\mathbf{M}^\sharp_{(\_,h')}\right)$ 
  with
  $\mathbf{M}^\sharp_{\phantom{|}}=\mathscr{N}_{\text{col}}(\mathrm{diag}(\mathbf{M}_{(\_,h)})^{-1}\mathbf{M})$.
  Thus we call 
  $\mathfrak{R}^{\mathrm{s}}(\mathbf{M};h)$
  the asymptotic rate of convergence (RoC) of SCBI.
\end{thm}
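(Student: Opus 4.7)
The plan is to mirror the structure of the proof of Theorem~\ref{thm:roc_bi} but work with the time-varying likelihood matrices $\mathbf{M}_k$ of SCBI. First I would unwind the Bayesian update at each round to write, for every $h' \neq h$,
\begin{equation*}
\log\frac{\Theta_k(h)}{\Theta_k(h')} = \log\frac{\theta_0(h)}{\theta_0(h')} + \sum_{i=1}^k \log\frac{L_i(\mathrm{D}_i \mid h)}{L_i(\mathrm{D}_i \mid h')},
\end{equation*}
where $L_i(d \mid j) := (\mathbf{M}_i)_{(d,j)}$ is the column-normalized likelihood in round $i$. Consistency (Theorem~\ref{thm:consistency_SCBI}) gives $\Theta_k \to \delta_h$, and a standard log-sum-exp sandwich
\begin{equation*}
\min_{h' \neq h}\log\tfrac{\Theta_k(h)}{\Theta_k(h')} - \log(m-1) \leq \log\tfrac{\Theta_k(h)}{1-\Theta_k(h)} \leq \min_{h' \neq h}\log\tfrac{\Theta_k(h)}{\Theta_k(h')}
\end{equation*}
reduces the theorem, after dividing by $k$, to showing $\frac{1}{k}\mathbb{E}[\log(\Theta_k(h)/\Theta_k(h'))] \to \mathrm{KL}(\mathbf{M}^\sharp_{(\_,h)},\mathbf{M}^\sharp_{(\_,h')})$ for each fixed $h' \neq h$.

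The second step is to identify $\lim_{k\to\infty} \mathbf{M}_k$. Parameterizing $\theta = (1-\epsilon)\delta_h + \epsilon\theta'$ with $\theta'(h)=0$ and expanding the defining Sinkhorn equations $\sum_j x_d \mathbf{M}_{dj} y_j = 1$ and $\sum_d x_d \mathbf{M}_{dj} y_j = n\theta(j)$ to leading order in $\epsilon$, one finds $x_d \to 1/\mathbf{M}_{(d,h)}$, $y_h \to 1$, and $y_j = O(\epsilon)$ for $j \neq h$. Feeding this back into the column-normalization yields $(\mathbf{M}_k)_{(d,j)} \to \mathbf{M}^\sharp_{(d,j)}$ entrywise; in particular $\mathbf{M}^\sharp_{(\_,h)}$ is the uniform distribution on $\mathcal{D}$, which is the limiting teacher sampling law.

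The third step is a Cesàro/law-of-large-numbers argument for the non-stationary process. Since $\mathrm{D}_i \sim (\mathbf{M}_i)_{(\_,h)}$ conditionally on $\Theta_{i-1}$,
\begin{equation*}
\mathbb{E}\!\left[\log\tfrac{L_i(\mathrm{D}_i \mid h)}{L_i(\mathrm{D}_i \mid h')} \,\Big|\, \Theta_{i-1}\right] = \mathrm{KL}\!\left((\mathbf{M}_i)_{(\_,h)},(\mathbf{M}_i)_{(\_,h')}\right).
\end{equation*}
Continuity of KL together with $\mathbf{M}_i \to \mathbf{M}^\sharp$ in probability sends this conditional expectation to $\mathrm{KL}(\mathbf{M}^\sharp_{(\_,h)},\mathbf{M}^\sharp_{(\_,h')})$; averaging by Cesàro and taking outer expectation yields the convergence of $\frac{1}{k}\mathbb{E}[\log(\Theta_k(h)/\Theta_k(h'))]$.

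The hard part will be the uniform integrability needed to exchange limit and expectation in the third step: the posterior $\Theta_k$ is only controlled in probability by consistency, whereas we need $L^1$-control of $\log(L_i(\mathrm{D}_i|h)/L_i(\mathrm{D}_i|h'))$ uniformly in $i$. Positivity of $\mathbf{M}$ together with the Sinkhorn continuity results of \cite{wang2019mathematical} should give a uniform bound on the entries of the column-normalized $\mathbf{M}_k$ on any compact neighborhood of $\delta_h$, while outside such a neighborhood the posterior's visits contribute negligibly by consistency. Combining these ingredients with the three steps above gives Eq.~\eqref{sm:eq:roc_SCBI_converge}.
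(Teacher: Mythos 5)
Your proposal follows essentially the same route as the paper's proof: your telescoping identity plus the conditional expectation of the log-likelihood ratio under the teacher's sampling law is exactly the paper's one-step recursion $\mathbb{E}_{\mu_{k+1}}[\log(\theta(h')/\theta(h))]=\mathbb{E}_{\mu_k}[\log(\theta(h')/\theta(h))]+W_k^{h'}$ derived via the operator $\Psi(h)$, your identification of $\lim_k\mathbf{M}_k=\mathbf{M}^\sharp$ and the Ces\`aro averaging match the paper's argument, and your split into a neighborhood of $\delta_h$ versus its complement is the paper's $W_>$/$W_<$ decomposition. The one refinement to note on the step you flag as hard: a bound on the integrand only on a compact neighborhood of $\delta_h$ does not by itself make the complement's contribution negligible (small measure times a possibly unbounded integrand); the paper supplies a \emph{global} bound over all of $\Delta^{m-1}$ via the cross-ratio formula $\mathrm{KL}(\mathbf{a},\mathbf{b})=\log\left(\sum_i a_iC_i\right)-\sum_i a_i\log C_i$, where the $C_i$ are Sinkhorn invariants of the positive matrix $\mathbf{M}$ and hence fixed, which is precisely the uniform-integrability ingredient your sketch needs.
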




\section{Sample Efficiency}
\label{sec:sample_efficiency}

In this section, we present some empirical results
comparing the sample efficiency of SCBI and BI.

\subsection{Asymptotic RoC Comparison} \label{sec:asy_roc_comp}

We first compare the asymptotic rate of convergence ($\mathfrak{R}^{\text{b}}$
for BI and $\mathfrak{R}^{\text{s}}$ for SCBI, see Theorems~\ref{thm:roc_bi} and
\ref{thm:roc_scbi}). 
The matrix $\mathbf{M}$ is sampled through 
$m$ i.i.d. uniform distributions on $\Delta^{n-1}$, one for each column.

For each column-normalized matrix $\mathbf{M}$,
we compute two variables to compare BI with SCBI: the probability 
$\mathfrak{P}:=\mathrm{Pr}\left(\frac{1}{m}\sum_{h\in\mathcal{H}}\mathfrak{R}^{\text{s}}(\mathbf{M};h)\ge 
  \frac{1}{m}\sum_{h\in\mathcal{H}}\mathfrak{R}^{\text{b}}(\mathbf{M};h)\right)$
and the expected value of averaged difference
$\mathfrak{E}:=\mathbb{E}\left[ 
    \frac{1}{m}\sum_{h\in\mathcal{H}}\mathfrak{R}^{\text{s}}(\mathbf{M};h)-
    \frac{1}{m}\sum_{h\in\mathcal{H}}\mathfrak{R}^{\text{b}}(\mathbf{M};h)\right]$.



\noindent
\textbf{Two-column Cases.}
Consider the case where $\mathbf{M}$ is of shape $n\times2$
with the two columns sampled from $\Delta^{n-1}$ uniformly 
and independently, we simulated for $n=2,3,\dots,50$ with a size-$10^{10}$
Monte Carlo method for each $n$ to calculate $\mathfrak{P}$ and $\mathfrak{E}$.
The result is shown in Fig.~\ref{fig:roc}(A)(B).

We can reduce the calculation of $\mathfrak{E}$ to a numerical 
integral $\mathfrak{E} = 
\int_{(\Delta^{n\!-\!1})^2}\ln\left(\sum_{i=1}^{n}\frac{\mathbf{x}_i}{\mathbf{y}_i}\right)
\mathrm{d}\mathbf{x}\mathrm{d}\mathbf{y} 
-\ln n-\frac{n-1}{n}$.
\footnote{Details can be found in Supplementary Material.}

Since $\mathfrak{P}$ goes too close to $1$ as the rank grows, we 
use $-\ln(1-\mathfrak{P})$ to show the increasing in detail.
\footnote{We guess an empirical formula  $-\ln(1-\mathfrak{P})\approx\frac{1}{2}\ln(x(x+1)/(x-1.5))+0.1x-0.3$, see Supplementary Material.}

\begin{figure*}[t]
  \centering
  \includegraphics[scale=0.5]{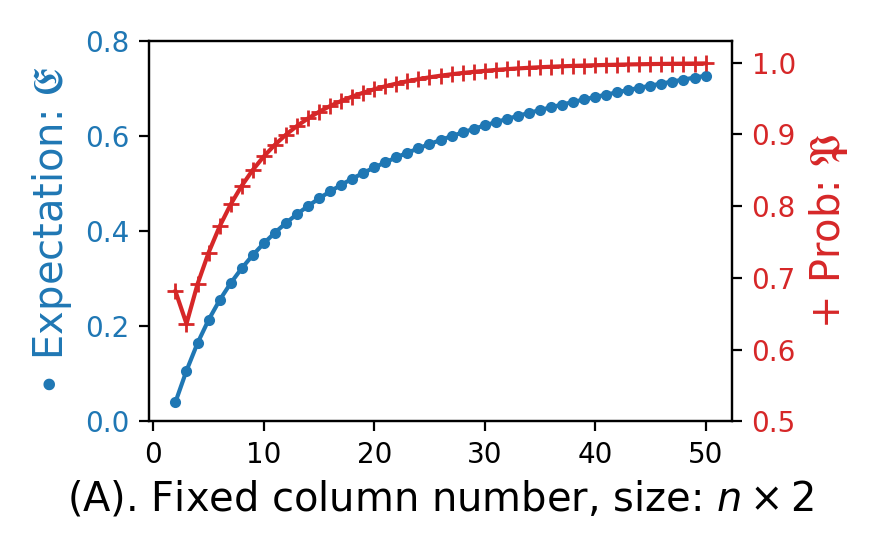}
  \includegraphics[scale=0.5]{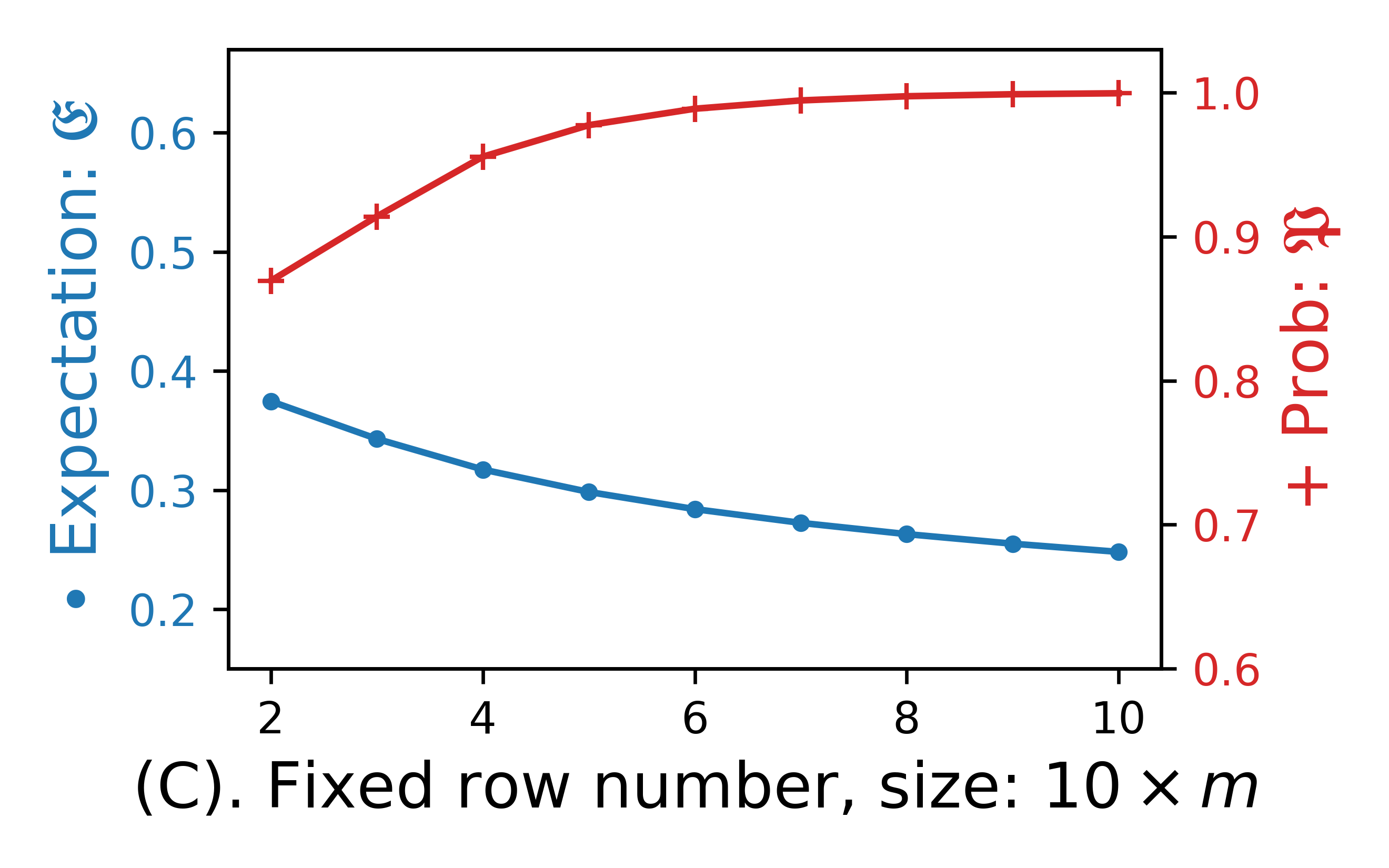}
  \includegraphics[scale=0.51]{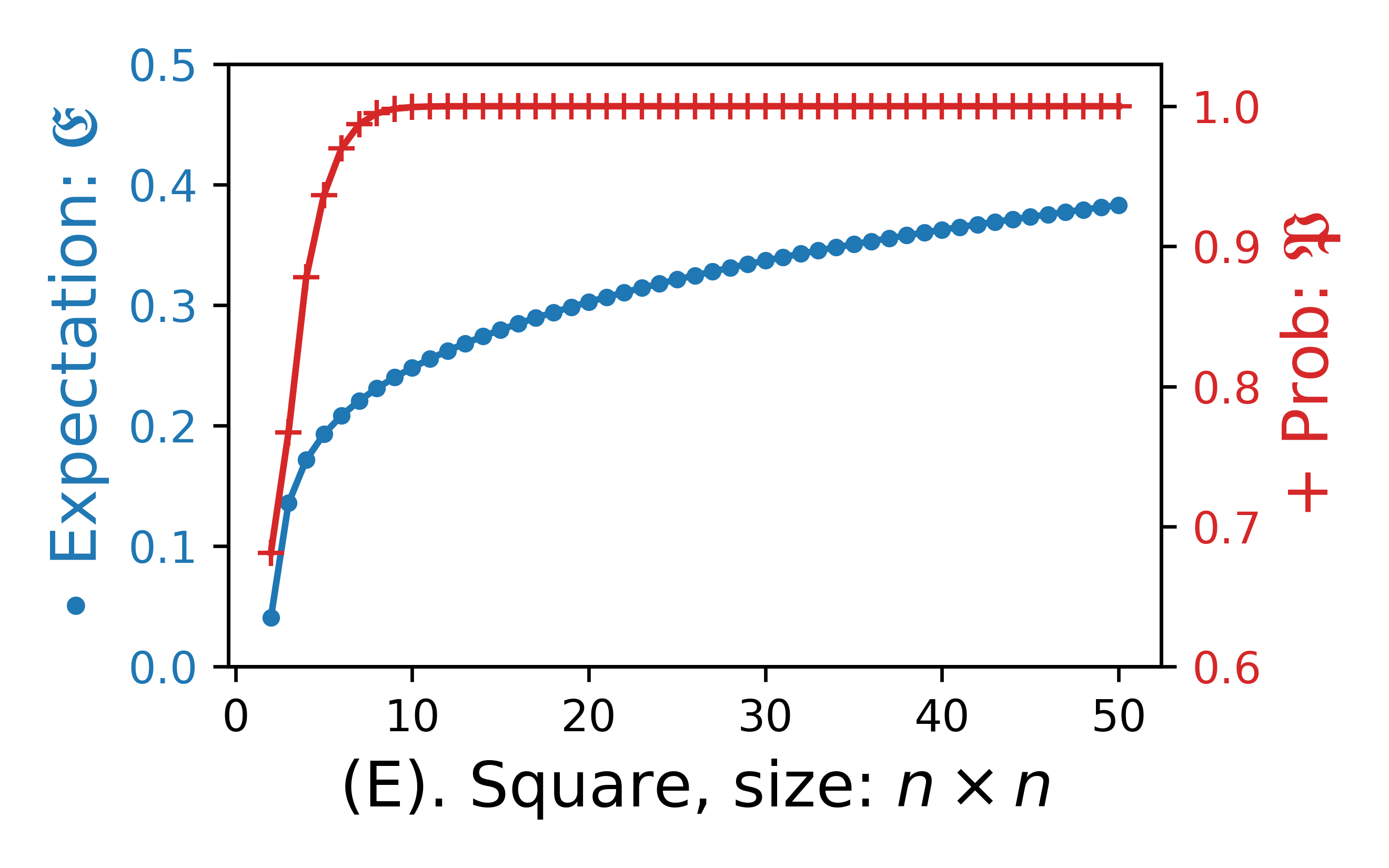}
  \raisebox{0.1cm}{\includegraphics[scale=0.5]{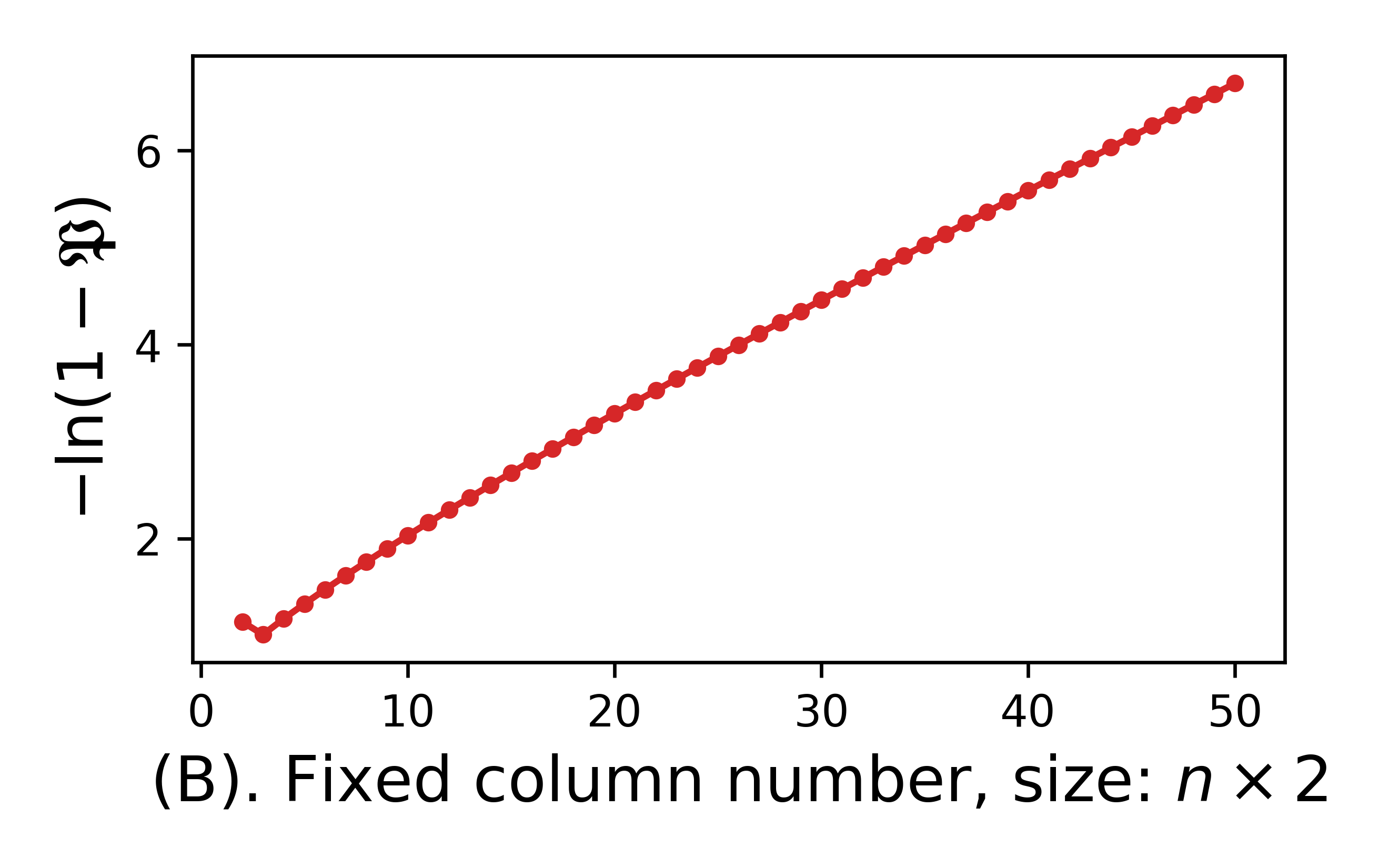}}
  \raisebox{0.12cm}{\includegraphics[scale=0.49]{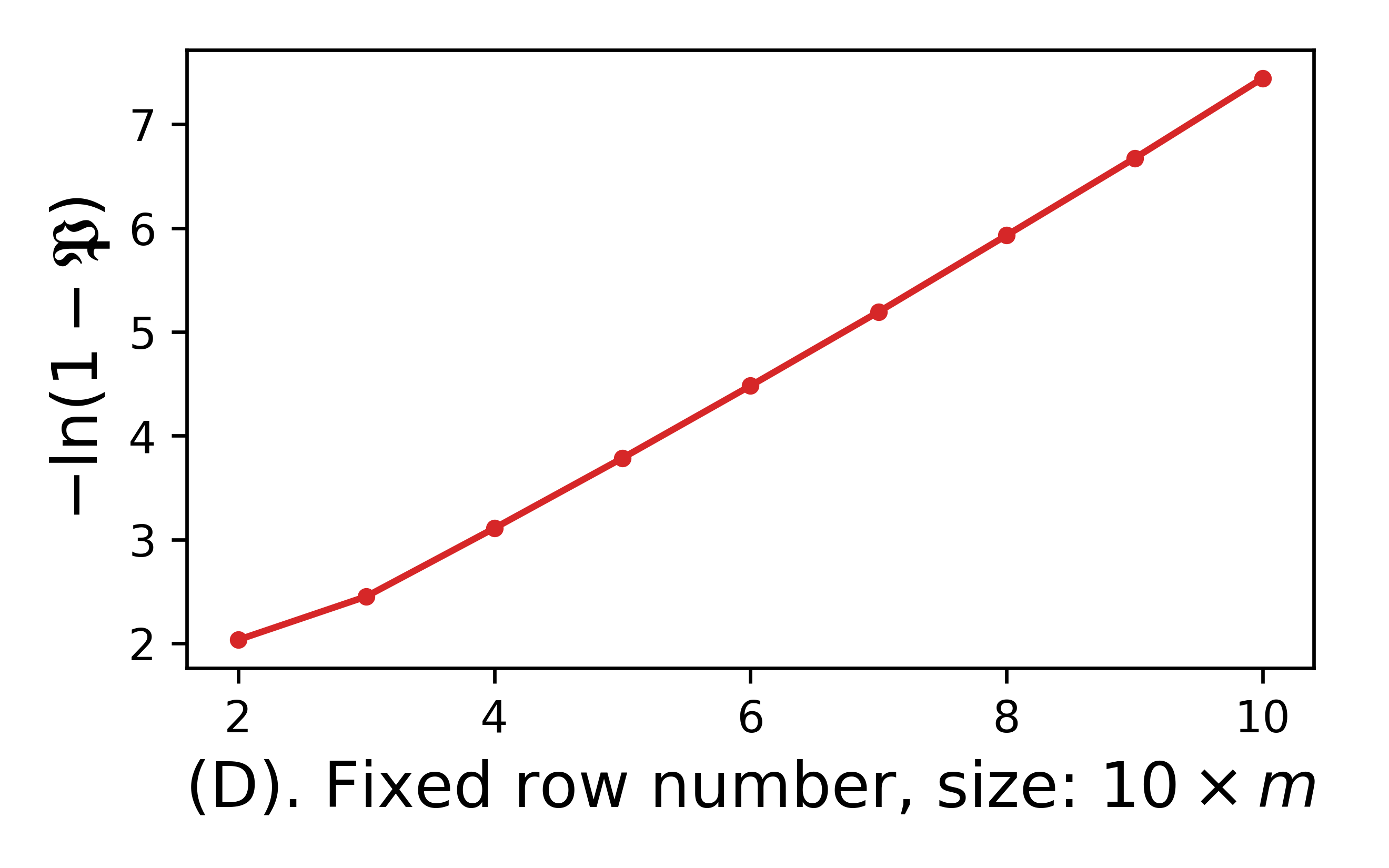}}
  \includegraphics[scale=0.51]{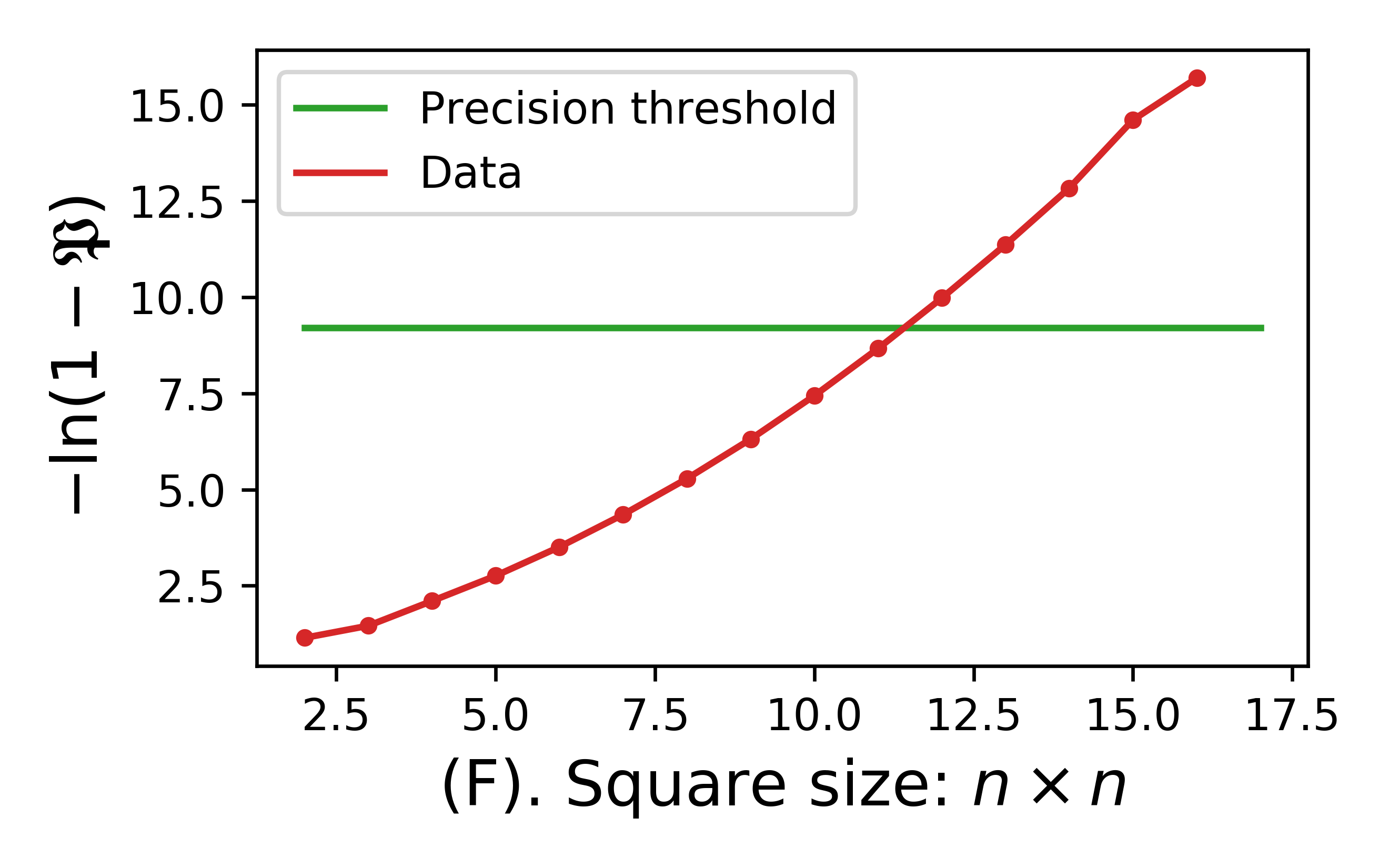}
  \caption{
    Comparison of RoC between BI and SCBI. (A), (C), (E): the comparison
    on $\mathfrak{P}$ in {\color[RGB]{31,119,179}blue} and on $\mathfrak{E}$ in
    {\color[RGB]{214,39,40}red}. (B), (D), (F): plotting $-\ln(1-\mathfrak{P})$.
    (A), (B): two-column case, number of rows from $2$ to $50$. Monte Carlo
    of $10^{10}$ samples for each point on figure. (C), (D): 10-row case, number of
    columns from $2$ to $10$. Monte Carlo of size $10^8$. (E), (F): square case,
    number of rows from $2$ to $50$. Monte Carlo of size $10^8$. The horizontal line
    in (F) is the theoretical threshold of precision by central limit theorem. For
    $n>17$, MC provides $\mathfrak{P}=1$ ($\mathfrak{R}^{\text{s}}>\mathfrak{R}^{\text{b}}$
    for all samples). From the figures, except in (C) where $\mathfrak{E}$ decays slowly
    when column number grows, the two values $\mathfrak{E}$ and $\mathfrak{P}$ increases
    as size grows in all the other cases.
    Moreover, $\mathfrak{P}$ grows to $1$ logistically in all situations.
    \label{fig:roc} }
\end{figure*}

\noindent 
\textbf{More Columns of a Fixed Row Size.}
To verify the general cases, we simulated $\mathfrak{P}$ and $\mathfrak{E}$ by
Monte Carlo on matrices of $10$-row and various-column shapes, see Fig.~\ref{fig:roc}(C)(D).
We sampled $10^8$ different $\mathbf{M}$ of shape $10\times m$ for each $2\le m\le10$.
Empirical results show that $\mathfrak{E}$ decreases slowly but $\mathfrak{P}$
still increase logistically as $m$ grows.

\noindent
\textbf{Square Matrices.}
Fig.~\ref{fig:roc}(E)(F) shows the square cases with size from $2$ to $50$, simulated by size
$10^8$ Monte Carlo.

The empirical $\mathfrak{P}$ is the mean of $N$ (sample-size)
i.i.d. variables valued $0$ or $1$, thus the standard deviation of a single variable 
is smaller than $1$. By Central Limit Theorem, the standard deviation
$\sigma(\mathfrak{P})<N^{-1/2}$ (precision threshold). So we draw
lines $y=N^{-1/2}$ in each log-figure, but only in one figure the line
lies in the view area.


In all simulated cases, we observe that $\mathfrak{E}>0$ and $\mathfrak{P}>0.5$, 
indicating that SCBI converges faster than BI in most cases and in average.
It is also observed that SCBI behaves even better as matrix size grows, especially when the teacher has more choices on the data to be chosen (i.e., more rows).


\subsection{Distribution of Inference Results}
The promises of cooperation is that one may infer hypotheses from small amounts of data. Hence, we compare SCBI with BI after small, fixed numbers of rounds. 

We sample matrices of shape $20\times20$ whose columns are distributed evenly
in $\Delta^{19}$ to demonstrate. Equivalently, they are column-normalizations of
the uniformly sampled matrices whose sum of all entries is one.

Assume that the correct hypothesis to teach is $h\in\mathcal{P}(\mathcal{H})$
We first simulate a $5$-round inference behavior, exploring all possible
ways that the teacher may teach, 
then calculate the
expectation and standard deviation of $\theta(h)$. With $300$ matrices sampled
in the above way, Fig.~\ref{fig:tree_tracing_step_5} shows this comparison
between BI and SCBI. 
\begin{figure}[!ht]
  \centering
  \includegraphics[scale=0.055]{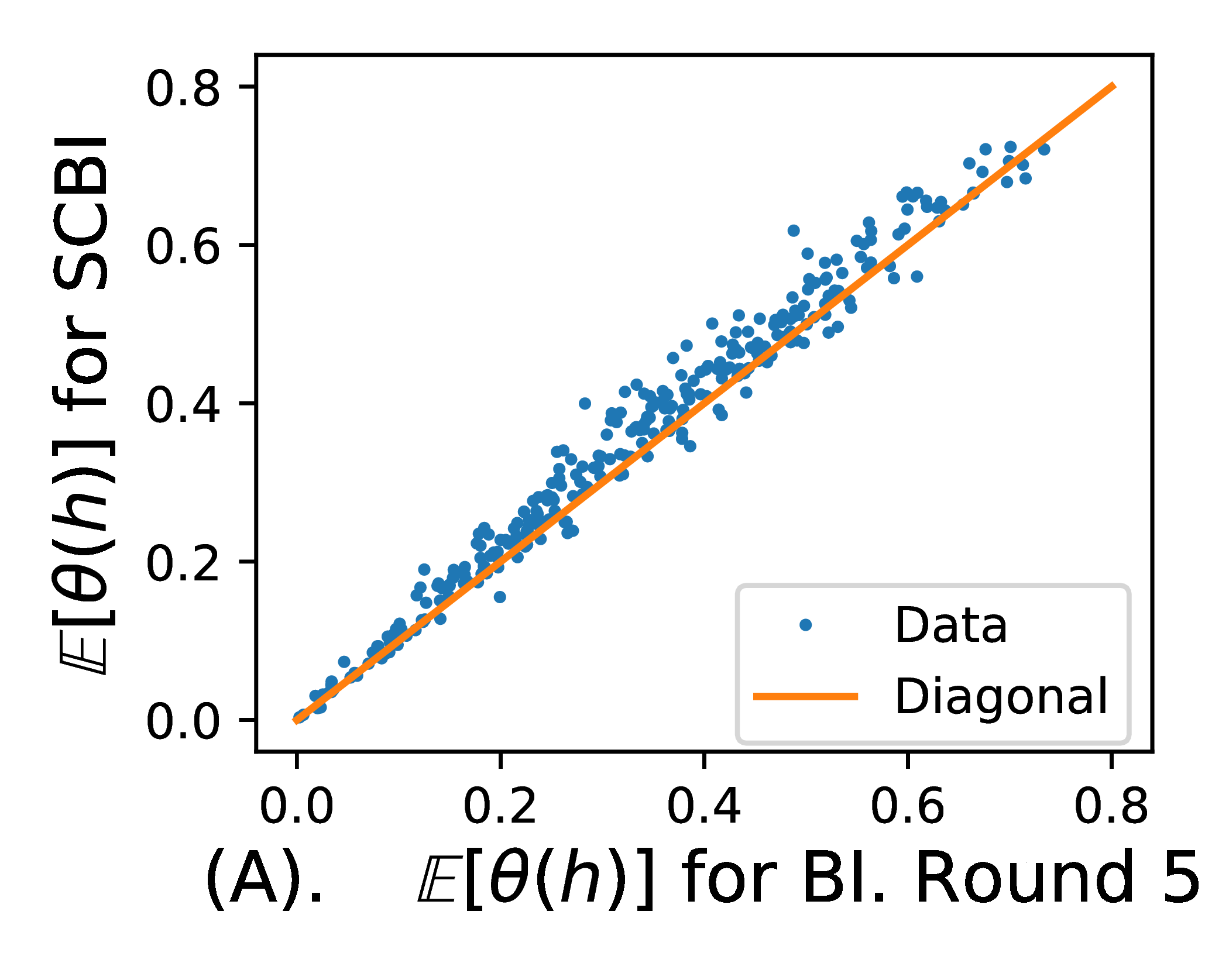}
  \includegraphics[scale=0.055]{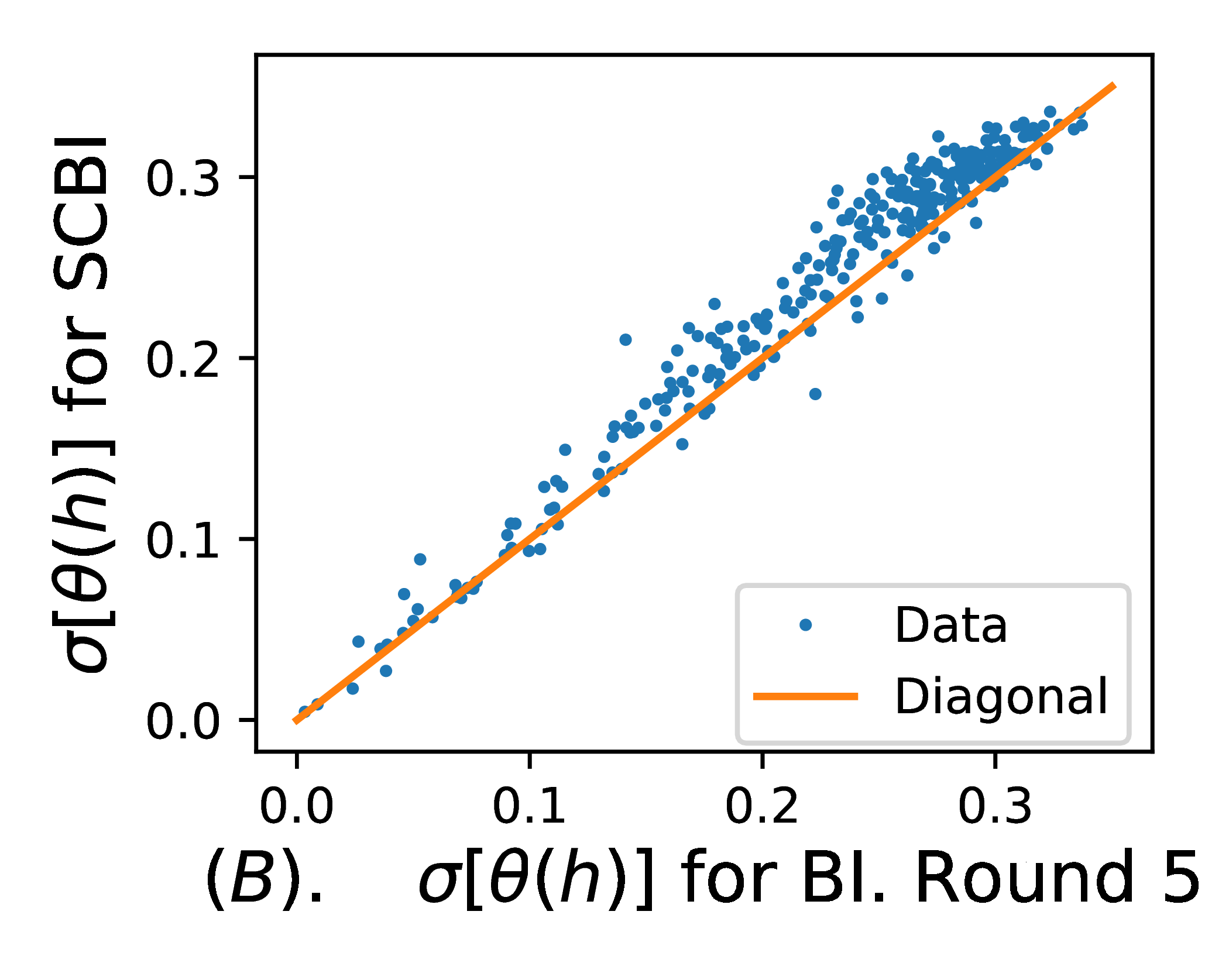}
  
  \includegraphics[scale=0.055]{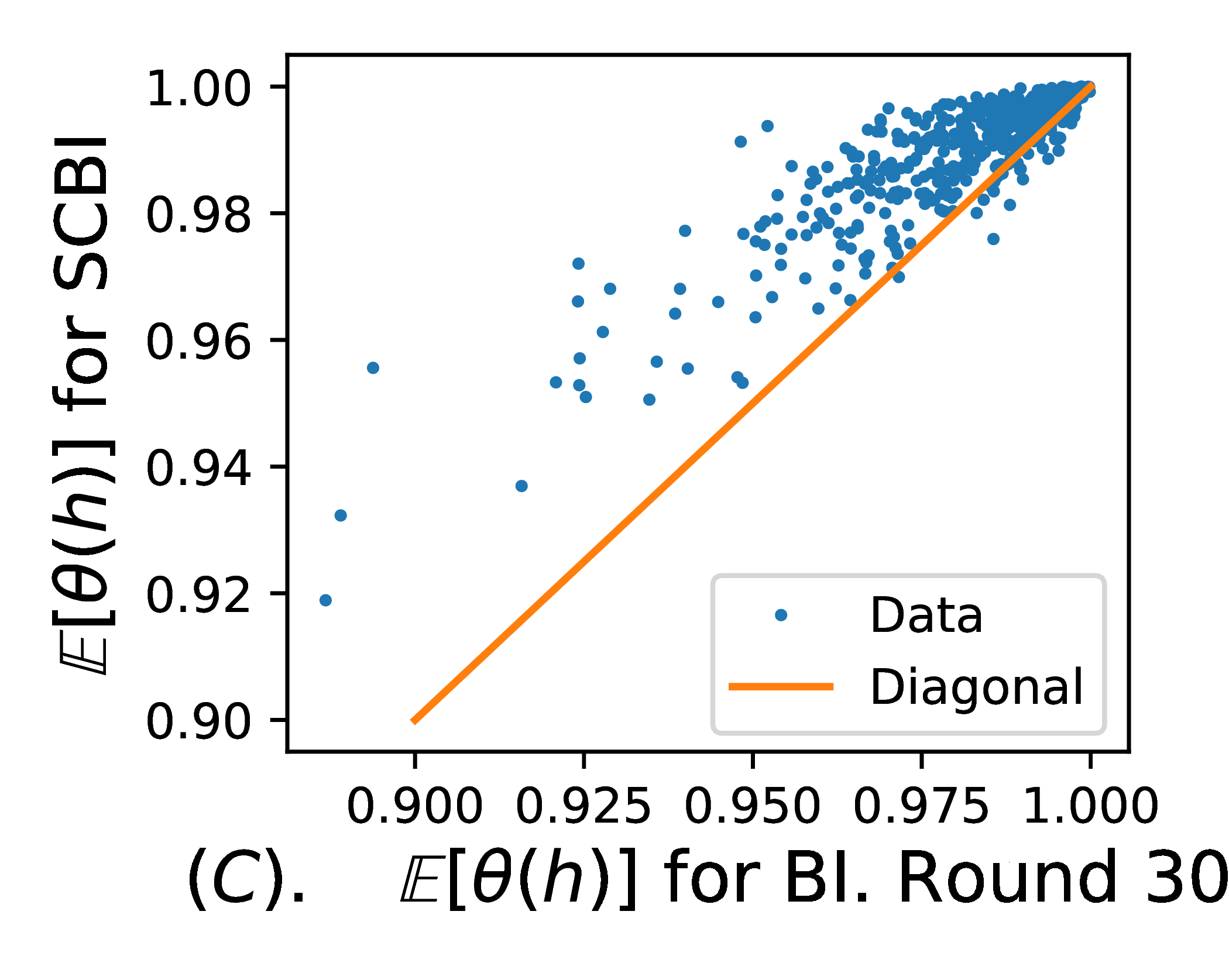}
  \includegraphics[scale=0.055]{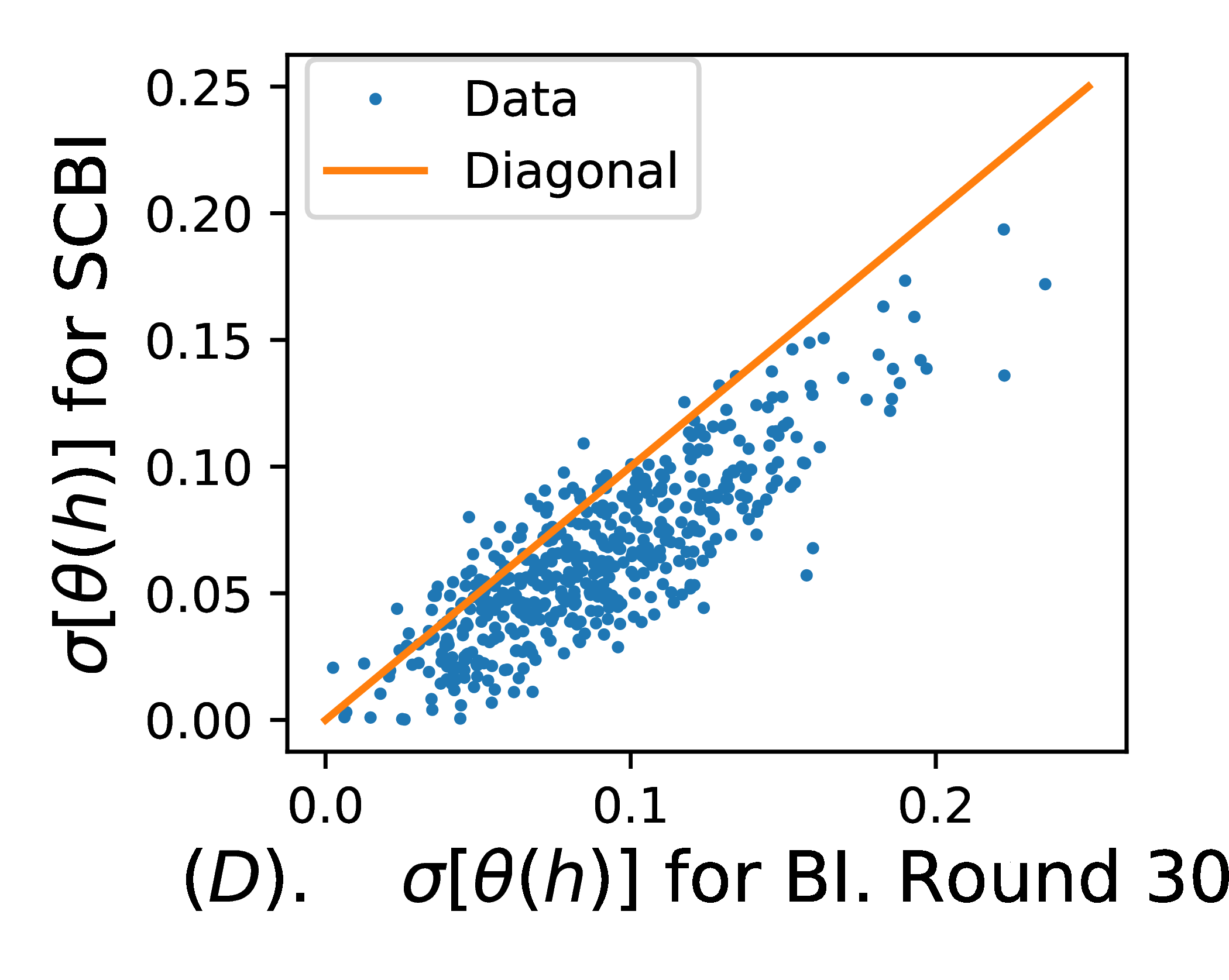}
  \vspace{-0.5cm}
  \caption{Comparison between BI and SCBI on $20\times20$ matrices: Top: $300$ points (matrices) of round $5$ accurate value.
  Bottom: $500$ points of round $30$ using Monte Carlo of size $2000$.
  Left: comparison on expectations of learner's posterior on $h$.
  Right: comparison on the standard deviations. Orange
  line is the diagonal. 
  }
  \label{fig:tree_tracing_step_5}
\end{figure}
  
Similarly, we extend the number of rounds to $30$ by Monte Carlo since an exact
calculation on exhausting all possible teaching paths becomes impossible. With
sampling $500$ matrices independently, we simulate a teacher teaches $2000$
times to round $30$ for each matrix, and the statistics are also shown in
Fig.~\ref{fig:tree_tracing_step_5}. 
%
%
From Fig.~\ref{fig:tree_tracing_step_5}, 
 we 
observe that SCBI have better expectation and less variance in the short run.

In conclusion, experiments indicate that SCBI is both 
more efficient asymptotically, and in the short run.


\section{Stability}
\label{sec:stability}

In this section, we study the robustness of SCBI by setting the initial
conditions of teacher and learner different. This could happen when 
agents do not have full access to their partner's exact state.

\noindent
\textbf{Theory.}
In this section, we no longer have assumption (iii). Let $\mathbf{T}$ and $\mathbf{L}$ be
matrices of teacher and learner (not necessarily have (iii)).
Let $\theta_0^T$ and $\theta_0^L$
be elements in $\mathcal{P}(\mathcal{H})$ representing the prior on hypotheses
that the teacher and learner use in the estimation of inference (teacher) and
in the actual inference (learner), 
i.e., $\mu_0^T=\delta_{\theta_0^T}$ and 
$\mu_0^L=\delta_{\theta_0^L}$. During the inference, let $\mu_k^T$ and
$\mu_k^L$ be the distribution of posteriors of the teacher and the
learner after round $k$, and denote the corresponding random variables
by $\theta_k^T$ and $\theta_k^L$, for all positive $k$ and $\infty$,
where $\infty$ represents the limit in probability.



Let $\mathrm{D}$ be a random variable on $\mathcal{D}$, 
we define an operator 
$\Psi_{\mathrm{D}}^{\mathbf{L}}:\mathcal{P}(\mathcal{P}(\mathcal{H}))
\longrightarrow\mathcal{P}(\mathcal{P}(\mathcal{H}))$ similar
to the $\Psi$ in Section~\ref{sec:consistency}. Let 
$T_d(\theta)=\mathbf{L}^{\left\langle n\theta\right\rangle}_{(d,\_)}$,
then $\mathrm{d}(\Psi_{\mathrm{D}}^{\mathbf{L}}(\mu))(\theta):=
\sum\limits_{d\in\mathcal{D}}{\mathrm{P}(\mathrm{D}=d)}\mathrm{d}(T_{d\ast}\mu)(\theta)$.

\begin{prop}
   \label{prop:asympt_posterior}
  Given a sequence of identical independent $\mathcal{D}$-valued 
  random variables $(\mathrm{D}_i)_{i\ge 1}$ following the uniform distribution.
  Let $\mu_0\in\mathcal{P}(\mathcal{P}(\mathcal{H}))$ be a prior distribution on
  $\mathcal{P}(\mathcal{H})$, and
  $\mu_{k+1}=\Psi_{\mathrm{D}_{k+1}}^{\mathbf{L}}(\mu_k)$, then 
  $\mu_k$ converges, in probability, to
  $\sum_{i\in\mathcal{H}}a_i\delta_{i}$ where
  $a_i=\mathbb{E}_{\mu_0}\left[\theta(i)\right]$.
\end{prop}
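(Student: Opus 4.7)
The plan is to couple the deterministic measure recursion $\mu_{k+1}=\Psi^{\mathbf{L}}_{\mathrm{D}_{k+1}}(\mu_k)$ with the random-variable recursion $\Theta_{k+1}=T_{\mathrm{D}_{k+1}}(\Theta_k)$ with $\Theta_0\sim\mu_0$ and the $\mathrm{D}_i$'s independent of $\Theta_0$, so that $\mu_k$ is just the law of $\Theta_k$. I would then prove that $\Theta_k$ converges almost surely to a random vertex of $\Delta^{m-1}$ with $\Pr(\Theta_\infty=\delta_i)=a_i$; the claim about $\mu_k$ is then immediate by bounded convergence applied to continuous test functions on the compact set $\Delta^{m-1}$.

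First I would verify the martingale identity $\mathbb{E}[\Theta_k(i)\mid\mathcal{F}_{k-1}]=\Theta_{k-1}(i)$ with $\mathcal{F}_k=\sigma(\Theta_0,\mathrm{D}_1,\dots,\mathrm{D}_k)$. This is the key place the uniform distribution of $\mathrm{D}_k$ enters: by construction the Sinkhorn-scaled matrix $\mathbf{L}^{\langle n\Theta_{k-1}\rangle}$ has row sums $\mathbf{e}_n$ and column sums $n\Theta_{k-1}$, so averaging the $d$-th row over $\mathrm{D}_k$ uniform on $\{1,\dots,n\}$ returns the column average $\Theta_{k-1}(i)$. Each coordinate $\Theta_k(i)\in[0,1]$ is then a bounded martingale, so Doob's theorem yields an almost-sure and $L^p$ limit $\Theta_\infty(i)$ with $\mathbb{E}[\Theta_\infty(i)]=\mathbb{E}[\Theta_0(i)]=a_i$.

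Next I would control the quadratic variation via the Lyapunov function $V(\theta)=\|\theta\|_2^2$. A direct calculation gives
\[
\mathbb{E}\bigl[V(\Theta_k)-V(\Theta_{k-1})\mid\mathcal{F}_{k-1}\bigr]=\Delta(\Theta_{k-1}),\qquad \Delta(\theta):=\frac{1}{n}\sum_{i,d}\bigl(\mathbf{L}^{\langle n\theta\rangle}_{(d,i)}-\theta(i)\bigr)^2,
\]
so $V(\Theta_k)$ is a bounded submartingale and telescoping forces $\sum_k\mathbb{E}[\Delta(\Theta_{k-1})]\le 1$, in particular $\mathbb{E}[\Delta(\Theta_{k-1})]\to 0$. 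The zero set of $\Delta$ on $\Delta^{m-1}$ is precisely the vertex set: if $\Delta(\theta)=0$ then every column of $\mathbf{L}^{\langle n\theta\rangle}$ with index $i\in\mathrm{supp}(\theta)$ must be the constant vector $\theta(i)\mathbf{e}_n$; any two such columns would then be proportional, and since $\mathbf{L}^{\langle n\theta\rangle}=\mathbf{E}_1\mathbf{L}\mathbf{E}_2$ with invertible diagonal $\mathbf{E}_1,\mathbf{E}_2$, this forces the corresponding columns of $\mathbf{L}$ to be proportional, contradicting distinguishability~(ii) unless $|\mathrm{supp}(\theta)|\le 1$. Combining continuity of $\theta\mapsto \mathbf{L}^{\langle n\theta\rangle}$ with the a.s.\ convergence $\Theta_k\to\Theta_\infty$ and Fatou gives $\mathbb{E}[\Delta(\Theta_\infty)]\le\liminf_k\mathbb{E}[\Delta(\Theta_{k-1})]=0$, hence $\Theta_\infty\in\{\delta_1,\dots,\delta_m\}$ almost surely.

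Finally, since $\Theta_\infty$ is vertex-valued, $\{\Theta_\infty=\delta_i\}=\{\Theta_\infty(i)=1\}$ and $\Pr(\Theta_\infty=\delta_i)=\mathbb{E}[\Theta_\infty(i)]=a_i$, so the law of $\Theta_\infty$ equals $\sum_i a_i\delta_{\delta_i}$. The main obstacle I expect is the zero-set characterization of $\Delta$ on the boundary strata of the simplex, where some $\theta(i)=0$ and the corresponding columns of $\mathbf{L}^{\langle n\theta\rangle}$ vanish: continuity of Sinkhorn scaling on these faces needs care. This is mitigated by the martingale property, which freezes any coordinate that starts at $0$, allowing the distinguishability argument to be restricted to the active support of the realized $\Theta_0$; the remainder of the proof is a textbook application of bounded-martingale convergence and Fatou.
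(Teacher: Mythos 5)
Your proof is correct, and it rests on the same two pillars as the paper's: the invariance $\mathbb{E}_{\mu_k}[\theta(i)]=\mathbb{E}_{\mu_0}[\theta(i)]$ coming from the column-sum constraint of the Sinkhorn scaling under uniform sampling (the paper's Lemma on expectation invariance; your martingale identity), and a quadratic Lyapunov function whose one-step increment vanishes exactly at the vertices by the cross-ratio/distinguishability argument (your $V(\theta)=\|\theta\|_2^2$ and the paper's $f(\theta)=\|\theta-\mathbf{e}/m\|_2^2$ differ by a constant on the simplex, so they are the same functional). Where you genuinely diverge is in how the Lyapunov drift is exploited. The paper works at the level of the measures $\mu_k$: it assumes $\mu_{k_i}(\Delta_\epsilon)>C$ along a subsequence, uses compactness of $\Delta_\epsilon$ to extract a uniform gap $B>0$ for the Jensen inequality, and derives a contradiction with the boundedness of $\mathscr{L}(\mu_k)$. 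You instead couple to a pathwise chain $\Theta_k$, invoke Doob's bounded-martingale convergence to get an almost-sure limit $\Theta_\infty$, and then use the telescoped summability $\sum_k\mathbb{E}[\Delta(\Theta_{k-1})]\le 1$ together with continuity of the scaling map and Fatou to force $\Delta(\Theta_\infty)=0$. Your route buys a strictly stronger conclusion (almost-sure convergence of the coupled chain, hence convergence in distribution and in probability of $\mu_k$) and dispenses with the compactness-based uniform lower bound; the paper's route avoids having to construct the pathwise coupling and the boundary-continuity discussion you correctly flag as the delicate point, which it handles implicitly by staying on measures. Your mitigation there---that a coordinate started at $0$ is a nonnegative martingale with zero mean and hence frozen, so the distinguishability argument can be run on the active support---is sound, given the continuity of $T_d$ on the closed simplex that the paper cites from the Sinkhorn-scaling literature.
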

\begin{remark}
  This proposition helps accelerate the simulation, that
  one may terminate the teaching process when $\theta_k^T$ 
  is sufficiently close to $\delta_h$, since
  Prop.~\ref{prop:asympt_posterior} guarantees that the 
  expectation of the learner's posterior on the true hypothesis $h$ 
  at that time is close enough to the eventual probability of getting
  $\delta_h$, i.e. 
  $\mathbb{E}\theta_\infty^L(h)\approx\mathbb{E}\theta_k^L(h)$.
  
\end{remark}
\begin{definition}
  \label{def:successful_rate}
  We call $\mathbb{E}\theta_\infty^L(h):=
  \lim\limits_{k\rightarrow\infty}\mathbb{E}_{\mu_k}(\theta(h))$
  the 
  \textbf{successful rate} of the inference given $\mathbf{T}$, $\mathbf{L}$,
  $\theta_0^T$ and $\theta_0^L$. By the setup in
  Section~\ref{sec:scbi_construction}, the failure probability,
  $1-\mathbb{E}\theta_\infty^L(h)$, is 
  $\frac{1}{2}||\mathbb{E}\theta_\infty^L-{\delta_h}||^{\phantom{|}}_1$, half of the $1$-distance on $\mathcal{P}(\mathcal{H})$.
\end{definition}

\noindent
\textbf{Simulations with Perturbation on Priors.}
We simulated the square cases of rank $3$ and $4$. 
We sample $5$ matrices ($\mathbf{M}_1$ to $\mathbf{M}_5$) of size
$3\times3$, whose columns distribute
uniformly on $\mathcal{P}(\{d_1,d_2,d_3\})=\Delta^{2}$, and $5$
priors ($\theta_1$ to $\theta_5$) in $\mathcal{P}(\mathcal{H})$,
used as $\theta_0^T$.
Similarly, we sample $3$ matrices ($\mathbf{M}_1'$, $\mathbf{M}_2'$ and $\mathbf{M}_3'$)
of size $4\times4$, and $3$ 
priors ($\theta_1'$, $\theta_2'$, $\theta_3'$)
from $\Delta^3$ in the same way as above. In both cases, 
we assume $h=1\in\mathcal{H}$ to be the true hypothesis to teach.

Our simulation is based on Monte Carlo method of $10^4$ teaching sequences (for
each single point plotted) then use
Proposition~\ref{prop:asympt_posterior} to calculate the successful rate of
inference. 
For $3\times3$ matrices, we perturb $\theta_0^L$ in two ways: (1) take $\theta_0^L$ around
$\theta_0^T$ distributed evenly on concentric circles, thus $630$ points for each
$\theta_0^T$ are taken. In this area, there are $84$ points lying on 
$6$ given directions ($60^\circ$ apart, see Supplementary Material for figures).
(2) sample $\theta_0^L$ evenly in the whole simplex $\mathcal{P}(\mathcal{H})=\Delta^2$ ($300$
points for each $\theta_0^T$). For $4\times4$ matrices, we perturb $\theta_0^L$
in two ways: (1) along $15$ randomly chosen directions in $\Delta^3$ evenly take
$21$ points on each direction, and (2) sample $300$ points evenly in $\Delta^3$.
Then we have the following figure samples (for figures demonstrating the entire simulation, please see
Supplementary Material).
%
\begin{figure*}[t]
  \centering
  \includegraphics[scale=0.056]{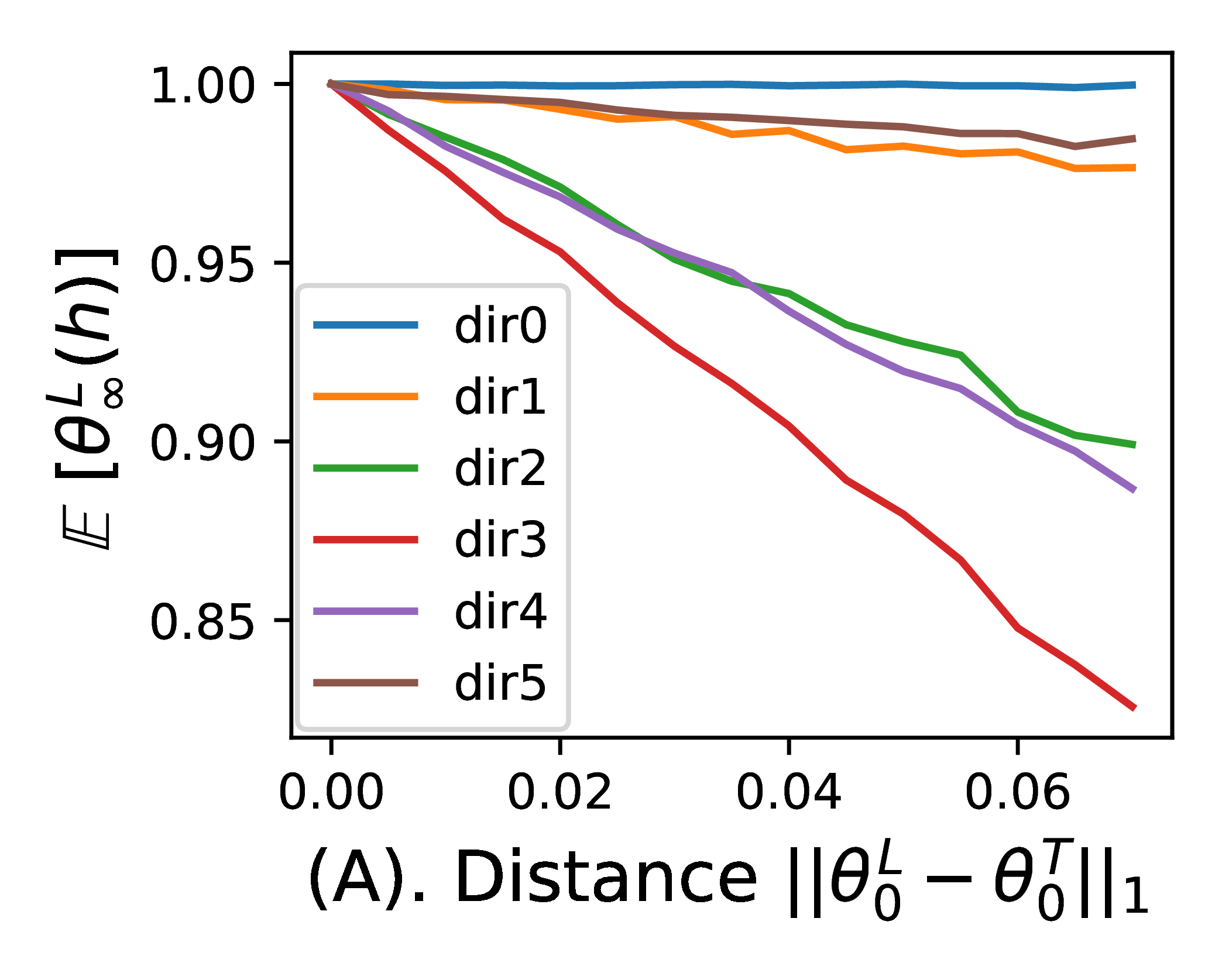}
  \includegraphics[scale=0.056]{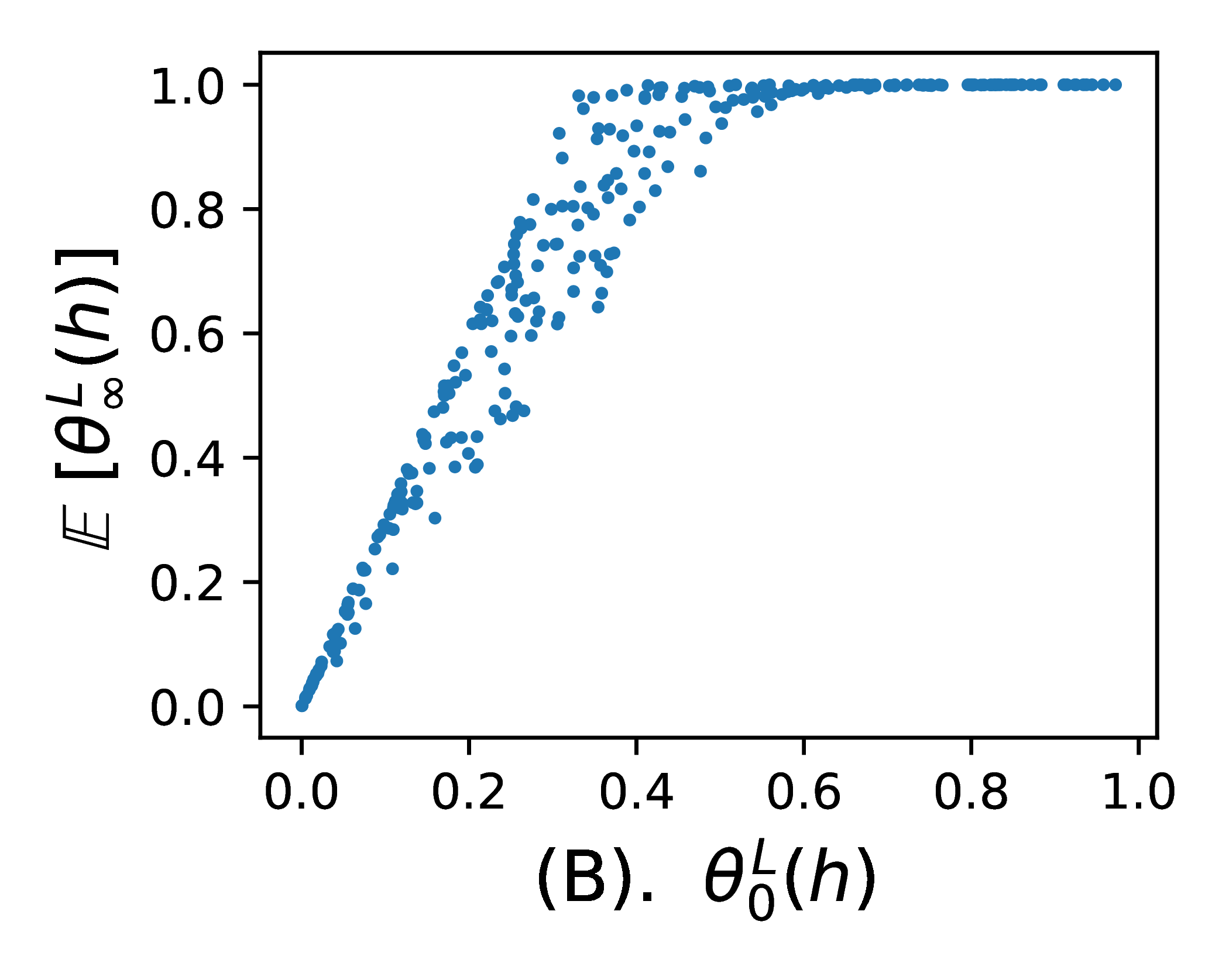}
  \includegraphics[scale=0.056]{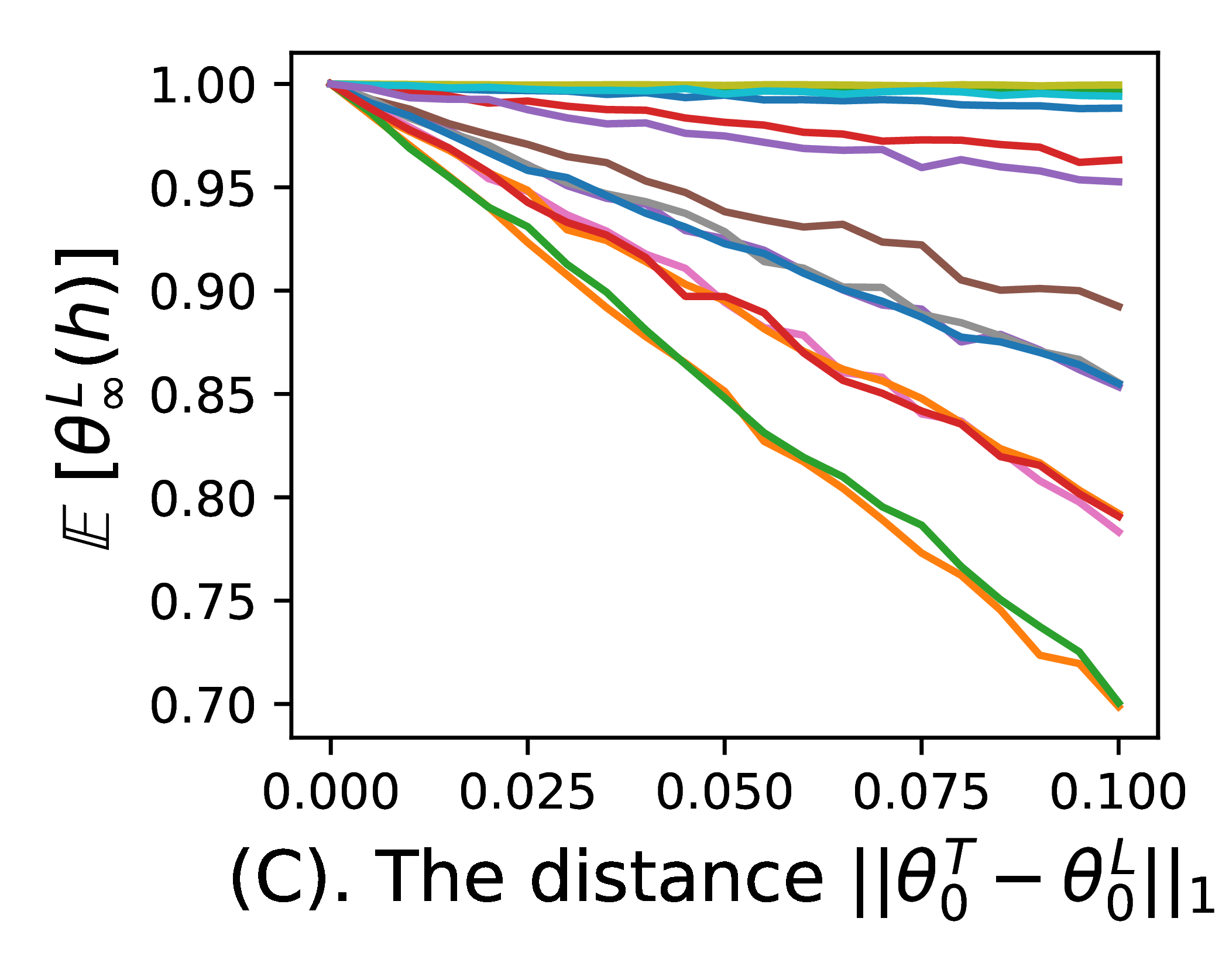}
  \includegraphics[scale=0.056]{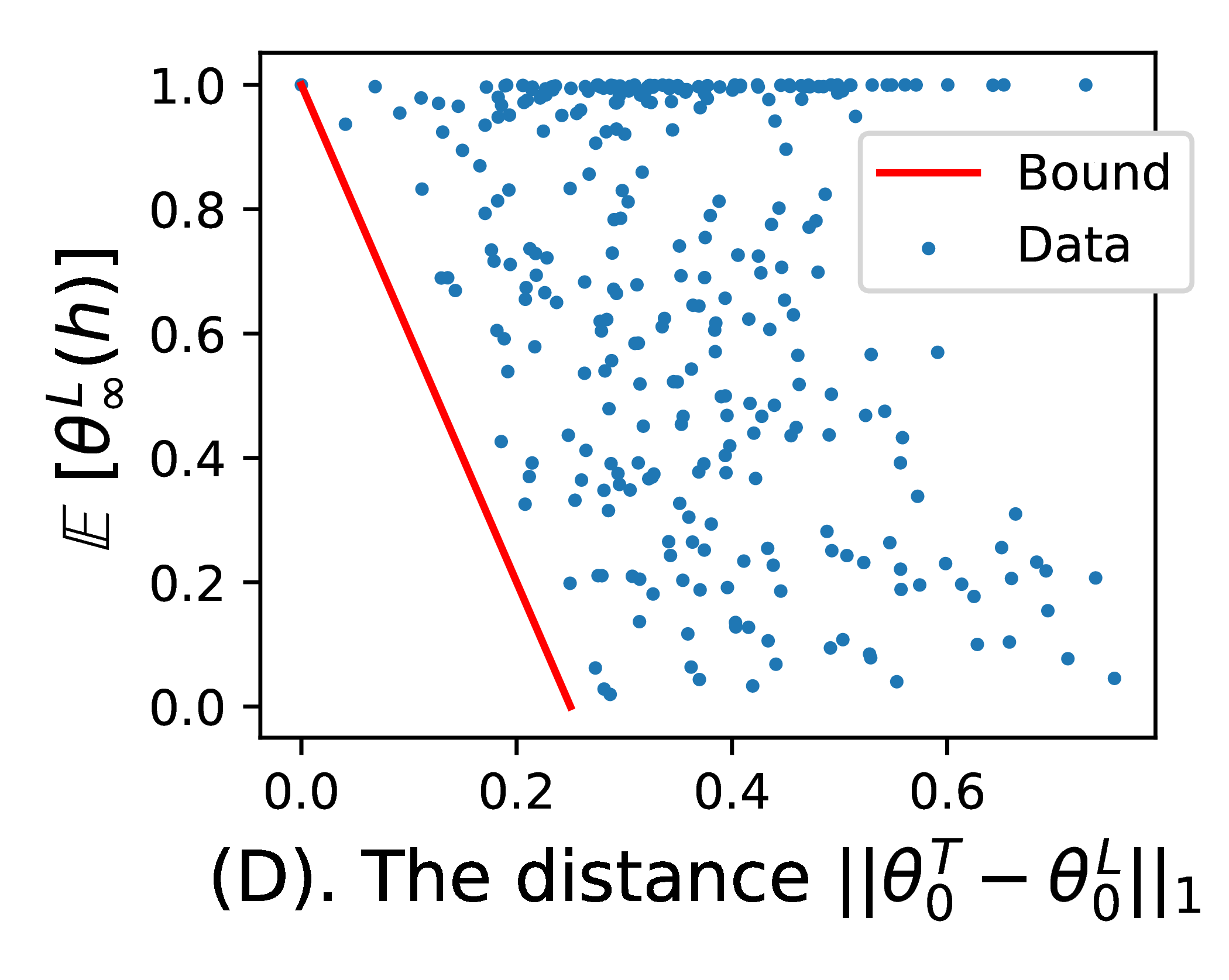}
  \vspace{-3mm}
  \caption{From left to right: (A). Rank 3, $\mathbf{M}_3$ and $\theta_1$, $\theta^L_0$
    is perturbed along six directions. 
    (B). Rank 3, $\mathbf{M}_3$ and
    $\theta_1$, sample $\theta_0^L$ uniformly in $\Delta^2$.
    (C). Rank 4, $\mathbf{M}_1'$ and $\theta_1'$, along $15$ different
    directions. 
    (D). Rank 4, $\mathbf{M}_1'$
    and $\theta_1'$, sample $\theta_0^L$ uniformly in $\Delta^3$.
    \label{fig:prior_var}}
\end{figure*}
From the figures we see:
1. left pictures indicate that the learner's expected posterior on $h$ is roughly linear to perturbations along a line.
2. right pictures indicate that the learner's expected posterior on $h$ is closely bounded by a multiple of the learner's prior on true $h$. 
%
Thus we have the following conjecture:
\begin{conj}
  \label{conj:stability_on_prior}
  Given $\mathbf{L}=\mathbf{T}=\mathbf{M}$ and $\theta_0^T$, let $h$ be the true
  hypothesis to teach. For any $\epsilon>0$, let $\theta_0^L$ be learner's prior
  with a distance to $\theta_0^T$ less than $\epsilon$.
  Then the successful rate for sufficiently many rounds is
  greater than $1-C\epsilon$, where $C=\frac{1}{\theta_0^T(h)}$.
\end{conj}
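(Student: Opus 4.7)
The plan is to treat the conjecture as a local Lipschitz statement anchored at $\theta_0^L = \theta_0^T$, and then control the Lipschitz constant by combining Theorem~\ref{thm:consistency_SCBI} with a perturbation analysis of the Sinkhorn-scaling map. At the anchor $\theta_0^L = \theta_0^T$ the learner's chain $(\theta_k^L)$ coincides with the teacher's internal chain $(\theta_k^T)$, since both apply the same SCBI update to the same data sequence; Theorem~\ref{thm:consistency_SCBI} then yields $\theta_k^T \to \delta_h$ in probability, and bounded convergence on $\Delta^{m-1}$ gives $\mathbb{E}\theta_\infty^L(h) = 1$ at the anchor. Hence it suffices to show that $\theta_0^L \mapsto \mathbb{E}\theta_\infty^L(h)$ is locally Lipschitz at $\theta_0^T$ with constant at most $1/\theta_0^T(h)$.

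To carry this out, I would first use the Sinkhorn factorization $\mathbf{M}^{\langle n\theta\rangle} = D_1(\theta)\,\mathbf{M}\,D_2(\theta)$ to express one SCBI round in the closed form $\theta_k(h') = \mathbf{M}_{(d_k,h')}(D_2(\theta_{k-1}))_{h'}/\sum_j \mathbf{M}_{(d_k,j)}(D_2(\theta_{k-1}))_j$, then couple the learner's and teacher's chains through the shared data sequence $(d_k)$ sampled from $\tau(\theta_{k-1}^T)$. Writing $\Delta_k := \theta_k^L - \theta_k^T$ and linearizing the update around $\theta_{k-1}^T$ gives a recursion of the form $\Delta_k = J_k \Delta_{k-1} + O(\|\Delta_{k-1}\|^2)$, where $J_k$ is the Jacobian of the Sinkhorn-composed-with-row-selection map. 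Conditioning on the high-probability event $\{\theta_k^T(h) > 1 - \delta\}$ — whose probability tends to $1$ by Theorem~\ref{thm:consistency_SCBI} — one needs to show that the operator norms of the products $J_k J_{k-1} \cdots J_1$ are uniformly controlled, and that the overall bound scales as $1/\theta_0^T(h)$. Passing to expectations and letting $k \to \infty$ then yields $1 - \mathbb{E}\theta_\infty^L(h) \le \epsilon/\theta_0^T(h)$.

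The main obstacle is this Jacobian control. The Sinkhorn map is nonlinear and its differential degenerates at the boundary of $\Delta^{m-1}$, which is exactly where the teacher's trajectory accumulates; standard contraction estimates (e.g.\ in Hilbert's projective metric) apply in the interior but collapse in the limit, so a finer analysis that tracks the $\delta_h$-component separately from the transverse components will likely be required. The specific constant $1/\theta_0^T(h)$ should emerge from the normalization $1/(n\theta_0^T(h))$ in the first-round teacher distribution $\tau(\theta_0^T)_d = \mathbf{M}^{\langle n\theta_0^T\rangle}_{(d,h)}/(n\theta_0^T(h))$: this factor amplifies the sensitivity of the very first round to prior perturbation, and the empirical left-hand panels in Figure~4 (linear dependence along a line) suggest that the remaining rounds do not amplify further.

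A complementary route worth keeping in reserve is to invoke Proposition~\ref{prop:asympt_posterior} conditional on teacher consistency, in the spirit of the remark following it: once $\theta_k^T$ is $\delta$-close to $\delta_h$, the subsequent teacher samples are nearly i.i.d.\ from $\mathbf{M}_{(\_,h)}$, so $\mathbb{E}\theta_\infty^L(h)$ is well-approximated by $\mathbb{E}\theta_k^L(h)$ at that conditioning time. This would reduce the proof to a finite-horizon continuity estimate on the early rounds, where the $O(\epsilon)$ perturbation is small enough to control by a direct Taylor expansion of the Sinkhorn map and the factor $1/\theta_0^T(h)$ appears explicitly from the first-round denominator above.
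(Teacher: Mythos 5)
This statement is Conjecture~\ref{conj:stability_on_prior}: the paper does not prove it, but only supports it with simulations (Fig.~\ref{fig:prior_var} and the supplementary perturbation data) and explicitly defers a proof to future work, so there is no paper proof to compare against. Your submission should therefore be judged on its own as a proof attempt, and as such it is a research plan rather than a proof: the step that would actually establish the bound is the one you yourself flag as ``the main obstacle'' and leave open. Concretely, the linearized recursion $\Delta_k = J_k\Delta_{k-1} + O(\|\Delta_{k-1}\|^2)$ is only useful if you can (a) bound $\|J_kJ_{k-1}\cdots J_1\|$ uniformly in $k$ and in the realized data sequence, and (b) show the accumulated second-order errors over infinitely many rounds stay $O(\epsilon)$ rather than compounding. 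Both of these must be carried out precisely where the teacher's trajectory accumulates on the face $\theta(h)=1$ of $\Delta^{m-1}$, where, as you note, the differential of the Sinkhorn map degenerates and projective-metric contraction estimates give nothing. Conditioning on the event $\{\theta_k^T(h)>1-\delta\}$ does not resolve this: consistency gives that event high probability for large $k$, but the Lipschitz constant you need is a bound over the early rounds too, and nothing in the proposal rules out that the finite-horizon constant grows with the horizon at which you invoke Proposition~\ref{prop:asympt_posterior}.

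The identification of the constant as exactly $C=1/\theta_0^T(h)$ is likewise only a heuristic. The factor $1/(n\theta_0^T(h))$ in $\tau(\theta_0^T)$ weights which datum is sampled, but the perturbation enters the dynamics through the learner's map $\theta_0^L\mapsto T_d(\theta_0^L)$, whose sensitivity is governed by the derivative of the Sinkhorn scaling in the column-marginal argument, not by the teacher's normalization; you would need to show that the one-round expected loss $1-\mathbb{E}[\theta_1^L(h)]$ is bounded by $\epsilon/\theta_0^T(h)$ \emph{and} that the subsequent rounds are nonexpansive in the appropriate sense (your reading of the left panels of Fig.~\ref{fig:prior_var} is empirical evidence for this, not an argument). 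A more promising route, closer to the techniques the paper actually uses for Theorem~\ref{thm:consistency_SCBI}, would be to look for an analogue of Lemma~\ref{sm:lem:expectation_property}: there the invariance of $\mathbb{E}_{\mu_k}[\theta(h')/\theta(h)]$ under $\Psi(h)$ is what converts consistency into a quantitative statement, and the conjectured constant $1/\theta_0^T(h)$ has exactly the form of the bound $\mathbb{E}_{\mu_0}[(1-\theta(h))/\theta(h)]\le \|\theta_0^L-\theta_0^T\|/\theta_0^T(h)+R$ that such an invariant would produce; the difficulty is that the exact invariance is broken when the teacher's sampling weights $\tau_d(\theta^T)$ no longer match the learner's states, and controlling that mismatch is the real content of the conjecture.
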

\noindent
\textbf{Simulations with Perturbation on Matrices.}
We now investigate robustness of SCBI to differences between agents' matrices.
Let $\mathbf{T}$ and $\mathbf{L}$ be stochastic, and let $\mathbf{L}$ be perturbed from $\mathbf{T}$.
The simulations are performed on the matrices $\mathbf{M}_1$ to $\mathbf{M}_5$ mentioned above with a fixed common prior $\theta_1$.

Let all matrices mentioned be column-normalized
(this does not affect SCBI since cross-ratios and marginal conditions
determines the Sinkhorn scaling results), we call the column determined by the
true hypothesis $h$ (the first column in our simulation) the target column
(``tr. h'' on Fig.~\ref{fig:variation_on_matrix}),  the column which $\mathfrak{R}^{\text{s}}$
uses (argmin column) the relevant column (``rel. h'')  
and the other column the irrelevant
column (``irr. h''). 
Let $\mathbf{T}$ be given, and let $\mathbf{L}$ be obtained from $\mathbf{T}$ by perturbing along the relevant / irrelevant column.

Without loss of generality, we assume that only one column of the learner's matrix $\mathbf{L}$ is perturbed at a time 
as other perturbations may be treated as compositions of such.

For each $\mathbf{T}$ and each column $h'$, we apply $330$ perturbations
on concentric circles around $\mathbf{T}$ (the disc), 
$90$ perturbations preserving the normalized-KL divergence
($\mathrm{KL}(\mathbf{e}/n,\mathscr{N}_{\text{vec}}(\mathbf{L}_{(\_,h')}/\mathbf{L}_{(\_,1)},1))$
used in $\mathfrak{R}^{\text{s}}$) from the target column 
and $50$ linear
interpolations 
with target column.
Each point in Fig.~\ref{fig:variation_on_matrix} is estimated using a size-$10^4$ Monte Carlo method using Proposition~\ref{prop:asympt_posterior}.
\begin{figure*}[t]
  \centering
  \includegraphics[scale=0.56, trim=10 0 20 0,clip]{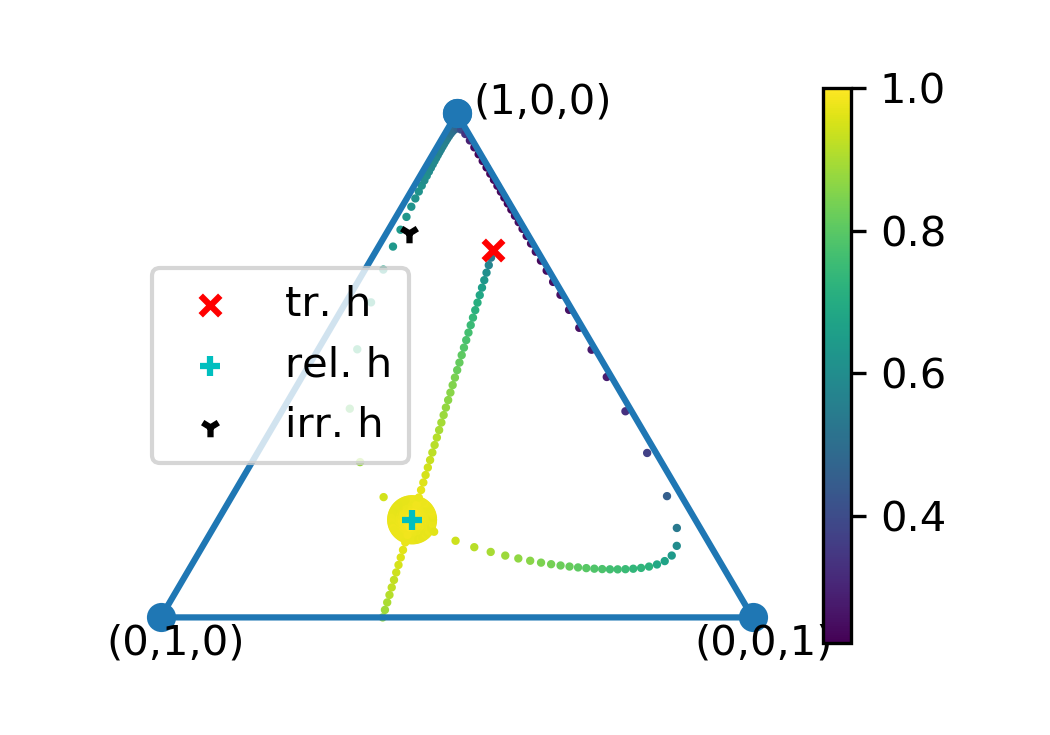}
  \includegraphics[scale=0.56, trim=20 0 10 0,clip]{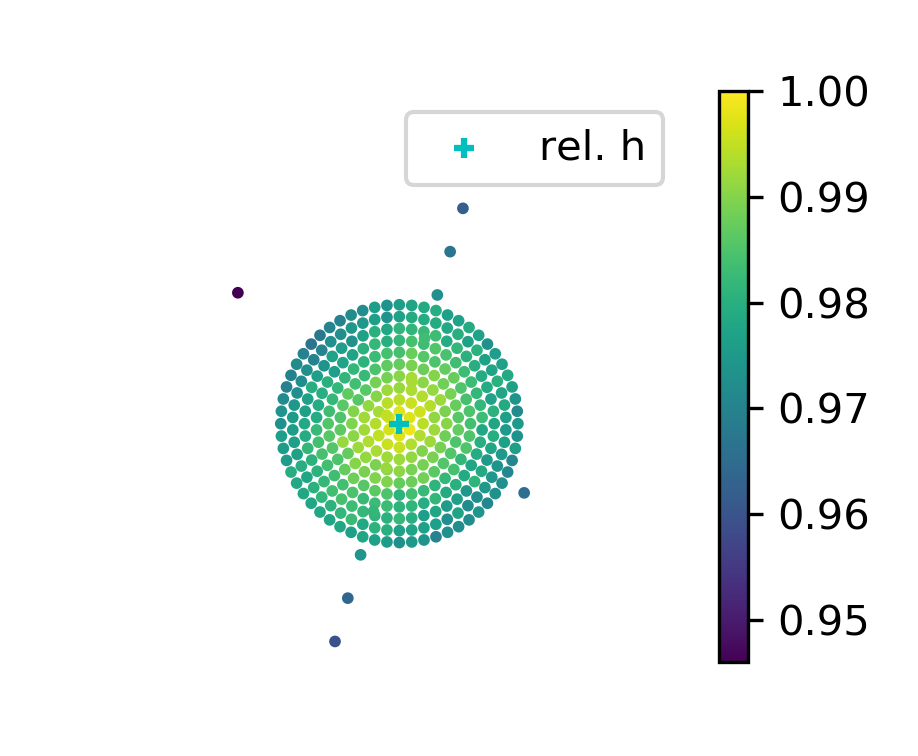}
  \includegraphics[scale=0.56, trim=0 0 20 0,clip]{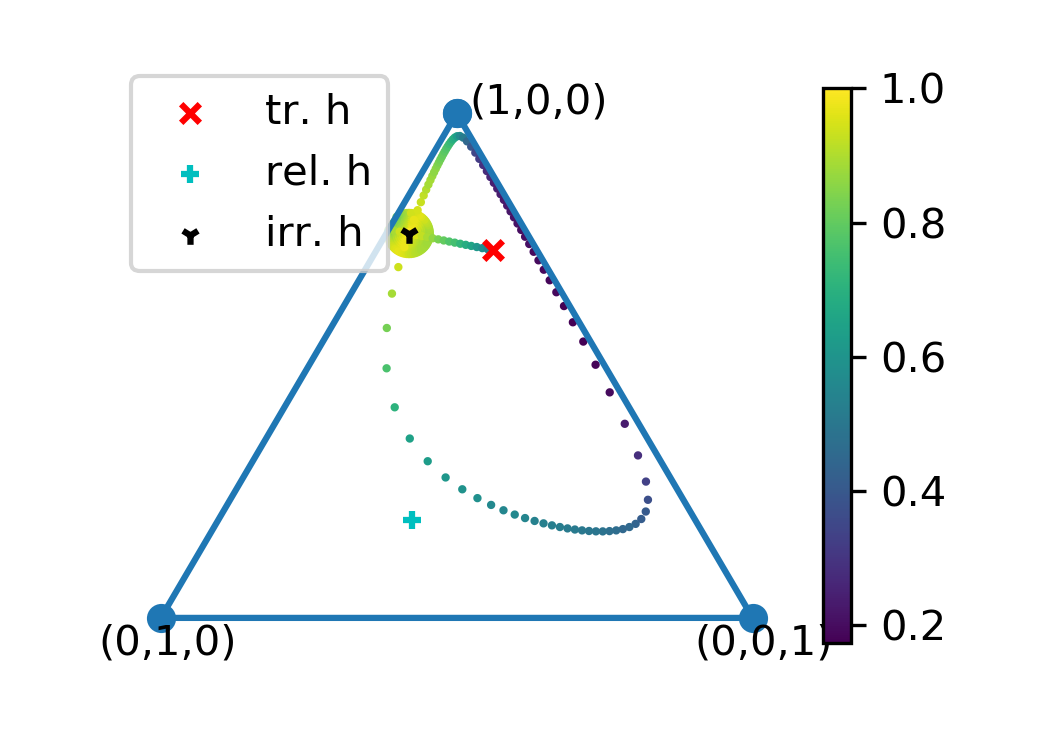}
  \includegraphics[scale=0.56, trim=10 0 10 0,clip]{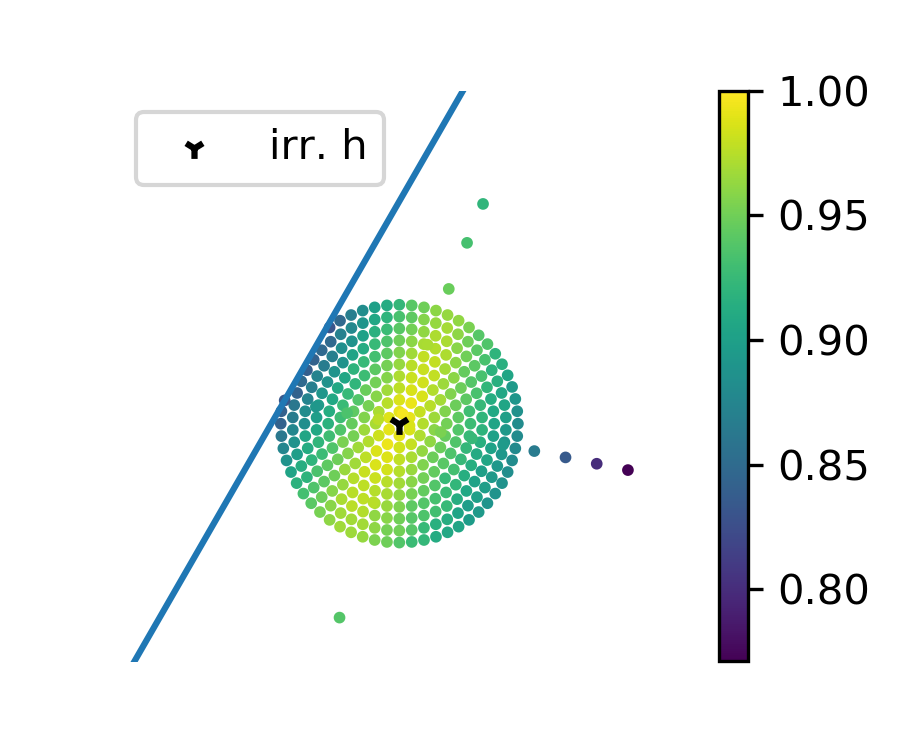}
  \vspace{-5mm}
  \caption{The perturbations on $\mathbf{M}_3$ along a column, and their
    zoomed-in version (with different color scale). The crosses shows
    the position of three normalized columns of $\mathbf{T}=\mathbf{M}_3$, the
    location of the dots
    represent the perturbed column of $\mathbf{T}$ (unperturbed columns are
    represented by crosses on figures which are not the center of disc)
    and whereas their colors depict the successful
    rate of inference. 
    Left two figures are perturbations on the
    irrelevant column. Right two figures are perturbations on the relevant column.}
  \label{fig:variation_on_matrix} 
  \vspace{-1mm}
\end{figure*}
From the graphs, we can see that the successful rate varies continuously on
perturbations, slow on one direction (the yellow strip crossing the center) and rapid on the perpendicular direction (color changed to blue rapidly).


\section{Grid World: an Application}
\label{sec:application}
Consider a $3\times 5$ grid world with two possible terminal goals, A and B, and a starting
position $S$ as shown below.
Let the reward at the terminal position $h_t$ be $R$. 
Assuming no step costs, the value of a grid that distanced $k$ from $h_t$ is then $R\times \gamma^k$ (in the RL-sense), where $\gamma\!<\!1$ is the discount factor.

\begin{center}
\vspace{-2mm}
\begin{tabular}{ |c|c|c|c|c|c|c| } 
 \hline
 A & & & \text{\hspace{0.11in}} & & & B \\ 
 \hline
 \text{\hspace{0.11in}}& \cellcolor{blue!15} \text{\hspace{0.11in}}& \cellcolor{blue!15} & \cellcolor{blue!15} $\Uparrow$ & \cellcolor{blue!15}  & \cellcolor{blue!15}\text{\hspace{0.11in}} & \text{\hspace{0.11in}}\\ 
 \hline
 & \cellcolor{blue!15} & \cellcolor{blue!15}\! $\Leftarrow$
 & \cellcolor{blue!15} S & \cellcolor{blue!15} \!$\Rightarrow$ & \cellcolor{blue!15} &  \\ 
 \hline
\end{tabular}
\end{center}
\vspace{-2mm}
Suppose the structure of the world is accessible to both agents whereas the true location of the goal $h_t$ is only known to a teacher. The teacher performs a sequence of actions to teach $h_t$ to a learner. At each round, there are three available actions, \textit{left}, \textit{up} and \textit{right}. After observing the teacher's actions, the learner updates their belief on $h_t$ accordingly. 

We now compare BI and SCBI agents' behaviours under this grid world.
In terms of previous notations, the hypothesis set $\mathcal{H} = \{A, B\}$, the data set $\mathcal{D} = \{\textit{left}, \textit{up}, \textit{right}\}$.
Let the learner's prior over $\mathcal{H}$ be $\theta_0 = (0.5, 0.5)$ and the true hypothesis $h_t$ be $ A$, then at each \textcolor{blue!35}{blue} grid,  agents' (unnormalized) initial matrix $\tiny{\mathbf{M} = \begin{blockarray}{ccc}
&A & B  \\
\begin{block}{c (cc)}
 \text{left} & \gamma^{(k-1)} &\gamma^{(k+1)}  \\
 \text{up}& \gamma^{(k-1)} & \gamma^{(k-1)} \\
  \text{right}& \gamma^{(k+1)} & \gamma^{(k-1)} \\
\end{block}
\end{blockarray}}$. Assume both BI teacher and SCBI teacher start with grid~$S$. Based on $\mathbf{M}$, the BI teacher would choose equally between \textit{left} and \textit{up}, whereas the SCBI teacher is more likely to choose \textit{left} as the teacher's likelihood matrix 
$\tiny{\mathbf{T} = \begin{pmatrix} 2/(3+3\gamma^2) &  2\gamma^2 /(3+3\gamma^2) \\ 1/3 & 1/3 \\ 2\gamma^2 /(3+3\gamma^2) & 2/(3+3\gamma^2) \end{pmatrix}}$, obtained from Sinkhorn scaling on $\mathbf{M}$, assigns higher probability for \textit{left}. Hence, comparing to the BI teacher who only aims for the final goal, the SCBI teacher tends to cooperate with the learner by selecting less ambiguous moves towards the goal. This point is aligned with the core idea 
of many existing models of cooperation in cognitive development \cite{jara2016naive,bridgers2019young}, pragmatic reasoning \cite{frank2012predicting,goodman2013knowledge} and robotics \cite{ho2016showing,fisac2017pragmatic}.

Moreover, even under the same teaching data, the SCBI learner is more likely to infer $h_t$ than the BI learner.
For instance, given the teacher's trajectory $\{\textit{left}, \textit{up}\}$, 
the left plot in Fig.~\ref{fig:grid_world} shows the SCBI and BI learners' posteriors on the true hypothesis $h_t$. 
Hence, comparing to the BI learner who reads the teacher's action literally, 
the SCBI learner interprets teacher's data corporately by updating belief sequentially after each round.

\begin{figure}[t]
  \centering
  \includegraphics[scale=0.08]{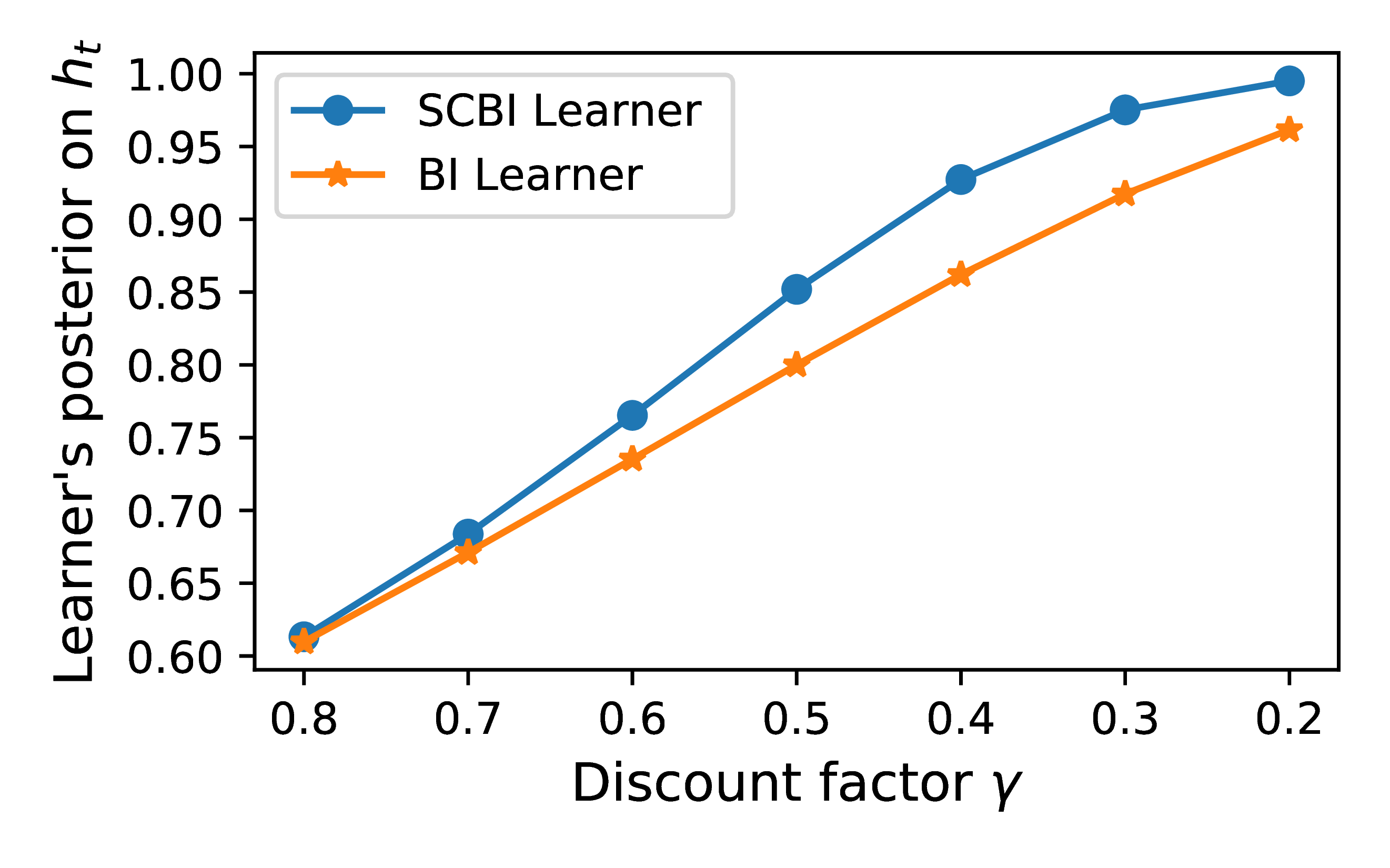}
  \includegraphics[scale=0.08]{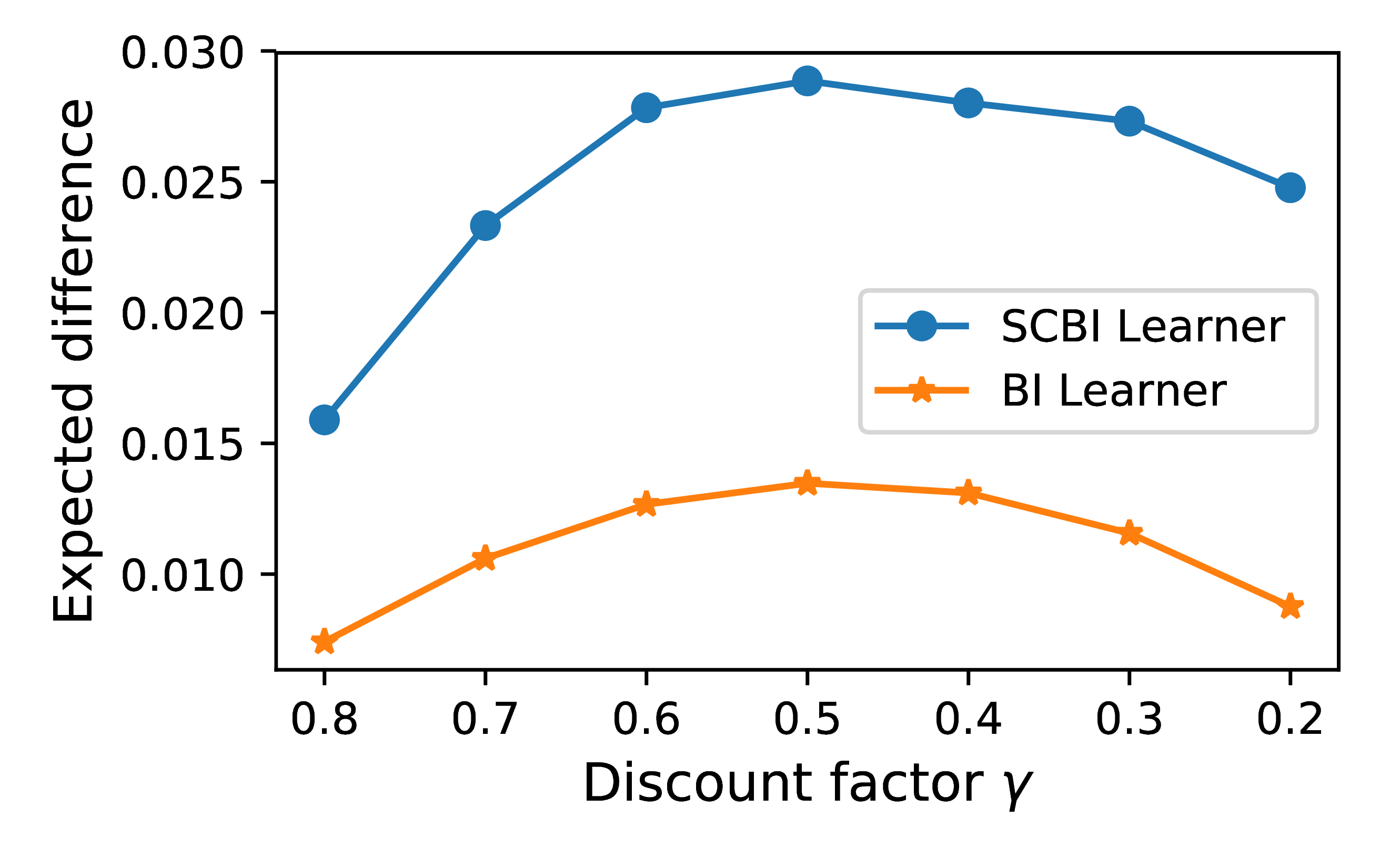}
    \vspace{-3mm}
  \caption{The top plot demonstrates that both BI and SCBI converge to the true hypothesis with SCBI having higher sample efficiency. 
  The bottom plot shows that both BI and SCBI agents are robust to perturbations with SCBI relatively less stable.}
  \vspace{-3mm}
  \label{fig:grid_world}
\end{figure}

Regarding the stability, consider the case where the learner's discount factor is either greater or less (with equal probability) 
than the teacher's by 0.1. The right plot in Fig.~\ref{fig:grid_world} illustrates the expected difference between the learner's posterior on $h_t$ 
after observing a teacher's trajectory of length $2$ and the teacher's estimation of it.

As discussed in Sec~\ref{sec:asy_roc_comp}, showing in Fig.~\ref{fig:roc},
as the board gets wider and the number of possible goals gets more (i.e. the number of hypotheses increases), 
the gap between posteriors of SCBI and BI learners will increase whereas the expected difference between agents for the same magnitude 
of perturbation will decrease. Thus, this example illustrates the consistency, sample efficiency, and stability of SCBI versus~BI.




\section{Related Work}\label{sec:related}

Literatures on Bayesian teaching \citep{Eaves2016a,Eaves2016c}, Rational Speech act theory \citep{frank2012predicting,goodman2013knowledge}, and machine teaching \citep{Zhu2015,zhu2013machine} consider the problem of selecting examples that improve a learner's chances of inferring a concept. These literatures differ in that they consider the single step, rather than sequential problem, that they do not formalize learners who reason about the teacher's selection process, and that they models without a mathematical analysis. 

The literature on pedagogical reasoning in human learning \citep{shafto2008teaching,shafto2012learning,Shafto2014} and cooperative inference \citep{YangYGWVS18,wang2018generalizing,wang2019mathematical} in machine learning formalize full recursive reasoning from the perspectives of both the teacher and the learner. These only consider the problem of a single interaction between the teacher and learner. 

The literature on curriculum learning considers sequential interactions with a learner by a teacher in which the teacher presents data in an ordered sequence \citep{bengio2009curriculum}, and traces back to various literatures on human and animal learning \citep{skinner1958teaching,elman1993learning}. Curriculum learning involves one of a number of methods for optimizing the sequence of data presented to the learner, most commonly starting with easier / simpler examples first and gradually moving toward more complex or less typical examples. Curriculum learning considers only problems where the teacher optimizes the sequence of examples, where the learner does not reason about the teaching. 




\section{Conclusions}\label{sec:conclusion}

Cooperation is central to learning in humans and machines. We set out to provide a mathematical foundation for sequential cooperative Bayesian inference (SCBI). We presented new analytic results demonstrating the consistency and asymptotic rate of convergence of SCBI. Empirically, we demonstrated the sample efficiency and stability to perturbations as compared to Bayesian inference, and illustrated with a simple reinforcement learning problem. We therefore provide strong evidence that SCBI satisfies basic desiderata. Future work will aim to provide mathematical proofs of the empirically observed efficiency and stability. 

\appendix

\section{Proof of Consistency Theorems}
\subsection{Proof of Theorem \ref{sm:thm:BI_consistency}}

\begin{thm}\label{sm:thm:BI_consistency}[Theorem~\ref{thm:BI_consistency},\citep[Theorem 7.115]{miescke2008decision}]
  In BI, the sequence of posteriors $(S_k)$ is strongly consistent at
  $\widehat{\theta}=\delta_h$ for each $h\in\mathcal{H}$, with arbitrary choice
  of an interior point $\theta_0\in(\mathcal{P}(\mathcal{H}))^\circ$
  (i.e. $\theta_0(h)>0$ for all $h\in\mathcal{H}$) as prior.
\end{thm}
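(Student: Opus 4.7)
The plan is to reduce strong consistency to the Strong Law of Large Numbers (SLLN) via posterior log-odds. Because $\mathcal{H}$ is finite, strong consistency at $\widehat{\theta}=\delta_h$ is equivalent to $\Theta_k(h)\xrightarrow{\text{a.s.}} 1$: the identity $\|\Theta_k-\delta_h\|_1 = 2(1-\Theta_k(h))$ converts pointwise convergence of the $h$-component into $L^1$-convergence on $\Delta^{m-1}$, and $\sum_{h''}\Theta_k(h'')=1$ lets us chase $\Theta_k(h)\to 1$ by ruling out every competing $h'\neq h$.

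First I iterate Bayes' rule. Since $\mathbf{M}$ is fixed in time, telescoping $B_{d_k}\circ\cdots\circ B_{d_1}$ yields the classical closed form
\begin{equation*}
\log\frac{S_k(h'\mid d_1,\dots,d_k)}{S_k(h\mid d_1,\dots,d_k)}
= \log\frac{\theta_0(h')}{\theta_0(h)} + \sum_{i=1}^{k}\log\frac{\mathbf{M}_{(d_i,h')}}{\mathbf{M}_{(d_i,h)}}.
\end{equation*}
Under $\widehat{\theta}=\delta_h$, the data $\mathrm{D}_i$ are i.i.d.\ with distribution $\Mcol{M}{h}$, so SLLN applied to the i.i.d.\ summands gives
\begin{equation*}
\frac{1}{k}\sum_{i=1}^{k}\log\frac{\mathbf{M}_{(\mathrm{D}_i,h')}}{\mathbf{M}_{(\mathrm{D}_i,h)}}
\xrightarrow{\text{a.s.}} -\mathrm{KL}\!\left(\Mcol{M}{h},\,\Mcol{M}{h'}\right).
\end{equation*}
Assumption (ii) states that $\Mcol{M}{h}$ and $\Mcol{M}{h'}$ are not scalar multiples; since both are already normalized probability vectors, this forces them to be distinct probability distributions, and Gibbs' inequality gives strict positivity of the KL divergence. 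The partial sums thus tend to $-\infty$ a.s., the $\log(\theta_0(h')/\theta_0(h))$ prefactor is finite because $\theta_0$ is interior, and consequently $\Theta_k(h')/\Theta_k(h)\to 0$ a.s. Combined with the finiteness of $\mathcal{H}$, this yields $\Theta_k(h)\to 1$ a.s.

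The main obstacle is handling possible zeros in $\mathbf{M}$, which would make the log-ratio equal $-\infty$ on some outcomes and require care in applying SLLN. I would split by the support relation between $\Mcol{M}{h}$ and $\Mcol{M}{h'}$. If $\mathbf{M}_{(d,h')}=0$ for some $d$ with $\mathbf{M}_{(d,h)}>0$, then $\mathrm{D}_i=d$ occurs eventually with probability one and $\Theta_k(h')$ is deterministically set to $0$ thereafter; equivalently the summand has expectation $-\infty$ and SLLN in its extended form still gives divergence to $-\infty$. Otherwise, the support of $\Mcol{M}{h}$ is contained in that of $\Mcol{M}{h'}$, the summands are integrable real-valued random variables (using the convention $0\log 0=0$ outside the support of $\Mcol{M}{h}$), and the SLLN argument above applies verbatim. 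A brief remark can note that this finite, i.i.d., dominated setup falls directly within the general framework of Miescke–Liese's Theorem~7.115 (with counting measure on $\mathcal{D}$), so the cited consistency result applies, and the derivation above both matches it and supplies the rate information needed later for Theorem~\ref{thm:roc_bi}.
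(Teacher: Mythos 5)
Your proposal is correct and follows essentially the same route as the paper: telescope Bayes' rule into the posterior log-odds, apply the strong law of large numbers to obtain $\frac{1}{k}\log(\Theta_k(h')/\Theta_k(h))\to-\mathrm{KL}(\Mcol{M}{h},\Mcol{M}{h'})$ almost surely, and invoke the distinct-columns assumption to make each such KL divergence strictly positive. The only (immaterial) differences are that the paper phrases the SLLN step via the empirical frequencies $\alpha_k(d)/k\to\mathbf{M}_{(d,h)}$ rather than directly via the i.i.d.\ log-ratio summands, and that your zero-support case analysis is unnecessary since the paper assumes $\mathbf{M}$ is strictly positive.
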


\begin{proof}
  We follow the same line as discussed right after this theorem in the paper.
  Let $\theta_0=(\theta_0(1),\theta_0(2),\dots,\theta_0(n))$ be the original prior, and let
  $\theta_k=(\theta_{k}(1),\theta_{k}(2),\dots,\theta_{k}(m))$ be the posterior after having $k$
  data points $d_1,d_2,\dots, d_k$. Then for $l\le k$ and $h\in\mathcal{H}$,
  the posterior $\theta_{l}(i)=\left( \mathscr{N}_{\text{vec}}(\mathrm{diag}(\mathbf{M}_{(d_l,\_)})\theta_{l-1})
  \right)(i)$ by Bayes' rule. 
  In other words,
  \begin{equation}
    \theta_{l}(i)=\dfrac{\mathbf{M}_{(d_l,i)}[\theta_{(l-1)}(i)]}
    {\sum_{j=1}^{m}\mathbf{M}_{(d_l,j)}[\theta_{(l-1)}(j)]}.
  \end{equation}
  This is a recursive formula, so we may move forward to calculate $\theta_{l}(i)$
  from a smaller round index $\theta_{t}(i)$ with $t<l$:
  \begin{equation}
    \theta_{l}(i)=\dfrac{\left[\prod_{s=t}^{l}\mathbf{M}_{(d_s,i)}\right]\theta_{(t-1)}(i)}
    {\sum_{j=1}^{m}\left[\prod_{s=t}^{l}\mathbf{M}_{(d_s,j)}\right]\theta_{(t-1)}(j)}.\nonumber
  \end{equation}
  This recursion stops at prior $\theta_0$, so we have an explicit expression of
  $\theta_k$:
  \begin{equation}
    \label{sm:eq:BI_explicit_formula}
    \theta_{k}(i)=\dfrac{\left[\prod_{s=1}^{k}\mathbf{M}_{(d_s,i)}\right]\theta_{0}(i)}
    {\sum_{j=1}^{m}\left[\prod_{s=1}^{k}\mathbf{M}_{(d_s,j)}\right]\theta_{0}(j)}. 
  \end{equation}
  It can be seen that for each hypothesis $i$, the denominator of
  the $k$-th  posterior on $i$ are the same, so we have
  \begin{equation}
    \label{sm:eq:posterior_ratio}
    \dfrac{\theta_{k}(i)}{\theta_{k}(h)}=
    \dfrac{\left[\prod_{s=1}^{k}\mathbf{M}_{(d_s,i)}\right]\theta_{0}(i)}
    {\left[\prod_{s=1}^{k}\mathbf{M}_{(d_s,h)}\right]\theta_{0}(h)}.
  \end{equation}
  So we define $\alpha_k(d)$ to be the frequency of the occurrence of
  data $d$ in the first $k$ rounds of a episode. And then
  \begin{equation}\small
    \label{sm:eq:log_posterior_ratio}
    \log\left(\dfrac{\theta_{k}(i)}{\theta_{k}(h)}\right)=\log\left(\dfrac{\theta_{0}(i)}{\theta_{0}(h)}\right)+
    \sum_{d=1}^{n}\alpha_k(d)\log\left(\dfrac{\mathbf{M}_{(d,i)}}{\mathbf{M}_{(d,h)}}\right).
  \end{equation}
  Since we know that the data $(d_i)$ in the model is sampled following the
  i.i.d. with distribution $\mathbf{M}_{(\_,h)}$, then for a fixed $k$,
  $\alpha_k(i)$ follows the multinomial distribution with parameter
  $\mathbf{M}_{(\_,h)}$.

  By the strong law of large numbers,
  $\frac{\alpha_k(i)}{k}\rightarrow\mathbf{M}_{(i,h)}$ almost surely as
  $k\rightarrow\infty$. 
  Thus, when we rewrite the sample values 
  to random variable version,
  \begin{equation}
    \label{sm:eq:average_log_posterior_1}
    \dfrac{1}{k}\log\left(\dfrac{\Theta_{k}(i)}{\Theta_{k}(h)}\right)\rightarrow
    \sum_{d=1}^{n}\mathbf{M}_{(d,h)}\log\left(\dfrac{\mathbf{M}_{(d,i)}}{\mathbf{M}_{(d,h)}}\right)
    \quad\text{a.s.}
  \end{equation}
  
  That is,
  \begin{equation}
    \label{sm:eq:average_log_posterior}
    \dfrac{1}{k}\log\left(\dfrac{\Theta_{k}(i)}{\Theta_{k}(h)}\right)\longrightarrow
    -\mathrm{KL}\left(\mathbf{M}_{(\_,h)},\mathbf{M}_{(\_,i)}\right)
    \quad\text{a.s.}
  \end{equation}

  By the assumption in Section~2 of the paper that $\mathbf{M}$ has distinct
  columns, the KL divergence between the $i$-th column and the $h$-th column
  is strictly positive, thus almost surely,
  $\log\left(\dfrac{\Theta_{k}(i)}{\Theta_{k}(h)}\right)\rightarrow-\infty$, or
  equivalently, $\dfrac{\Theta_{k}(i)}{\Theta_{k}(h)}\rightarrow0$, for any $i\ne h$.

  Therefore, $\theta_k=(\theta_{k}(1),\theta_{k}(2),\dots,\theta_{k}(m))\rightarrow\delta_h$
  almost surely, equivalently, BI at $\widehat{\theta}$ is strongly consistent.
\end{proof}

\subsection{Proof of Theorem~\ref{thm:roc_bi}}

\begin{thm}[Theorem~\ref{thm:roc_bi}]
  \label{sm:thm:RoC_BI}  
  In BI, with $\widehat{\theta}=\delta_h$ for some $h\in\mathcal{H}$,
  let $\Theta_k(h)(\mathrm{D}_1,\dots,\mathrm{D}_k):=S_k(h|\mathrm{D}_1,\dots,\mathrm{D}_k)$ be the $h$-component of
  posterior given $\mathrm{D}_1,\dots,\mathrm{D}_k$ as random variables valued in $\mathcal{D}$. Then
  $\dfrac{1}{k}\log\left(\dfrac{\Theta_{k}(h)}{1-\Theta_{k}(h)}\right)$ converges to a constant
  $\mathrm{min}_{h'\ne h}\left\{\mathrm{KL}(\Mcol{M}{h},\Mcol{M}{h'})\right\}$
  almost surely.
\end{thm}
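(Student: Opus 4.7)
The plan is to reuse the explicit formula derived in the proof of Theorem~\ref{sm:thm:BI_consistency} for the ratio of posterior coordinates. Starting from the identity
\[
\frac{\Theta_k(h)}{1-\Theta_k(h)}
= \left(\sum_{h'\ne h}\frac{\Theta_k(h')}{\Theta_k(h)}\right)^{-1},
\]
the log-odds becomes $-\log\!\bigl(\sum_{h'\ne h}\Theta_k(h')/\Theta_k(h)\bigr)$, so everything reduces to controlling the pairwise ratios that were already handled in the consistency proof.

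For each fixed $h'\ne h$, equation~\eqref{sm:eq:log_posterior_ratio} expresses $\log(\Theta_k(h')/\Theta_k(h))$ as a linear combination of the empirical counts $\alpha_k(d)$, and the strong law of large numbers applied to i.i.d.\ samples from $\Mcol{M}{h}$ gives $\alpha_k(d)/k\to \mathbf{M}_{(d,h)}$ almost surely. Hence $\frac{1}{k}\log(\Theta_k(h')/\Theta_k(h))\to -K_{h'}$ a.s., where $K_{h'}:=\mathrm{KL}(\Mcol{M}{h},\Mcol{M}{h'})>0$ by assumption (ii). Since $\mathcal{H}$ is finite, intersecting these $m-1$ almost sure events gives a single almost sure event on which all $m-1$ limits hold simultaneously.

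The final step is a sandwich on the sum:
\[
\max_{h'\ne h}\frac{\Theta_k(h')}{\Theta_k(h)}
\;\le\; \sum_{h'\ne h}\frac{\Theta_k(h')}{\Theta_k(h)}
\;\le\; (m-1)\max_{h'\ne h}\frac{\Theta_k(h')}{\Theta_k(h)}.
\]
Applying $\frac{1}{k}\log(\cdot)$, the upper bound differs from the lower by $\frac{1}{k}\log(m-1)\to 0$, while the scaled log of the maximum equals the maximum of the scaled logs, which converges almost surely to $-\min_{h'\ne h}K_{h'}$. Therefore $\frac{1}{k}\log\!\bigl(\sum_{h'\ne h}\Theta_k(h')/\Theta_k(h)\bigr)\to -\min_{h'\ne h}K_{h'}$, and negating yields the claim. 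The only genuinely non-routine point is the sandwich argument, but with finitely many hypotheses and strictly positive KL gaps it is essentially a ``log-sum-exp is dominated by its max'' observation; no uniform integrability or tail concentration beyond SLLN is needed.
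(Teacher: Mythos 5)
Your argument is correct and is essentially the paper's own proof: both reduce the log-odds to the pairwise ratios $\Theta_k(h')/\Theta_k(h)$ handled in the consistency proof via Eq.~\eqref{sm:eq:log_posterior_ratio} and the SLLN, then sandwich $1-\Theta_k(h)$ between a maximal term and $(m-1)$ times it, with the $\frac{1}{k}\log(m-1)$ correction vanishing. Your use of $\max_{h'\ne h}$ in place of the paper's fixed argmin index $\eta$ is a cosmetic (arguably slightly cleaner, since it sidesteps ties in the KL minimizer) variant of the same step.
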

\begin{proof}
  Follow the previous proof. First recall that
  $\dfrac{1}{k}\log\left(\dfrac{\Theta_{k}(i)}{\Theta_{k}(h)}\right)\rightarrow
  -\mathrm{KL}(\Mcol{M}{h},\Mcol{M}{i})$ almost surely. Let
  $\eta:=\mathrm{argmin}_{i\ne h}
  \left\{\mathrm{KL}(\Mcol{M}{h},\Mcol{M}{i})\right\}$, then $\Theta_{k}(\eta)$ decays
  slowest among $\left\{\Theta_{k}(i):i\ne h\right\}$ almost surely.

  Therefore, for the sample values $\theta_k$'s, asymptotically,
  \begin{equation}
    \label{sm:eq:RoC_bounds}\small
    \dfrac{1}{k}\log\left[\dfrac{\theta_{k}(\eta)}{\theta_{k}(h)}\right]\!\le\!
    \dfrac{1}{k}\log\left[\dfrac{1-\theta_{k}(h)}{\theta_{k}(h)}\right]\!\le\!
    \dfrac{1}{k}\log\left[\dfrac{(m-1)\theta_{k}(\eta)}{\theta_{k}(h)}\right].\nonumber
  \end{equation}
  So when we are taking limits $k\rightarrow\infty$, with probability one, we
  have
  \begin{eqnarray}
    \label{sm:eq:estimation_log_posteriors}\small
    &&-\mathrm{KL}(\Mcol{M}{h},\Mcol{M}{\eta})\le
    \lim_{k\rightarrow\infty}\dfrac{1}{k}\log\left[\dfrac{1-\theta_{k}(h)}{\theta_{k}(h)}\right]\nonumber\\
    &&\le
    \lim_{k\rightarrow\infty}-\mathrm{KL}(\Mcol{M}{h},\Mcol{M}{\eta})+\dfrac{1}{k}\log(m-1)\nonumber\\
    &&=-\mathrm{KL}(\Mcol{M}{h},\Mcol{M}{\eta}).
  \end{eqnarray}
\end{proof}

\subsection{Proof of Theorem~\ref{thm:consistency_SCBI}}

To prove Theorem~\ref{sm:thm:consistency_SCBI}, we need the following lemmas.

\begin{smlem}
  \label{sm:lem:expectation_inequality}
  Given a fixed hypothesis $h\in\mathcal{H}$, for any $\mu\in\mathcal{P}(\Delta^{m-1})$,
  \begin{equation}
    \label{sm:eq:e_mu_smaller_e_psi_mu}
    \mathbb{E}_{\mu}(\theta(h))\le\mathbb{E}_{\Psi(h)(\mu)}(\theta(h)).
  \end{equation}
  equality happens when
  $\mathbf{M}^{\left\langle n\mathbf{x}\right\rangle}_{\phantom{~~}(i,h)}
  =\mathbf{M}^{\left\langle n\mathbf{x}\right\rangle}_{\phantom{~~}(j,h)}$ for any
  $i$, $j$ and $\mu$-almost everywhere for $\mathbf{x}\in\Delta^{m-1}$.
\end{smlem}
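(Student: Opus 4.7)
The plan is to reduce the inequality to a pointwise statement via a pushforward change of variables, then apply the Cauchy--Schwarz (equivalently QM--AM) inequality to the $h$-th column of the Sinkhorn-scaled matrix. First I would expand $\mathbb{E}_{\Psi(h)(\mu)}[\theta(h)]$ using Definition~\ref{def:the_operator}: taking $f(\theta) = \theta(h)$ and applying the pushforward rule $\int f\, d(T_{d*}\mu) = \int f\circ T_d\, d\mu$ to the integrand in Equation~\eqref{eq:the_operator} gives
$$\mathbb{E}_{\Psi(h)(\mu)}[\theta(h)] = \int_{\Delta^{m-1}} \sum_{d\in\mathcal{D}} \tau_d(\theta')\, T_d(\theta')(h)\, d\mu(\theta').$$

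Next I would use the explicit formulas $T_d(\theta')(h) = \mathbf{M}^{\left\langle n\theta'\right\rangle}_{(d,h)}$ and $\tau_d(\theta') = \mathbf{M}^{\left\langle n\theta'\right\rangle}_{(d,h)}/(n\theta'(h))$ to rewrite the integrand as
$$\sum_{d\in\mathcal{D}} \tau_d(\theta')\, T_d(\theta')(h) \;=\; \frac{1}{n\theta'(h)}\sum_{d=1}^{n} \left(\mathbf{M}^{\left\langle n\theta'\right\rangle}_{(d,h)}\right)^{\!2}.$$
Since Sinkhorn scaling forces the $h$-th column sum of $\mathbf{M}^{\left\langle n\theta'\right\rangle}$ to equal $n\theta'(h)$, Cauchy--Schwarz on these $n$ nonnegative numbers yields
$$\sum_{d=1}^{n}\left(\mathbf{M}^{\left\langle n\theta'\right\rangle}_{(d,h)}\right)^{\!2} \;\ge\; \frac{(n\theta'(h))^{2}}{n},$$
so the integrand is pointwise bounded below by $\theta'(h)$.

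Integrating against $\mu$ then yields $\mathbb{E}_{\Psi(h)(\mu)}[\theta(h)] \ge \int \theta'(h)\, d\mu(\theta') = \mathbb{E}_{\mu}[\theta(h)]$, which is the desired inequality. The equality case of Cauchy--Schwarz forces all entries of the $h$-th column of $\mathbf{M}^{\left\langle n\theta'\right\rangle}$ to coincide, and this must hold for $\mu$-almost every $\theta'$ for the integrated inequality to be tight, reproducing the stated condition exactly. I do not anticipate a real obstacle: the positivity of $\mathbf{M}$ makes the Sinkhorn scaling well-defined and keeps $\theta'(h)>0$ so that the divisions are legitimate, and the continuity/smoothness of $T_d$ and $\tau_d$ recorded before Definition~\ref{def:the_operator} guarantees that the integrand is measurable. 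The only point requiring mild care is the justification of the pushforward change of variables termwise in $d$, which is a routine application of Fubini combined with the definition of the pushforward measure.
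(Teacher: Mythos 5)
Your proof is correct and follows essentially the same route as the paper's: the same pushforward change of variables reducing $\mathbb{E}_{\Psi(h)(\mu)}[\theta(h)]$ to $\int \frac{1}{n\theta(h)}\sum_{d}T_d(\theta)(h)^2\,\mathrm{d}\mu(\theta)$, the same use of the Sinkhorn column-sum identity $\sum_d T_d(\theta)(h)=n\theta(h)$, and the same Cauchy--Schwarz step with the identical equality characterization. No substantive differences to report.
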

\begin{remark}
  This lemma shows that the expectation of $\theta(h)$, in each
  round is increasing, thus the sequence obtained from all the rounds
  has an limit since the sequence is monotonic and upper bounded by 
  $1$. To prove the theorem we, then just need to show the limit is $1$.
\end{remark}

\begin{proof}
  
  We start from the right hand side of Eq.~\ref{sm:eq:e_mu_smaller_e_psi_mu}.
  Let $\Delta$ denote $\Delta^{m-1}$ for short.
  \begin{eqnarray}
    \label{sm:eq:calculation}
    && \mathbb{E}_{\Psi(h)(\mu)}(\theta(h))\nonumber\\
    &=&\!\!\!\!
    \int_{\Delta}\theta(h)\mathrm{d}(\Psi(h)(\mu))(\theta)\nonumber\\
    &=&\!\!\!\!\int\limits_{\Delta}\sum_{d=1}^{n}
        \tau_d(T_d^{-1}(\theta))\theta(h)\mathrm{d}(T_{d\ast}(\mu))(\theta)\nonumber\\
    &=&\!\!\!\!\sum_{d=1}^{n}\int\limits_{\Delta}\tau_d(\theta)(T_d(\theta))(h)\mathrm{d}
        \left(T_{d\ast}(\mu)\right)(T_d(\theta))\nonumber\\
    &=&\!\!\!\!\sum_{d=1}^{n}\int\limits_{\Delta}\tau_d(\theta)(T_d(\theta))(h)\mathrm{d}
        \mu(\theta)\nonumber\\
    &=&\!\!\!\!\sum_{d=1}^{n}\int\limits_{\Delta}\dfrac{T_d(\theta)(h)}{n\theta(h)}T_d(\theta)(h)
        \mathrm{d}\mu(\theta)\nonumber\\
    &=&\!\!\!\!\int\limits_{\Delta}\sum_{d=1}^{n}\dfrac{T_d(\theta)(h)^2}{n\theta(h)}
        \mathrm{d}\mu(\theta)\nonumber
  \end{eqnarray}
  In the calculation, the bijectivity of $T_d$ and the formula
  $(T_{d\ast}(\mu))(E)=\mu(T_d^{-1}(E))$ is used (and will be used 
  repetitively later).
  
  Consider that by definition of the bijection $T_d$, the sum $\sum_{d=1}^{n}T_d(\theta)(h)
  = n\theta(h)$ ($T_d$ is the $d$-th row of Sinkhorn scaling by column sums
  $n\theta$).
  Thus
  \begin{eqnarray}
    \label{sm:eq:calculation2}
    \mathbb{E}_{\Psi(h)(\mu)}(\theta(h)) 
    &=&\!\!\!\!\int\limits_{\Delta}\dfrac{\sum_{d=1}^{n}T_d(\theta)(h)^2}{\sum_{d=1}^{n}T_d(\theta)(h)}
        \mathrm{d}\mu(\theta)\nonumber\\
    &\ge&\!\!\!\!\int\limits_{\Delta}\dfrac{\left(\sum_{d=1}^{n}
        T_d(\theta)(h)\right)^2}{n\sum_{d=1}^{n}T_d(\theta)(h)}
        \mathrm{d}\mu(\theta)\nonumber\\
    &=&\!\!\!\!\int\limits_{\Delta}\dfrac{1}{n}\sum_{d=1}^{n}T_d(\theta)(h)
        \mathrm{d}\mu(\theta)\nonumber\\
    &=&\!\!\!\!\int\limits_{\Delta}\theta(h)
        \mathrm{d}\mu(\theta)\nonumber\\
    &=&\mathbb{E}_{\mu}(\theta(h)),
  \end{eqnarray}
  where $\sum_{d=1}^{n}T_d(\theta)(h)^2\ge\dfrac{1}{n}\left(
    \sum_{d=1}^{n}T_d(\theta)(h)\right)^2$ by Cauchy-Schwarz inequality, with
  equality achieved if and only if $T_d(\theta)(h)$ is constant on $d$.
  Therefore, the equality of Eq.~\eqref{sm:eq:calculation2} is achieved when
  $\mathbf{M}^{\left\langle n\mathbf{x}\right\rangle}_{\phantom{~~}(d,h)}$ is
  constant on $d$, $\mu$-almost everywhere for $\mathbf{x}\in\Delta^{m-1}$.
  
\end{proof}
\begin{figure}[!ht]
    \centering
    \begin{tikzpicture}
      \foreach \i/\t in {0/0, 4/1, 1/a, 3.2/b} {
       \draw[dashed] (-2.5,\i) -- (3,\i);
       \node at (-3.3,\i) {$\theta(h)=\t$};}
       \node at (0, 1.8) {\small Lemma~\ref{sm:lem:middle_vacuum}};
       \node at (0, 0.5) {Lemma~\ref{sm:lem:expectation_property}};
       \draw[stealth-stealth] (2,3.2) -- (2,4) node[pos=0.5,fill=white]{$\epsilon$};
       \draw (0,4) -- (-2.31,0) -- (2.31,0) -- (0,4);
    \end{tikzpicture}
    \caption{Sketch of $\Delta^{m-1}$, 
    for a general $\theta$, its $y$-coordinate is
    $\theta(h)$. The levels are compatible with proof of 
    Theorem~\ref{sm:thm:consistency_SCBI}. 
    Lemma~\ref{sm:lem:expectation_property} and 
    Lemma~\ref{sm:lem:middle_vacuum} are located where they
    contribute to prove the vanishing of measure in the limit.
    }
    \label{sm:fig:sketch}
\end{figure}
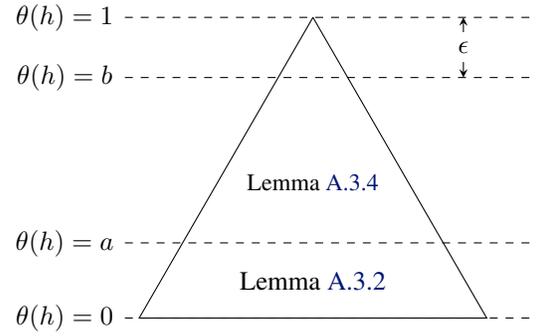
The following lemmas helps showing that
the measure $\mu_k$ of the complement of a neighborhood of $\delta_h\in\mathcal{P}(\mathcal{H})$ has limit $0$.
\begin{smlem}
  \label{sm:lem:expectation_property}
  Given $\mathbf{M}$, $h\in\mathcal{H}$ and prior
  $\mu_0\in\mathcal{P}(\Delta^{m-1})$ satisfying assumptions, 
  we have
  \begin{equation}
    \label{sm:eq:invariant_expectation}
    \mathbb{E}_{\mu_k}\left(\dfrac{\theta(h')}{\theta(h)}\right)=
    \mathbb{E}_{\mu_0}\left(\dfrac{\theta(h')}{\theta(h)}\right)  
  \end{equation}
  for any $k\ge0$ and any $h'\ne h$.
\end{smlem}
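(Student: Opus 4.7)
The plan is to reduce the $k$-step claim to the one-step invariance
\[
\mathbb{E}_{\Psi(h)(\mu)}\!\left(\frac{\theta(h')}{\theta(h)}\right) \;=\; \mathbb{E}_{\mu}\!\left(\frac{\theta(h')}{\theta(h)}\right)
\]
for every Borel measure $\mu$ on $\Delta^{m-1}$ that assigns full mass to $\{\theta(h)>0\}$. Given this, the lemma follows by a trivial induction on $k$ using the Markov recursion $\mu_{k+1} = \Psi(h)(\mu_k)$ from Proposition~\ref{prop:markov_property}; the base case $k=0$ is tautological, and the support hypothesis is preserved along the chain because each $T_d$ is a bijection of $\Delta^{m-1}$ mapping the interior to itself.

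For the one-step identity, the approach is the same change-of-variables template used in the proof of Lemma~\ref{sm:lem:expectation_inequality}: unfold Definition~\ref{def:the_operator}, pull the integral back against $\mu$ via the substitution $\theta = T_d(\vartheta)$ using bijectivity of $T_d$, and substitute the explicit expression $\tau_d(\vartheta) = T_d(\vartheta)(h)/(n\vartheta(h))$. The key algebraic feature is that the combination $\tau_d(\vartheta) \cdot T_d(\vartheta)(h')/T_d(\vartheta)(h)$ collapses because the factor $T_d(\vartheta)(h)$ sitting in the numerator of $\tau_d$ cancels the denominator produced by $\theta(h')/\theta(h)$ after substitution, leaving
\[
\mathbb{E}_{\Psi(h)(\mu)}\!\left(\frac{\theta(h')}{\theta(h)}\right) \;=\; \int_{\Delta^{m-1}} \frac{1}{n\vartheta(h)} \sum_{d=1}^n T_d(\vartheta)(h') \,d\mu(\vartheta).
\]

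The only non-algebraic input is the observation that $\sum_{d=1}^n T_d(\vartheta)(h')$ is the $h'$-column sum of the Sinkhorn-scaled matrix $\mathbf{M}^{\left\langle n\vartheta\right\rangle}$, and therefore equals $n\vartheta(h')$ by the column-marginal constraint of Sinkhorn scaling. This is the exact analogue, applied to column $h'$ instead of column $h$, of the identity already invoked in Lemma~\ref{sm:lem:expectation_inequality}. Plugging it into the display above, the integrand reduces to $\vartheta(h')/\vartheta(h)$, which is precisely $\mathbb{E}_{\mu}(\theta(h')/\theta(h))$.

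I do not expect any real obstacle here. The illuminating contrast with Lemma~\ref{sm:lem:expectation_inequality} is that the ratio $\theta(h')/\theta(h)$ is tailored to cancel the nonlinearity $T_d(\vartheta)(h)$ built into $\tau_d$, so no Cauchy--Schwarz bound or other inequality is needed: an exact equality drops out of the column-marginal identity. The residual points to verify are measure-theoretic --- integrability of $\theta(h')/\theta(h)$ against $\mu_k$ and justification of the $T_d$ change of variables --- both handled by the smoothness and bijectivity of $T_d$ on the interior, stated prior to Definition~\ref{def:the_operator}.
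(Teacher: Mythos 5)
Your proposal is correct and follows essentially the same route as the paper's own proof: reduce to the one-step identity by induction, unfold $\Psi(h)$, change variables via $T_d$, cancel the $T_d(\theta)(h)$ factor against $\tau_d$, and invoke the column-marginal constraint $\sum_{d}T_d(\theta)(h')=n\theta(h')$ of the Sinkhorn scaling. No substantive differences to report.
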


\begin{proof}
  It suffices to prove the $k=1$ case for a general $\mu_0$ (then we have the
  rest by induction).

  \begin{eqnarray}
    \label{sm:eq:expectation_of_ratio}
    &&\mathbb{E}_{\mu_1}\left(\dfrac{\theta(h')}{\theta(h)}\right)=
    \int\limits_{\Delta}\left(\dfrac{\theta(h')}{\theta(h)}\right)\mathrm{d}\mu_1(\theta)
    \nonumber\\
    &=&\int\limits_{\Delta}\left(\dfrac{\theta(h')}{\theta(h)}\right)\mathrm{d}(\Psi(h)(\mu_0))(\theta)\nonumber\\
    &=&\int\limits_{\Delta}\sum_{d\in\mathcal{D}}\tau_d(T_d^{-1}(\theta))\dfrac{\theta(h')}{\theta(h)}\mathrm{d}(T_{d\ast}(\mu_0))(\theta)\nonumber\\
    &=&\sum_{d\in\mathcal{D}}\int\limits_{\Delta}\tau_d(\theta)\dfrac{T_d(\theta)(h')}{T_d(\theta)(h)}\mathrm{d}(T_{d\ast}(\mu_0))(T_d(\theta))\nonumber\\
    &=&\sum_{d\in\mathcal{D}}\int\limits_{\Delta}\dfrac{T_d(\theta)(h)}{n\theta(h)}\dfrac{T_d(\theta)(h')}{T_d(\theta)(h)}\mathrm{d}(\mu_0)(\theta)\nonumber\\
    &=&\int\limits_{\Delta}\sum_{d\in\mathcal{D}}\dfrac{T_d(\theta)(h')}{n\theta(h)}\mathrm{d}(\mu_0)(\theta)\nonumber\\
    &=&\int\limits_{\Delta}\dfrac{\sum_{d\in\mathcal{D}}T_d(\theta)(h')}{n\theta(h)}\mathrm{d}(\mu_0)(\theta)\nonumber\\
    &=&\int\limits_{\Delta}\dfrac{n\theta(h')}{n\theta(h)}\mathrm{d}(\mu_0)(\theta)\nonumber\\
    &=&\mathbb{E}_{\mu_0}\left(\dfrac{\theta(h')}{\theta(h)}\right).
  \end{eqnarray}
\end{proof}

\begin{smlem}
  \label{sm:lem:linearity_of_Psi}
  The operator $\Psi(h)$ preserves convex combinations of probability measures,
  i.e., for positive $a_1,a_2,\dots,a_l$ with $\sum_{i=1}^{l}a_i=1$ and
  probability measures $\mu_1,\mu_2,\dots,\mu_l$,
  $$\Psi(h)\left(\sum_{i=1}^la_i\mu_i\right)=\sum_{i=1}^{l}a_i\Psi(h)(\mu_i).$$
\end{smlem}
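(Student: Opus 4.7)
The plan is to unfold the defining formula~\eqref{eq:the_operator} on each side and show that every operation involved is already linear in the input measure, so that affine combinations pass through $\Psi(h)$ automatically. Throughout, let $\mu:=\sum_{i=1}^l a_i\mu_i$ and fix a Borel set $E\subseteq\Delta^{m-1}$.

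First, I would verify that the push-forward is linear in the measure argument. For each $d\in\mathcal{D}$ and any Borel set $F\subseteq\Delta^{m-1}$,
\begin{equation*}
T_{d\ast}(\mu)(F)=\mu\bigl(T_d^{-1}(F)\bigr)=\sum_{i=1}^l a_i\,\mu_i\bigl(T_d^{-1}(F)\bigr)=\sum_{i=1}^l a_i\,T_{d\ast}(\mu_i)(F),
\end{equation*}
so $T_{d\ast}(\mu)=\sum_{i=1}^l a_i\,T_{d\ast}(\mu_i)$ as Borel measures on $\Delta^{m-1}$. Second, since the inner sum in~\eqref{eq:the_operator} is a finite sum indexed by $d\in\mathcal{D}$, and the integrand $\tau_d\circ T_d^{-1}$ is a non-negative bounded Borel function (it is a component of a probability distribution), the integral against a finite non-negative combination of measures equals the same combination of integrals.

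Putting these two observations together, I would compute
\begin{equation*}
\bigl(\Psi(h)(\mu)\bigr)(E)=\int_E\sum_{d\in\mathcal{D}}\tau_d\bigl(T_d^{-1}(\theta)\bigr)\,\mathrm{d}\bigl(T_{d\ast}\mu\bigr)(\theta)=\sum_{d\in\mathcal{D}}\sum_{i=1}^l a_i\int_E\tau_d\bigl(T_d^{-1}(\theta)\bigr)\,\mathrm{d}\bigl(T_{d\ast}\mu_i\bigr)(\theta),
\end{equation*}
and then swap the two finite sums (justified because both are finite) to regroup the expression as $\sum_{i=1}^l a_i\bigl(\Psi(h)(\mu_i)\bigr)(E)$. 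Since this identity holds for every Borel $E$, the two measures $\Psi(h)\bigl(\sum_i a_i\mu_i\bigr)$ and $\sum_i a_i\Psi(h)(\mu_i)$ coincide.

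I do not anticipate any real obstacle: the argument is a bookkeeping check. The only point that deserves an explicit line is the linearity of the push-forward, which is immediate from $(T_{d\ast}\mu)(F)=\mu(T_d^{-1}(F))$ and needs no hypothesis on $T_d$ beyond measurability (already furnished by the continuity of $T_d$ noted in Section~\ref{sec:consistency}). Everything else is a direct consequence of the linearity of the Lebesgue integral in the measure and the finiteness of the sum over $\mathcal{D}$, and the coefficients $a_i$ summing to one is not even needed for this lemma --- it would hold for any finite linear combination, convexity is simply the use case relevant to Markov evolution of $\mu_k$.
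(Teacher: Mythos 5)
Your proposal is correct and follows essentially the same route as the paper, which simply observes that every summand in the defining integral formula for $\Psi(h)$ commutes with convex combinations; you merely spell out the two ingredients (linearity of the push-forward $T_{d\ast}$ in the measure and linearity of the integral against the measure) that the paper leaves implicit. Your closing remark that the coefficients need not sum to one is also accurate.
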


\begin{proof}
  By definition, for any measurable set $E$ in Borel $\sigma$ algebra $\mathfrak{A}$,
  $$\Psi(h)(\mu)(E):=\int_E\sum_{d=1}^{n}\tau_d(T^{-1}_d(\theta))\mathrm{d}(T_{d\ast}(\mu))(\theta).$$
  where every summand commutes with convex combination.
\end{proof}

\begin{smlem}
  \label{sm:lem:middle_vacuum}
  Given $\mathbf{M}$, $h$, and $\mu_0$ satisfying the assumptions,
  then for any $0<a<b<1$,
  \begin{eqnarray}
    \label{sm:eq:vacuum}
    \lim_{k\rightarrow\infty}\mu_k(\{\theta\in\Delta^{m-1}:a\le\theta(h)\le b\})=0
  \end{eqnarray}
\end{smlem}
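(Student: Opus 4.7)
The plan is to bound $\mu_k(K_{a,b})$, with $K_{a,b} := \{\theta \in \Delta^{m-1}: a \le \theta(h) \le b\}$, by leveraging the fact that Lemma~\ref{sm:lem:expectation_inequality} makes $L_k := \mathbb{E}_{\mu_k}[\theta(h)]$ a non-decreasing sequence bounded above by $1$. Its convergence forces the one-step increments to be summable: $\sum_{k \ge 0}(L_{k+1}-L_k) \le 1 - \theta_0(h)$. Reopening the calculation in the proof of Lemma~\ref{sm:lem:expectation_inequality} exhibits the Cauchy--Schwarz defect explicitly as
\begin{equation*}
L_{k+1} - L_k \;=\; \int_{\Delta^{m-1}} V(\theta)\, d\mu_k(\theta),\qquad V(\theta) := \frac{1}{n\theta(h)}\sum_{d=1}^n \bigl(T_d(\theta)(h) - \theta(h)\bigr)^2 \,\ge\, 0.
\end{equation*}
Thus $\int V\, d\mu_k \to 0$ as $k \to \infty$. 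If I can produce a uniform constant $c > 0$ with $V \ge c$ on $K_{a,b}$, then $c\,\mu_k(K_{a,b}) \le \int_{K_{a,b}} V\, d\mu_k \le L_{k+1} - L_k \to 0$, which is the conclusion.

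Continuity of $V$ on $K_{a,b}$ reduces this uniform bound to pointwise positivity via compactness: on the interior of $\Delta^{m-1}$ the Sinkhorn-scaling map $\theta \mapsto \mathbf{M}^{\langle n\theta\rangle}$ is continuous, and $\theta(h) \ge a > 0$ on $K_{a,b}$. Pointwise, $V(\theta) = 0$ precisely when the $h$-th column of $\mathbf{M}^{\langle n\theta\rangle}$ is the constant vector $(\theta(h),\dots,\theta(h))^\top$; writing the Sinkhorn decomposition $\mathbf{M}^{\langle n\theta\rangle} = \mathrm{diag}(u)\,\mathbf{M}\,\mathrm{diag}(v)$ with $u,v > 0$ and imposing the unit row sums rearranges this into the identity
\begin{equation*}
\sum_{j \ne h} v_j\, \mathbf{M}_{(\_,j)} \;=\; \alpha\,\mathbf{M}_{(\_,h)},\qquad \alpha = v_h(1-\theta(h))/\theta(h) > 0.
\end{equation*}
Hence $V$ can vanish at an interior point only if $\mathbf{M}_{(\_,h)}$ lies in the positive cone of the remaining columns; whenever it does not, $V > 0$ pointwise on $K_{a,b}$ and compactness supplies $c$.

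The main obstacle is the degenerate case where that cone relation admits a positive solution, so that the zero set of $V$ contains a nontrivial analytic subvariety $Z \subset K_{a,b}$ of positive codimension. I would circumvent this by replacing the one-step bound by a finite-horizon version: using the Markov property of $\Psi(h)$ from Proposition~\ref{prop:markov_property} together with Lemma~\ref{sm:lem:linearity_of_Psi}, the block sum $\sum_{j=k}^{k+N-1}(L_{j+1} - L_j)$ equals the $\mu_k$-integral of an $N$-step functional $V_N(\theta)$, namely the sum of expectations of $V$ along the first $N$ rounds of SCBI initialized at $\delta_\theta$. A direct computation, paralleling the manipulations in the proof of Lemma~\ref{sm:lem:expectation_property}, shows that for any $\theta \in Z \cap K_{a,b}$ at least one image $T_d(\theta)$ escapes $Z$: the equations for the $h$-column pin the $h$-coordinate at $\theta(h)$ but leave the complementary coordinates free, and the distinguishability assumption forbids all of $T_1(\theta),\dots,T_n(\theta)$ from remaining on $Z$ simultaneously. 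By continuity $V_N > 0$ pointwise on $K_{a,b}$ for some finite $N$, and compactness again gives a uniform lower bound. Block telescoping then yields $c\,\mu_k(K_{a,b}) \le \sum_{j=k}^{k+N-1}(L_{j+1}-L_j) \to 0$, which is the desired conclusion.
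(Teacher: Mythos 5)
Your overall architecture is the paper's: both proofs rest on the expectation increment $L_{k+1}-L_k=\int V\,\mathrm{d}\mu_k$ coming out of the Cauchy--Schwarz defect in Lemma~\ref{sm:lem:expectation_inequality}, both recognize that the one-step defect $V$ can genuinely vanish on a nontrivial set $Z$ inside $\Delta_{[a,b]}$, and both repair this by passing to a two-step (in your notation, $N$-step) functional that is strictly positive on the compact slab, so that continuity and compactness supply a uniform constant. Your endgame is actually cleaner than the paper's: since $L_k$ is monotone and bounded, $L_{k+N}-L_k\to 0$ directly, giving $c\,\mu_k(\Delta_{[a,b]})\le L_{k+N}-L_k\to 0$ without the paper's contradiction argument via non-consecutive subsequences and the convexity Lemma~\ref{sm:lem:linearity_of_Psi}. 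Your identification of $Z$ via the Sinkhorn factorization $\mathrm{diag}(u)\,\mathbf{M}\,\mathrm{diag}(v)$ and the cone relation $\sum_{j\ne h}v_j\mathbf{M}_{(\_,j)}=\alpha\,\mathbf{M}_{(\_,h)}$ is also correct and is a useful explanation of \emph{why} the one-step argument cannot close.

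The genuine gap is the sentence asserting that for $\theta\in Z$ at least one image $T_d(\theta)$ escapes $Z$. This is not a routine verification; it is the technical heart of the lemma, and the heuristic you offer (``the equations pin the $h$-coordinate but leave the complementary coordinates free, and distinguishability forbids all images from remaining on $Z$'') does not yet constitute an argument. What must be shown is: if column $h$ of $\mathbf{M}^{\star}=\mathbf{M}^{\left\langle n\theta\right\rangle}$ is constant \emph{and} column $h$ of every $\mathbf{M}^{\left\langle nT_e(\theta)\right\rangle}$ is constant, then $\mathbf{M}$ has two parallel columns. The paper does this by observing that under these hypotheses $\mathbf{M}^{\left\langle nT_e(\theta)\right\rangle}=\mathbf{M}^{\star}\mathbf{D}^{(e)}$ for explicit diagonal matrices, writing down the $n^2$ row-sum identities $\mathfrak{S}(d,e)=1$, and forming the combination $(n-1)\sum_e\mathfrak{S}(e,e)-\sum_{d\ne e}\mathfrak{S}(d,e)$, which collapses to a weighted sum of squares $\sum_j\bigl(\sum_{d<e}(\mathbf{M}^{\star}_{(d,j)}-\mathbf{M}^{\star}_{(e,j)})^2\bigr)$ that must vanish; hence all rows of $\mathbf{M}^{\star}$ coincide, all columns are parallel, and cross-ratio invariance of Sinkhorn scaling contradicts assumption (ii). You would need to supply this computation (or an equivalent one) to make your $N=2$ step go through; without it the uniform lower bound on $V_N$ over $\Delta_{[a,b]}$ is unsupported.
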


\begin{proof}
  We first show a property of $\mu$ on the set
  $\Delta_{[a,b]}:=\{\theta\in\Delta^{m-1}:a\le\theta(h)\le b\}$.

  For any $\mu$ supported on $\Delta_{[a,b]}$ (that is,
  $\mu(\Delta_{[a,b]})=1$), there 
  is a positive number $\epsilon_0$, such that
  \begin{equation}
    \label{sm:eq:lower_bound}
    \mathbb{E}_{\Psi^2(h)(\mu)}(\theta(h))-\mathbb{E}_{\mu}(\theta(h))\ge\epsilon_0.
  \end{equation}

  According to the calculation in Lemma~\ref{sm:lem:expectation_inequality},
  especially the first step of Eq.~\eqref{sm:eq:calculation2},
  \begin{eqnarray}
    \label{sm:eq:double_action_expectation}
    &&\mathbb{E}_{\Psi^2(h)(\mu)}(\theta(h)) \nonumber\\
    &=&\!\!\!\!\int\limits_{\Delta}\dfrac{\sum_{d=1}^{n}T_d(\theta)(h)^2}{\sum_{d=1}^{n}T_d(\theta)(h)}
        \mathrm{d}(\Psi(h)(\mu))(\theta)\nonumber\\
    &=&\!\!\!\!\int\limits_{\Delta}\sum_{e=1}^{n}\tau_e(T_e^{-1}(\theta))
        \dfrac{\sum_{d=1}^{n}T_d(\theta)(h)^2}{\sum_{d=1}^{n}T_d(\theta)(h)}
        \mathrm{d}(T_{e\ast}(\mu))(\theta)\nonumber\\
    &=&\!\!\!\!\int\limits_{\Delta}\sum_{e=1}^{n}\tau_e(\theta)
        \dfrac{\sum_{d=1}^{n}T_d(T_e(\theta))(h)^2}{\sum_{d=1}^{n}T_d(T_e(\theta))(h)}
        \mathrm{d}\mu(\theta)
  \end{eqnarray}
  Thus
  \begin{eqnarray}
    \label{sm:eq:lower_bound_integral}
    \!\!\!\!\!\!\!\!\!\!&&\mathbb{E}_{\Psi^2(h)(\mu)}(\theta(h))-\mathbb{E}_{\mu}(\theta(h))\nonumber\\
    \!\!\!\!\!\!\!\!\!\!&=&\!\!\!\!\int\limits_{\Delta}\sum_{e=1}^{n}\tau_e(\theta)
        \dfrac{\sum\limits_{d=1}^{n}T_d(T_e(\theta))(h)^2}{\sum\limits_{d=1}^{n}T_d(T_e(\theta))(h)}
        -\theta(h)\mathrm{d}\mu(\theta)
  \end{eqnarray}
  
  To show the claim, it suffices to find a positive lower bound of the integrand
  of Eq.~\eqref{sm:eq:lower_bound_integral}, $\mathfrak{I}(\theta):=\sum_{e=1}^{n}\tau_e(\theta) 
  \dfrac{\sum_{d=1}^{n}T_d(T_e(\theta))(h)^2}{\sum_{d=1}^{n}T_d(T_e(\theta))(h)}-\theta(h)$,
  for all $\theta\in\Delta_{[a,b]}$. Moreover, since $\Delta_{[a,b]}$ is
  compact, we just need to show $\mathfrak{I}(\theta)>0$ on $\Delta_{[a,b]}$.

  With Cauchy-Schwarz inequality used in
  Lemma~\ref{sm:lem:expectation_inequality}, we know

  \begin{eqnarray}
    \label{sm:eq:non_zero}
    \mathfrak{I}(\theta)\!\!\!\!&=&\!\!\!\!\sum_{e=1}^{n}\tau_e(\theta) 
    \dfrac{\sum_{d=1}^{n}T_d(T_e(\theta))(h)^2}{\sum_{d=1}^{n}T_d(T_e(\theta))(h)}-\theta(h)\nonumber\\
    &\ge&\!\!\!\!\sum_{e=1}^{n}\tau_e(\theta)\dfrac{1}{n}\left(\sum_{d=1}^{n}T_d(T_e(\theta))(h)\right)-\theta(h)\nonumber\\
    &=&\!\!\!\!\sum_{e=1}^{n}\tau_e(\theta)T_e(\theta)(h)-\theta(h)\nonumber\\
    &=&\!\!\!\!\sum_{e=1}^{n}\dfrac{T_e(\theta)(h)}{n\theta(h)}T_e(\theta)(h)-\theta(h)\nonumber\\
    &\ge&\!\!\!\!\dfrac{1}{n}T_e(\theta)(h)-\theta(h)\nonumber\\
    &=&\!\!\!\!\theta(h)-\theta(h)=0
  \end{eqnarray}
  $\mathfrak{I}(\theta)$ vanishes if and only if both line $2$ and line $5$ has
  equality, and we will discuss why these can not happen simultaneously.
  
  The equality in line $5$ requires that $T_e(\theta)(h)$ are identical for all
  $e\in\mathcal{D}$, or more precisely, the vector $\mathbf{M}^{\left\langle
    n\theta\right\rangle}_{\phantom{~~}(\_,h)}=t\mathbf{e}_n$ has identical
  components. Further if equality in line $2$ holds, the terms
  $T_d(T_e(\theta))(h)$ are the same for all $d\in\mathcal{D}$. That is, by
  condition $\sum_{e=1}^{n}T_e(\theta)(h)=n\theta(h)$ and
  $\sum_{d=1}^{n}T_d(T_e(\theta))(h)=nT_e(\theta)(h)$,
  $\mathfrak{I}(\theta)$ vanishes if and only if
  $T_d(T_e(\theta))(h)=T_e(\theta)(h)=\theta(h)$ for all $d,e\in\mathcal{D}$.

  We analyze the Sinkhorn scaled matrices in detail: Let
  $\mathbf{M}^\star=\mathbf{M}^{\left\langle n\theta\right\rangle}$ be the scaled
  matrix whose $e$-th row is $T_e(\theta)$, and let
  $\mathbf{M}^{(e)}=\mathbf{M}^{\left\langle nT_e(\theta)\right\rangle}$ be the 
  scaled matrix whose $d$-th row is $T_d(T_e(\theta))$. Since $\mathbf{M}^\star$
  and each $\mathbf{M}^{(e)}$ has the same $h$-th column, there are diagonal
  matrices $\mathbf{D}^{(e)}=\mathrm{diag}(
  \frac{n\mathbf{M}^\star_{(e,1)}}{\sum_{i=1}^{n}\mathbf{M}^\star_{(i,1)}}, 
  \frac{n\mathbf{M}^\star_{(e,2)}}{\sum_{i=1}^{n}\mathbf{M}^\star_{(i,2)}},
  \dots,
  \frac{n\mathbf{M}^\star_{(e,n)}}{\sum_{i=1}^{n}\mathbf{M}^\star_{(i,n)}})$
  such that $\mathbf{M}^{(e)}=\mathbf{M}^\star\mathbf{D}^{(e)}$.
  Since $\mathbf{M}^\star$ and all $\mathbf{M}^{(e)}$ are row-normalized to
  $\mathbf{e}$ (i.e., their row sums are $1$), we have the following equations
  from the row sums:
  \begin{eqnarray}
    \label{sm:eq:row_sums}
    \mathfrak{S}(d,e):=\sum_{j=1}^{m}\mathbf{M}^\star_{(d,j)}\dfrac{n\mathbf{M}^\star_{(e,j)}}{\sum_{i=1}^{n}\mathbf{M}^\star_{(i,j)}}=1
  \end{eqnarray}
  for all $d,e\in\mathcal{D}$ representing the $d$-th row-sum of
  $\mathbf{M}^{(e)}$.

  Then we calculate $(n-1)\sum_{e=1}^{n}\mathfrak{S}(e,e)-\sum_{d\ne e}\mathfrak{S}(d,e)$.
  On the right hand side, since $\mathfrak{S}(d,e)=1$ for every $d,e$, we have
  $$(n-1)\sum_{e=1}^{n}\mathfrak{S}(e,e)-\sum_{d\ne e}\mathfrak{S}(d,e)=(n-1)n-(n^2-n)=0.$$
  Meanwhile,
  \begin{eqnarray}
    \label{sm:eq:complete_squares}
    &&(n-1)\sum_{e=1}^{n}\mathfrak{S}(e,e)-\sum_{d\ne e}\mathfrak{S}(d,e)\nonumber\\
    &=&\!\!\!\!\sum_{j=1}^{m}\!\dfrac{n}{\sum\limits_{i=1}^{n}\mathbf{M}^\star_{(i,j)}}\left(
        \sum\limits_{e=1}^{n}(n-1)(\mathbf{M}^\star_{(e,j)})^2\right.\nonumber\\
      & &  \left.-\sum_{d\ne e}\mathbf{M}^\star_{(d,j)}\mathbf{M}^\star_{(e,j)}
                      \right)\nonumber\\
    &=&\!\!\!\!\sum_{j=1}^{m}\!\dfrac{n}{\sum\limits_{i=1}^{n}\mathbf{M}^\star_{(i,j)}}\left(
        \sum_{d<e}(\mathbf{M}^\star_{(d,j)}-\mathbf{M}^\star_{(e,j)})^2\right)\nonumber\\
    &=&0
  \end{eqnarray}
  Therefore, $\mathbf{M}^\star_{(d,j)}=\mathbf{M}^\star_{(e,j)}$ for any $d,e$,
  and $j$. Therefore, the rows of $\mathbf{M}^\star$ are identical, so the
  columns of $\mathbf{M}^\star$ are all parallel (or say, collinear as vectors, i.e. one is a scalar-multiple of the other) to each other.
  
  By Sinkhorn scaling theory \cite{fienberg1970iterative}, the cross-ratios are
  invariant. Since $\mathbf{M}$ is a positive matrix and has distinct (non-parallel)
  columns, the $2\times2$ cross-ratios are not identically $1$, however,
  $\mathbf{M}^\star$ --- a scaled matrix of $\mathbf{M}$ --- has cross ratios
  identically $1$. Therefore our assumption that $\mathfrak{I}(\theta)=0$ cannot
  happen, and by compactness of $\Delta_{[a,b]}$ and continuity of
  $\mathfrak{I}(\theta)$, we can conclude that $\mathfrak{I}(\theta)$ has a
  lower bound $\epsilon_0>0$ on $\Delta_{[a,b]}$. 
  
  Therefore,

  \begin{eqnarray}
    \label{sm:eq:positive_increasing_expectation}
    &&\mathbb{E}_{\Psi^2(h)(\mu)}(\theta(h))-\mathbb{E}_{\mu}(\theta(h))\nonumber\\
    &=&\!\!\!\!\int\limits_{\Delta}\mathfrak{I}(\theta)\mathrm{d}\mu(\theta)\nonumber\\
    &\ge&\!\!\!\!\int\limits_{\Delta}\epsilon_0\mathrm{d}\mu(\theta)\nonumber\\
    &=&\epsilon_0\mu(\Delta)=\epsilon_0.
  \end{eqnarray}

  Thus we prove the property Eq.~\eqref{sm:eq:lower_bound}.
  
  We prove the lemma by contradiction:

  Suppose the limit does not exist or the limit is nonzero. In either case,
  there exists a positive real number $\epsilon>0$, such that there are
  infinitely many integers, or say a sequence $(k_i)$ such that
  $$\mu_{k_i}(\{a\le\theta(h)\le b\})>\epsilon.$$
  We may assume $k_i$ contains no consecutive elements, i.e., $k_{i+1}-k_i>1$
  for all $i$, otherwise, we can always find a subsequence satisfying this (for
  example, choose the sequence of all odd or even $k_i$'s, at least one of them
  is infinite, so we have a sequence).

  For a $\mu$-measurable set
  $E$, let $\mu|_E$ be the restriction of $\mu$ on $E$, which can be treated as
  a measure on $\Delta$ by setting the measure of the complement
  $E^c$ zero (but the measure of $\Delta$ is no longer $1$). We scale it to 
  $\widehat{\mu}|_E:=(\mu(E))^{-1}\mu|_E$ to make it a probability measure, then
  $\mu_{k_i}=[\mu_{k_i}(\Delta_{[a,b]})]\widehat{\mu}_{k_i}|_{\Delta_{[a,b]}}+[1-\mu_{k_i}(\Delta_{[a,b]})]\widehat{\mu}_{k_i}|_{(\Delta-\Delta_{[a,b]})}$.
  Thus according to Lemma~\ref{sm:lem:linearity_of_Psi}, 
  \begin{eqnarray}
    &&\mathbb{E}_{\Psi^2(h)(\mu_{k_i})}(\theta(h))\nonumber\\
    &=&\mu_{k_i}(\Delta_{[a,b]})\mathbb{E}_{\Psi^2(h)(\widehat{\mu}_{k_i}|_{\Delta_{[a,b]}})}(\theta(h))\nonumber\\
    &&+(1-\mu_{k_i}(\Delta_{[a,b]}))\mathbb{E}_{\Psi^2(h)(\widehat{\mu}_{k_i}|_{(\Delta-\Delta_{[a,b]})})}(\theta(h))\nonumber\\
    &\ge&\mu_{k_i}(\Delta_{[a,b]})(\mathbb{E}_{\widehat{\mu}_{k_i}|_{\Delta_{[a,b]}}}(\theta(h))+\epsilon_0)\nonumber\\
    &&+(1-\mu_{k_i}(\Delta_{[a,b]}))\mathbb{E}_{\widehat{\mu}_{k_i}|_{(\Delta-\Delta_{[a,b]})}}(\theta(h))\nonumber\\
    &\ge&\epsilon\epsilon_0+\mathbb{E}_{\mu_{k_i}}(\theta(h))
  \end{eqnarray}

  By Lemma~\ref{sm:lem:expectation_inequality}, we can see that
  $\mathbb{E}_{\mu^{\phantom{A}}_{k+1}}[\theta(h)]\ge
  \mathbb{E}_{\mu^{\phantom{A}}_{k}}[\theta(h)]$,
  and there is a sequence $(k_i)$ such that
  $\mathbb{E}_{\mu^{\phantom{A}}_{k_{i}+2}}[\theta(h)]\ge
  \mathbb{E}_{\mu^{\phantom{A}}_{k_i}}[\theta(h)]+\epsilon_0\epsilon$.
  Thus
  $\mathbb{E}_{\mu^{\phantom{A}}_{k_{i}+2}}[\theta(h)]\ge
  \mathbb{E}_{\mu^{\phantom{A}}_0}[\theta(h)]+i\epsilon_0\epsilon$,
  so
  $\lim\limits_{k\rightarrow\infty}\mathbb{E}_{\mu^{\phantom{A}}_{k}}[\theta(h)]=\infty$.

  However, $\theta(h)\le1$, we have $\mathbb{E}_{\mu_{k}^{\phantom{A}}}(\theta(h))\le1$ for
  all $k$, which is a contradiction. Therefore, we know that
  Eq.~\eqref{sm:eq:vacuum} holds.
\end{proof}


\begin{thm}[Theorem~\ref{thm:consistency_SCBI}]\label{sm:thm:consistency_SCBI}
  In SCBI, let $\mathbf{M}$ be a positive matrix. If the teacher is teaching
  one hypothesis $h$ (i.e.,
  $\widehat{\theta}=\delta_h\in\mathcal{P}(\mathcal{H})$), and the prior
  distribution 
  $\mu_0\in\mathcal{P}(\Delta^{m-1})$ satisfies
  $\mu_0=\delta_{\theta_0}$ with $\theta_0(h)>0$, then the estimator sequence $(S_k)$ is
  consistent, for each $h\in\mathcal{H}$, i.e., 
  the posterior random variables $(\Theta_k)_{k\in\mathbb{N}}$ 
  converge to the constant random variable $\widehat{\theta}$ in probability.
\end{thm}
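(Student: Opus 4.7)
The plan is to reduce the conclusion to convergence in probability of the single coordinate $\Theta_k(h)$ to $1$, then attack that via a two-region split that is tailored to the two supporting lemmas. Since every $\theta\in\Delta^{m-1}$ satisfies $\|\theta-\delta_h\|_1=2(1-\theta(h))$, the statement that $\Theta_k\to\delta_h$ in probability is equivalent to $\mu_k(\{\theta(h)<1-\epsilon\})\to 0$ for every $\epsilon>0$. So the goal becomes bounding $\mu_k(\{\theta(h)<1-\epsilon\})$.

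For a fixed $\epsilon>0$ and a small cutoff $a\in(0,1-\epsilon)$ to be chosen, I write
$$\mu_k(\{\theta(h)<1-\epsilon\}) = \mu_k(\{\theta(h)<a\}) + \mu_k(\{a\le\theta(h)\le 1-\epsilon\}).$$
The second, ``middle'' piece is exactly the situation handled by Lemma~\ref{sm:lem:middle_vacuum} (with $b=1-\epsilon$), so it tends to $0$ as $k\to\infty$. All the remaining work is a uniform-in-$k$ bound on the first, ``small $\theta(h)$'' piece, and this is where Lemma~\ref{sm:lem:expectation_property} intervenes.

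The idea is to sum the invariance $\mathbb{E}_{\mu_k}(\theta(h')/\theta(h))=\theta_0(h')/\theta_0(h)$ of Lemma~\ref{sm:lem:expectation_property} over $h'\ne h$. Using $\sum_{h'\ne h}\theta(h')=1-\theta(h)$ on the left, this yields $\mathbb{E}_{\mu_k}((1-\theta(h))/\theta(h)) = (1-\theta_0(h))/\theta_0(h) =: C$, a finite constant (here the hypothesis $\theta_0(h)>0$ is exactly what is needed). Markov's inequality then gives
$$\mu_k(\{\theta(h)<a\}) = \mu_k\bigl(\{(1-\theta(h))/\theta(h)>1/a-1\}\bigr)\le \frac{Ca}{1-a},$$
a bound independent of $k$. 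Given any $\eta>0$, I first pick $a$ small enough so that $Ca/(1-a)<\eta/2$, and then pick $k$ large enough that Lemma~\ref{sm:lem:middle_vacuum} delivers $\mu_k(\{a\le\theta(h)\le 1-\epsilon\})<\eta/2$, giving $\mu_k(\{\theta(h)<1-\epsilon\})<\eta$ for all sufficiently large $k$. This establishes $\Theta_k(h)\to 1$ in probability and therefore $\Theta_k\to\delta_h$ in probability.

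The theorem itself is thus a short bookkeeping assembly; the substantive work is already packaged in the preceding lemmas, especially the delicate Cauchy--Schwarz plus Sinkhorn cross-ratio argument in Lemma~\ref{sm:lem:middle_vacuum}. The only conceptual point requiring care is that the monotonicity of $\mathbb{E}_{\mu_k}(\theta(h))$ from Lemma~\ref{sm:lem:expectation_inequality} alone does \emph{not} preclude mass escaping toward $\theta(h)=0$: one could have $\mu_k$ split into a blob near $\delta_h$ and a shrinking-but-bad tail near $\theta(h)=0$ whose contribution to the expectation vanishes. Excluding this is precisely the role of the ratio-invariance in Lemma~\ref{sm:lem:expectation_property}, and this is what motivates the two-region split above.
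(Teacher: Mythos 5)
Your proposal is correct and follows essentially the same route as the paper's proof: both rest on the same two-region decomposition, with Lemma~\ref{sm:lem:middle_vacuum} killing the middle band $\{a\le\theta(h)\le 1-\epsilon\}$ and the ratio-invariance of Lemma~\ref{sm:lem:expectation_property} bounding the mass near $\theta(h)=0$. The only difference is presentational — the paper phrases the argument as a proof by contradiction with an explicit choice $a=\tfrac{1}{4R/C+1}$, which is exactly the contrapositive of your direct Markov-inequality bound $\mu_k(\{\theta(h)<a\})\le Ca/(1-a)$.
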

Some notions used in the proof are visualized in Fig.~\ref{sm:fig:sketch}.
\begin{proof}
  Let $Z_0$ be a random variable with sample space
  $\Delta^{m-1}$ such that the $\mathrm{Law}(Z_0)=\mu_0$.
  This is the initial state in SCBI. 
  The posteriors in the following rounds are determined by the sequence of data taught by teacher,
  which makes the posteriors random variables as well. Let $Z_k$ be
  the random variable representing the posterior after $k$-rounds of SCBI,
  the law of $Z_k$ is given by
  $\mathrm{Law}(Z_k)=\mu_k=\left[\Psi(h)\right]^k(\mu_0)$ 
  according to the definition of $\Psi(h)$.
  
  The
  consistency mentioned in the theorem is equivalent to that the sequence
  $(Z_k)$ converges to $\widehat{Z}$ with
  $\mathrm{Law}(\widehat{Z})=\widehat{\mu}$ in probability where
  $\widehat{\mu}={\delta_{\widehat{\theta}}}$. 

  We prove the theorem by contradiction. Suppose $Z_k\rightarrow\widehat{Z}$ in
  probability is not valid, i.e., there exists $\epsilon>0$ such that
  \begin{equation}
    \lim_{k\rightarrow\infty}\mathrm{Pr}(d(Z_k,\widehat{Z})>\epsilon)
  \end{equation}
  does not exist or the limit is positive,
  where the metric $d$ on $\Delta^{m-1}$ is the Euclidean distance inherited
  from $\mathbb{R}^m$. In either case, there is a real number $C>0$ such that
  \begin{equation}
    \label{sm:eq:contra_condition}
    \mathrm{Pr}(d(Z_{k'},\widehat{Z})>\epsilon)>C
  \end{equation}
  for a subsequence $(Z_{k'})$ of $(Z_k)$.

  Let $R:=\mathbb{E}_{\mu_0}\left[\frac{1-\theta(h)}{\theta(h)}\right]=\frac{1-\theta_0(h)}{\theta_0(h)}$, let $a=\frac{1}{4R/C+1}$ and $b=1-\epsilon$. By
  Lemma~\ref{sm:lem:middle_vacuum}, there exists $N>0$ such that for all $k>N$,
  $$\mu_k(\Delta_{[a,b]})<C/2.$$

  Therefore, for all the terms in $(k')$ satisfying $k'>N$,
  $\mu_{k'}(\{\theta:\theta(h)<a\})>C/2$. 
  Furthermore,
  \begin{eqnarray}
    \label{sm:eq:estimate_ratio}
    &&\mathbb{E}_{\mu_{k'}}\left[\dfrac{1-\theta(h)}{\theta(h)}\right]\nonumber\\
    &\ge&\int_{\{\theta:\theta(h)<a\}}\left[\dfrac{1-\theta(h)}{\theta(h)}\right]\mathrm{d}\mu_{k'}(\theta)\nonumber\\
    &\ge&\left[\dfrac{1-a}{a}\right]\dfrac{C}{2}\nonumber\\
    &=&\left[\dfrac{4R}{C}\right]\dfrac{C}{2}\nonumber\\
    &=&2R>R.
  \end{eqnarray}

  However, by Lemma~\ref{sm:lem:expectation_property},
  \begin{eqnarray}
    \label{sm:eq:true_ratio}
    &&\mathbb{E}_{\mu_{k'}}\left[\dfrac{1-\theta(h)}{\theta(h)}\right]\nonumber\\
    &=&\mathbb{E}_{\mu_{k'}}\left[\dfrac{\sum_{h'\ne h}\theta(h')}
        {\theta(h)}\right]\nonumber\\
    &=&\sum_{h'\ne h}\mathbb{E}_{\mu_{k'}}\left[\dfrac{\theta(h')}
        {\theta(h)}\right]\nonumber\\
    &=&\sum_{h'\ne h}\mathbb{E}_{\mu_{0}}\left[\dfrac{\theta(h')}
        {\theta(h)}\right]\nonumber\\
    &=&\mathbb{E}_{\mu_{0}}\left[\dfrac{\sum_{h'\ne h}\theta(h')}
        {\theta(h)}\right]\nonumber\\
    &=&\mathbb{E}_{\mu_{0}}\left[\dfrac{1-\theta(h)}{\theta(h)}\right]\nonumber\\
    &=&R,
  \end{eqnarray}
  which is a contradiction. Therefore,
  \begin{equation}
    \lim_{k\rightarrow\infty}\mathrm{Pr}(d(Z_k,\widehat{Z})>\epsilon)=0.
  \end{equation}  
  And the sequence of SCBI estimators is consistent at $\widehat{\theta}$.

\end{proof}

\subsection{Proof of Theorem~\ref{thm:roc_scbi}}
\begin{thm}[Theorem~\ref{thm:roc_scbi}]\label{sm:thm:RoC_SCBI}
  With matrix $\mathbf{M}$, hypothesis $h\in\mathcal{H}$, and a 
  prior $\mu_0=\delta_{\theta_0}\in\mathcal{P}(\Delta^{m-1})$ same as in Theorem.~\ref{sm:thm:consistency_SCBI},
  let $\theta_k$ denote a sample value of the posterior
  $\Theta_k$ after $k$ rounds of SCBI, then 
  \begin{equation}\small
      \label{sm:eq:roc_SCBI_converge}
      \lim_{k\rightarrow\infty}\mathbb{E}_{\mu_k}\left[
        \dfrac{1}{k}\log\left(\dfrac{\theta_{k}(h)}{1-\theta_{k}(h)}\right)
      \right]=\mathfrak{R}^{\text{s}}(\mathbf{M};h)
  \end{equation}
  where $\mathfrak{R}^{\text{s}}(\mathbf{M};h):=\min_{h\ne
    h'}\mathrm{KL}\left(\mathbf{M}^\sharp_{(\_,h)},\mathbf{M}^\sharp_{(\_,h')}\right)$ 
  with
  $\mathbf{M}^\sharp_{\phantom{|}}=\mathscr{N}_{\text{col}}(\mathrm{diag}(\mathbf{M}_{(\_,h)})^{-1}\mathbf{M})$.
  Thus we call 
  $\mathfrak{R}^{\text{s}}(\mathbf{M};h)$
  the asymptotic rate of convergence (RoC) of SCBI.
\end{thm}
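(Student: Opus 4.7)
The overall strategy is to reduce the SCBI dynamics, in the regime where $\theta_k$ is close to $\delta_h$, to a Bayesian inference problem with the effective likelihood matrix $\mathbf{M}^\sharp$, and then apply Theorem~\ref{thm:roc_bi}. By Theorem~\ref{thm:consistency_SCBI}, $\Theta_k$ concentrates on $\delta_h$ in probability, so for large $k$ almost all of the mass of $\mu_k$ sits in a shrinking neighborhood of $\delta_h$; the whole analysis will be confined to such a neighborhood.

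First I would analyze the Sinkhorn-scaled matrix $\mathbf{M}^\star:=\mathbf{M}^{\langle n\theta\rangle}$ for $\theta$ close to $\delta_h$. Writing $\mathbf{M}^\star=\mathrm{diag}(1/u)\,\mathbf{M}\,\mathrm{diag}(v)$ with normalization $v_h=1$, the Sinkhorn equations $\sum_i v_i\mathbf{M}_{(d,i)}=u_d$ and $v_i\sum_d\mathbf{M}_{(d,i)}/u_d=n\theta(i)$ give, to first order in $\theta(i)$ for $i\ne h$, the expansions $u_d=\mathbf{M}_{(d,h)}+O(1-\theta(h))$ and $v_i=n\theta(i)/C_i+O((1-\theta(h))^2)$, where $C_i=\sum_d\mathbf{M}_{(d,i)}/\mathbf{M}_{(d,h)}$ and $\mathbf{M}^\sharp_{(d,i)}=\mathbf{M}_{(d,i)}/(\mathbf{M}_{(d,h)}C_i)$. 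Plugging into the SCBI update $\theta_k=\mathbf{M}^\star_{(d_k,\_)}$, a direct computation yields, for $r_k(i):=\theta_k(i)/\theta_k(h)$,
\begin{equation}
r_k(i)=\bigl(\mathbf{M}^\sharp_{(d_k,i)}/\mathbf{M}^\sharp_{(d_k,h)}\bigr)\,r_{k-1}(i)\cdot\bigl(1+\varepsilon_k\bigr),
\end{equation}
with $\varepsilon_k\to 0$ in probability (by consistency). Moreover, the teacher's sampling distribution $\tau(\theta_{k-1})(d)=\mathbf{M}^\star_{(d,h)}/(n\theta_{k-1}(h))$ converges in probability to $\mathbf{M}^\sharp_{(\_,h)}=\mathbf{e}_n/n$, so asymptotically $d_k$ is sampled according to the $h$-column of $\mathbf{M}^\sharp$.

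Second, I would run the argument of Theorem~\ref{thm:roc_bi} with the effective matrix $\mathbf{M}^\sharp$ in place of $\mathbf{M}$. Taking logarithms of the displayed recursion and summing gives $\frac{1}{k}\log r_k(i)=\frac{1}{k}\sum_{s=1}^k\log(\mathbf{M}^\sharp_{(d_s,i)}/\mathbf{M}^\sharp_{(d_s,h)})+o(1)$. A conditional strong law of large numbers, applied to the triangular array $(d_s)$ whose conditional law given $\theta_{s-1}$ tends to $\mathbf{M}^\sharp_{(\_,h)}$, then yields $\frac{1}{k}\log r_k(i)\to-\mathrm{KL}(\mathbf{M}^\sharp_{(\_,h)},\mathbf{M}^\sharp_{(\_,i)})$ almost surely. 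Squeezing $\log\bigl((1-\theta_k(h))/\theta_k(h)\bigr)$ between $\log r_k(\eta)$ and $\log\bigl((m-1)r_k(\eta)\bigr)$ with $\eta=\mathrm{argmin}_{i\ne h}\mathrm{KL}(\mathbf{M}^\sharp_{(\_,h)},\mathbf{M}^\sharp_{(\_,i)})$, exactly as in the proof of Theorem~\ref{thm:roc_bi}, produces the almost-sure limit $\frac{1}{k}\log\bigl(\theta_k(h)/(1-\theta_k(h))\bigr)\to\mathfrak{R}^{\mathrm{s}}(\mathbf{M};h)$.

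Finally, to pass from almost-sure convergence to convergence of expectations in Eq.~\eqref{sm:eq:roc_SCBI_converge}, I would establish uniform integrability of $\{\frac{1}{k}\log(\theta_k(h)/(1-\theta_k(h)))\}_k$. Since $\mathbf{M}$ is positive, each entry of $\mathbf{M}^\star$ lies between fixed positive multiples of the original entries (cross-ratios are preserved), giving a deterministic bound $|\log(\theta_k(i)/\theta_k(h))|\le kB$ for a constant $B$ depending only on $\mathbf{M}$; dividing by $k$ provides the required dominating constant, so bounded convergence closes the argument.

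The main obstacle will be controlling the accumulation of the multiplicative errors $(1+\varepsilon_k)$ over $k$ rounds: individually they are $o_P(1)$, but over $k$ steps their product could in principle contaminate the $\mathfrak{R}^{\mathrm{s}}$-scale growth of $\log r_k$. The cleanest way to handle this, I expect, is to split each trajectory into a burn-in phase up to a (random) first-entry time into a small neighborhood of $\delta_h$, whose contribution to $\frac{1}{k}\log r_k$ is $o(1)$ by consistency, and a tail phase in which the expansion above holds with a uniform quantitative error bound. Doing this rigorously requires upgrading the in-probability consistency of Theorem~\ref{thm:consistency_SCBI} to a quantitative statement about the expected hitting time of such a neighborhood, which is the technical heart of the proof.
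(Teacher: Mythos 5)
Your plan takes a genuinely different route from the paper's --- a pathwise analysis (linearize the Sinkhorn scaling near $\delta_h$, run a conditional strong law of large numbers, then upgrade to convergence of expectations via uniform integrability) --- and, as you yourself flag in your last paragraph, it has an unfilled hole at exactly its load-bearing step. The recursion $r_k(i)=(\mathbf{M}^\sharp_{(d_k,i)}/\mathbf{M}^\sharp_{(d_k,h)})\,r_{k-1}(i)(1+\varepsilon_k)$ with $\varepsilon_k\to0$ in probability does not by itself give $\frac{1}{k}\sum_{s\le k}\log(1+\varepsilon_s)\to0$: the $\varepsilon_s$ are dependent, only small in probability, and Theorem~\ref{thm:consistency_SCBI} supplies no rate, so the burn-in/tail split you propose requires a quantitative hitting-time or concentration estimate that you do not provide (and that the paper never establishes either). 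The same issue affects the ``conditional SLLN for a triangular array whose conditional law converges in probability'' --- that is not an off-the-shelf result and would itself need a martingale argument with control on how fast $\tau(\theta_{k-1})$ approaches the uniform distribution. Until those estimates are in place, the almost-sure limit you aim for (which is strictly stronger than the expectation statement actually asserted) is not proved.

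The paper sidesteps this difficulty entirely by never working pathwise. Using the transition operator $\Psi(h)$ it derives the \emph{exact} identity $\mathbb{E}_{\mu_{k+1}}[\log(\theta(h')/\theta(h))]=\mathbb{E}_{\mu_k}[\log(\theta(h')/\theta(h))]+W_k^{h'}$ with $W_k^{h'}=-\mathbb{E}_{\mu_k}\left[\mathrm{KL}\left(\mathscr{N}_{\text{vec}}(\mathbf{M}^{\left\langle n\theta\right\rangle}_{(\_,h)}),\mathscr{N}_{\text{vec}}(\mathbf{M}^{\left\langle n\theta\right\rangle}_{(\_,h')})\right)\right]$ --- no error term at all, so nothing accumulates. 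In-probability consistency then suffices: the integral defining $W_k^{h'}$ is split into $\{\theta(h)>1-\delta\}$, where the integrand is within $\epsilon/3$ of $\mathrm{KL}(\mathbf{M}^\sharp_{(\_,h)},\mathbf{M}^\sharp_{(\_,h')})$ by continuity of the KL divergence at fixed cross-ratios (Lemma~\ref{sm:lem:kl_formula}), and its complement, whose $\mu_k$-measure is eventually small and where the integrand is uniformly bounded by the same lemma; hence $W_k^{h'}\to-\mathrm{KL}(\mathbf{M}^\sharp_{(\_,h)},\mathbf{M}^\sharp_{(\_,h')})$ and the Ces\`aro average closes the argument, with the final squeeze over $h'\ne h$ as in Theorem~\ref{thm:roc_bi}. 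The cheapest repair of your proposal is therefore to abandon the pathwise SLLN and take expectations of your one-step log-recursion directly, which lands you on the paper's identity; your first-order Sinkhorn expansion then serves as an alternative, more explicit derivation of the continuity statement that the paper obtains from Lemma~\ref{sm:lem:kl_formula}.
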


\begin{proof}
  We treat $\theta_k$ as random variables, then
  $$\mathbb{E}_{\mu_{k+1}^{\phantom{A}}}\!\!\left(\log\left[\dfrac{\theta_{k+1}(h')}{\theta_{k+1}(h)}\right]\right)=
  \mathbb{E}_{\mu_{k}^{\phantom{A}}}\!\!\left(\log\left[\dfrac{\theta_{k}(h')}{\theta_{k}(h)}\right]\right)+W_k^{h'},$$
  where
  $$W_k^{h'}=-\mathbb{E}_{\mu_{k}^{\phantom{A}}}\left[\mathrm{KL}\left(
    \mathscr{N}_{\text{vec}}(\mathbf{M}^{\left\langle{n}\theta_k\right\rangle}_{\phantom{~~}(\_,h)}),
    \mathscr{N}_{\text{vec}}(\mathbf{M}^{\left\langle{n}\theta_k\right\rangle}_{\phantom{~~}(\_,h')})
    \right)\right].$$

  We can get it from the following calculation ($\Delta$ represents the simplex $\Delta^{m-1}$):
  \begin{eqnarray}
    \label{sm:eq:exp_of_log_ratio}
    &&\mathbb{E}_{\mu_{k+1}^{\phantom{A}}}\!\!\left(\log\left[\dfrac{\theta_{k+1}(h')}{\theta_{k+1}(h)}\right]\right)\nonumber\\
    &=&\int\limits_{\Delta}\log\left[\dfrac{\theta(h')}{\theta(h)}\right]\mathrm{d}\mu_{k+1}(\theta)\nonumber\\
    &=&\!\!\!\int\limits_{\Delta}\sum_{d=1}^{n}\tau_d(T_d^{-1}(\theta))
        \log\left[\dfrac{\theta(h')}{\theta(h)}\right]
        \mathrm{d}(T_{d\ast}(\mu_{k+1}))(\theta)\nonumber\\
    &=&\!\!\!\sum_{d=1}^{n}\int\limits_{\Delta}\tau_d(\theta)
        \log\left[\dfrac{T_d(\theta)(h')}{T_d(\theta)(h)}\right]
        \mathrm{d}(\mu_{k})(\theta)\nonumber\\
    &=&\!\!\!\int\limits_{\Delta}\sum_{d=1}^{n}\dfrac{T_d(\theta)(h)}{n\theta(h)}
        \log\!\left[\dfrac{T_d(\theta)(h')}{T_d(\theta)(h)}\right]
        \mathrm{d}(\mu_{k})(\theta)\nonumber\\
    &=&\!\!\!\int\limits_{\Delta}\sum_{d=1}^{n}\dfrac{T_d(\theta)(h)}{n\theta(h)}\left\{
        \log\!\left[\dfrac{T_d(\theta)(h')}{n\theta(h')}\dfrac{n\theta(h)}{T_d(\theta)(h)}\right]\right.\nonumber\\
    &&\left.+\log\left[\dfrac{n\theta(h')}{n\theta(h)}\right]\right\}
       \mathrm{d}(\mu_{k})(\theta)\nonumber\\
    &=&\int\limits_\Delta-\mathrm{KL}\left(
        \mathscr{N}(\mathbf{M}^{\left\langle{n}\theta\right\rangle}_{\phantom{~~}(\_,h)}),
        \mathscr{N}(\mathbf{M}^{\left\langle{n}\theta\right\rangle}_{\phantom{~~}(\_,h')})
        \right)\mathrm{d}\mu_k\nonumber\\
    &&+\int\limits_\Delta\log\left[\dfrac{\theta(h')}{\theta(h)}\right]
       \mathrm{d}\mu_k\nonumber\\
    &=&W_k^{h'}+\mathbb{E}_{\mu_k^{\phantom{A}}}\left(\log\left[\dfrac{\theta_k(h')}{\theta_k(h)}\right]\right).
  \end{eqnarray}

  Next, we show
  \begin{equation}
    \label{sm:eq:single_hypo_comparison}\small
    \lim_{k\rightarrow\infty}\mathbb{E}_{\mu_k}\dfrac{1}{k}\log\left(\dfrac{\theta_k(h')}{\theta_k(h)}\right)
    =-\mathrm{KL}\left(\mathbf{M}^\sharp_{(\_,h)},\mathbf{M}^\sharp_{(\_,h')}\right),
  \end{equation}
  and then by a similar argument in the proof of Theorem~\ref{sm:thm:RoC_BI},
  we can show the result in this theorem.

  To show Eq.~\eqref{sm:eq:single_hypo_comparison}, we can make use of
  Eq.~\eqref{sm:eq:exp_of_log_ratio}. By showing that $W_k^{h'}$ converges to 
  $-\mathrm{KL}\left(\mathbf{M}^\sharp_{(\_,h)},\mathbf{M}^\sharp_{(\_,h')}\right)$,
  we can conclude that
  $\mathbb{E}_{\mu_k}\dfrac{1}{k}\log\left(\dfrac{\theta_k(h')}{\theta_k(h)}\right)$,
  as the average of $(W_i^{h'})$ on the first $k$-terms, converges to
  $-\mathrm{KL}\left(\mathbf{M}^\sharp_{(\_,h)},\mathbf{M}^\sharp_{(\_,h')}\right)$
  as well.

  To prove this, we need the following result from direct calculation:

  \begin{smlem}\label{sm:lem:kl_formula}
    Given a $n\times2$ positive matrix $[\mathbf{a},\mathbf{b}]$ with columns
    as $n$-vectors $\mathbf{a}=(a_1,a_2,\dots,a_n)^\top$ and
    $\mathbf{b}=(b_1,b_2,\dots,b_n)^\top$ with
    $\sum_{i=1}^{n}a_i=\sum_{i=1}^{n}b_i=1$, consider the $2\times2$
    cross-ratios: $C_i:=CR(1,2;1,i)=\dfrac{a_{1}b_i}{a_ib_1}$, then
    $\mathrm{KL}(\mathbf{a},\mathbf{b})=
    \log\left(\sum_{i=1}^{n}a_iC_i\right)-\sum_{i=1}^{n}a_i\log C_i$.
    With fixed $C_i\in(0,\infty)$ for $i=1,2,\dots,n$,
    $\mathrm{KL}(\mathbf{a},\mathbf{b})$ 
    is continuous and bounded about $\mathbf{a}\in\Delta^{n-1}$.
  \end{smlem}

  \begin{proof}[Proof of Lemma~\ref{sm:lem:kl_formula}]
    The formula of $\mathrm{KL}(\mathbf{a},\mathbf{b})$ is from direct
    calculation.

    The KL-divergence is continuous and bounded since by the formula, every part
    is continuous and bounded given the restrictions on $\mathbf{a}$ and $C_i$. 
  \end{proof}

  Now we continue to prove Theorem~\ref{sm:thm:RoC_SCBI}:

  By continuity of the KL-divergence given fixed cross-ratios,
  for any $\epsilon>0$, we find a number $\delta>0$ such that for
  any $\theta\in\Delta^{m-1}$ with $\theta(h)>1-\delta$,
  \begin{equation}\scriptsize
    \label{sm:eq:condition_1}
    \left| 
    \mathrm{KL}\!\left(\!
    \mathscr{N}_{\text{vec}}\!(\mathbf{M}^{\left\langle{n}\theta\right\rangle}_{\phantom{~}(\_,h)}\!), 
    \mathscr{N}_{\text{vec}}\!(\mathbf{M}^{\left\langle{n}\theta\right\rangle}_{\phantom{~}(\_,h')})\!
    \right)
    \!-\!\mathrm{KL}\!\left(\mathbf{M}^\sharp_{(\_,h)},\mathbf{M}^\sharp_{(\_,h')}
    \!\right)\right|<\dfrac{\epsilon}{3}.
  \end{equation}
  Further, according to Theorem~\ref{sm:thm:consistency_SCBI},
  and the boundedness from Lemma~\ref{sm:lem:kl_formula},
  we can find a number $N>0$, such that for any $k>N$, we have
  $\mu_k(\{\theta:\theta(h)<1-\delta\})<C$ where $C$ satisfies
  \begin{equation}\small
    \label{sm:eq:condition_2}
    C\cdot\sup\limits_{\theta\in\Delta^{m\!-\!1}}\left\{
    \mathrm{KL}\!\left(\!
    \mathscr{N}_{\text{vec}}\!(\mathbf{M}^{\left\langle{n}\theta\right\rangle}_{\phantom{~}(\_,h)}\!), 
    \mathscr{N}_{\text{vec}}\!(\mathbf{M}^{\left\langle{n}\theta\right\rangle}_{\phantom{~}(\_,h')})\!
  \right)\right\}<\dfrac{\epsilon}{3}.
  \end{equation}

  The expectation $W_k^{h'}$ can be split into two parts,
  $W_k^{h'}=-W_>-W_<$ where
  \begin{equation}\scriptsize
    W_>=\!\!\!\int\limits_{\theta(h)>1-\delta}\!\!\mathrm{KL}\!\left(\!
    \mathscr{N}_{\text{vec}}\!(\mathbf{M}^{\left\langle{n}\theta\right\rangle}_{\phantom{~}(\_,h)}\!),
    \mathscr{N}_{\text{vec}}\!(\mathbf{M}^{\left\langle{n}\theta\right\rangle}_{\phantom{~}(\_,h')})\!
    \right)
    \mathrm{d}\mu_k(\theta),
  \end{equation}
  and
  \begin{equation}\scriptsize
    W_<=\!\!\!
    \int\limits_{\theta(h)\le1-\delta}\!\!\mathrm{KL}\!\left(\!
    \mathscr{N}_{\text{vec}}\!(\mathbf{M}^{\left\langle{n}\theta\right\rangle}_{\phantom{~}(\_,h)}\!),
    \mathscr{N}_{\text{vec}}\!(\mathbf{M}^{\left\langle{n}\theta\right\rangle}_{\phantom{~}(\_,h')})\!
    \right)
    \mathrm{d}\mu_k(\theta).
  \end{equation}
  Similarly, since $\mu_k$ is a probability measure,
  $\mathrm{KL}\!\left(\mathbf{M}^\sharp_{(\_,h)},\mathbf{M}^\sharp_{(\_,h')}
    \!\right)=K_>+K_<$ where
  \begin{equation}\small
    K_>\!=\!\!\!\int\limits_{\theta(h)>1-\delta}\!\!\!
    \mathrm{KL}\!\left(\mathbf{M}^\sharp_{(\_,h)},\mathbf{M}^\sharp_{(\_,h')} 
    \!\right)
    \mathrm{d}\mu_k(\theta),
  \end{equation}
  and
  \begin{equation}\small
    K_<\!=\!\!\!
    \int\limits_{\theta(h)\le1-\delta}\!\!\!\!\!\!
    \mathrm{KL}\!\left(\mathbf{M}^\sharp_{(\_,h)},\mathbf{M}^\sharp_{(\_,h')} 
    \!\right)
    \mathrm{d}\mu_k(\theta).
  \end{equation}
  Then we have
  \begin{equation}\scriptsize
  \left| W_k^{h'}+\mathrm{KL}\!\left(\mathbf{M}^\sharp_{(\_,h)},
  \mathbf{M}^\sharp_{(\_,h')}\!\right) \right|\le|K_>-W_>|+|W_<|+|K_<|.
  \end{equation}
  The choice of $\delta$ can make a good estimate of the integral on
  $\theta(h)>1-\delta$.
  \begin{eqnarray}
    \label{sm:eq:distance}
    &&|K_>-W_>|\nonumber\\
    &\le&\dfrac{\epsilon}{3}(1-C)\nonumber\\
    &<&\dfrac{\epsilon}{3}.
  \end{eqnarray}

  For the other two terms, directly from condition Eq.~\eqref{sm:eq:condition_2},
  we have $|K_<|<\frac{\epsilon}{3}$ and $|W_<|<\frac{\epsilon}{3}$,
  and hence $|K_>-W_>|+|K_<|+|W_<|<\epsilon$.
  
  Therefore, $W_k^{h'}$ converges to
  $-\mathrm{KL}\left(\mathbf{M}^\sharp_{(\_,h)},\mathbf{M}^\sharp_{(\_,h')}\right)$.
  
\end{proof}

\subsection{An Example on a $2$ by $2$ Matrix}

Let $\mathcal{H} = \{h_1, h_2\}$, $\mathcal{D} = \{d_1, d_2\}$, and the shared joint distribution be 
$\tiny{\mathbf{M}^{JD} = \begin{blockarray}{ccc}
&h_1 & h_2   \\
\begin{block}{c (cc)}
 d_1 & 0.3 & 0.3 \\
 d_2 & 0.1 & 0.3 \\
\end{block}
\end{blockarray}}$.
Further assume that 
the learner has uniform prior on $\mathcal{H}$, i.e. $S_0=\theta_0 = (0.5,0.5)$ 
and the true hypothesis given to the teachers is $h_1$. 
In round~$1$, the BI teacher will sample a data from $\mathcal{D}$ according to the first column of 
$\tiny{\mathbf{M} = \begin{pmatrix}0.75 & 0.5 \\ 0.25 & 0.5 \end{pmatrix}}$, which is obtained by column normalizing $\mathbf{M}^{JD}$.
On the contrast, the SCBI teacher will form his likelihood matrix by first doing $(\rr_1, \cc_1)$-Sinkhorn scaling on
$\mathbf{M}$, then column normalization if needed, where $\rr_1 = (1,1)$ and 
$\cc_1 = (1, 1)$ based on the uniform priors. 
The resulting limit matrix (with precision of three decimals) is 
$\tiny \mathbf{M}^{*}_1= \begin{pmatrix}0.634 & 0.366 \\ 0.366 & 0.634 \end{pmatrix}$, which is already column normalized.
Hence the SCBI teacher will teach according to the first column of the $\mathbf{M}_1 = \mathbf{M}^{*}_1$.
Suppose that $d_1$ is sampled by both teachers. 
The posterior for the BI learner is $S^{\text{b}}_1(d_1) = (0.6,0.4)$ (normalizing the first row of $\mathbf{M}$).
The posterior for the SCBI learner is $S^{\text{s}}_1(d_1) = (0.643, 0.366)$ (the first row of $\mathbf{M}_1$).

Similarly, in round~$2$, the SCBI teacher would update his likelihood matrix by first doing $(\rr_2, \cc_2)$-Sinkhorn scaling on $\mathbf{M}_1$, 
where $\rr_2 = (1,1)$ and 
$\cc_2 = (0.643, 0.366) \times 2 = (1.268, 0.732)$. 
The resulting limit matrix is 
$\tiny \mathbf{M}^{*}_2= \begin{pmatrix}0.758 & 0.242 \\ 0.51 & 0.49 \end{pmatrix}$. 
Then through column normalizing $\mathbf{M}^{*}_2$, a updated likelihood matrix
$\tiny \mathbf{M}_2 = \begin{pmatrix} 0.60 & 0.33 \\ 0.4 & 0.67 \end{pmatrix}$ is obtained.
The SCBI teacher will teach according to the first column of the $\mathbf{M}_2$.
Whereas the BI teacher will again sample another data according to the the first column of $\mathbf{M}$.
Suppose that $d_1$ is sampled for both teachers. The posteriors for BI and SCBI learners are $S^{\text{b}}_2(d_1,d_1) = (0.692, 0.308) $ and $S^{\text{s}}_1(d_1, d_1)=(0.758,0.242)$ respectively. 

Although same teaching points are assumed, the SCBI learner's posterior on the true hypothesis $h_1$ is higher than the BI learner in both rounds.
Moreover, notice that the KL divergence between $h_1$ and $h_2$ is increasing as the likelihood matrix is updating through the SCBI. 
This will eventually lead much faster convergence for the SCBI learner.

\section{Calculations about Sample Efficiency}
Here we compute the expectation $\mathfrak{E}$ mentioned in Sec.~4.1 of the
paper, for matrices of size $n\times 2$.

We first calculate the average of RoC for a particular matrix $\mathbf{M}$.
For simplicity, let $\mathbf{x}=\mathbf{M}_{(\_,1)}$ and
$\mathbf{y}=\mathbf{M}_{(\_,2)}$.
\begin{eqnarray}
  \label{sm:eq:avg_roc_bi}
  &&\dfrac{1}{2}\sum_{h=1}^{2}\mathfrak{R}^{\text{b}}(\mathbf{M};h) \nonumber\\
  &=&\dfrac{1}{2}\left(\mathrm{KL}(\mathbf{M}_{(\_,1)},\mathbf{M}_{(\_,2)}) +
      \mathrm{KL}(\mathbf{M}_{(\_,2)},\mathbf{M}_{(\_,1)}) \right) \nonumber\\
  &=&\dfrac{1}{2}\left(\mathrm{KL}(\mathbf{x},\mathbf{y}) +
      \mathrm{KL}(\mathbf{y},\mathbf{x}) \right) \nonumber\\
  &=&\dfrac{1}{2}\left(\sum_{i=1}^{n}(\mathbf{x}_i-\mathbf{y}_i)(\ln \mathbf{x}_i-\ln \mathbf{y}_i)\right).
\end{eqnarray}

To calculate that for SCBI, denote $\mathbf{x/y}=\mathscr{N}_{vec}(\mathbf{v})$ 
the normalization of vector $\mathbf{v}$ with$\mathbf{v}_i=\mathbf{x}_i/\mathbf{y}_i$.

\begin{eqnarray}
  \label{sm:eq:avg_roc_scbi}
  &&\dfrac{1}{2}\sum_{h=1}^{2}\mathfrak{R}^{\text{s}}(\mathbf{M};h) \nonumber\\
  &=&\!\!\dfrac{1}{2}\left[\mathrm{KL}\left(\dfrac{\mathbf{e}}{n},\mathbf{x/y}\right)
      +\mathrm{KL}\left(\dfrac{\mathbf{e}}{n},\mathbf{y/x}\right)\right] \nonumber\\
  &=&\!\!\dfrac{1}{2}\left[\dfrac{1}{n}\sum_{i=1}^{n}\left(-2\ln n -
      \ln\dfrac{\mathbf{x}_i}{\mathbf{y}_i} + \ln \left(\sum_{j=1}^{n}\dfrac{\mathbf{x}_j}{\mathbf{y}_j}\right) \right.\right.
      \nonumber\\
    && \left.\left.-
      \ln\dfrac{\mathbf{y}_i}{\mathbf{x}_i} + \ln \left(\sum_{j=1}^{n}\dfrac{\mathbf{y}_j}{\mathbf{x}_j}\right)
      \right)\right] \nonumber\\
  &=&\!\!\dfrac{1}{2}\!\!\left[
      \ln\! \left(\sum_{j=1}^{n}\dfrac{\mathbf{x}_j}{\mathbf{y}_j}\right) + \ln\!
      \left(\sum_{j=1}^{n}\dfrac{\mathbf{y}_j}{\mathbf{x}_j}\right)\right]\!-\ln n.
\end{eqnarray}

The simulation of $\mathfrak{P}$ is based on the above calculations.
For $\mathfrak{E}$, the above expressions can be further simplified.

Given $\mathbf{M}=(\mathbf{x},\mathbf{y})$ uniformly distributed in
$(\Delta^{n-1})^2$, with measure $\nu\otimes\nu$ where $\nu$ is the measure of
uniform probability distribution on $\Delta^{n-1}$, we can calculate the
expected value, 
\begin{eqnarray}
  \label{sm:eq:exp_avg_roc_bi}
  && \mathbb{E}\left[\dfrac{1}{2}\sum_{h=1}^{2}\mathfrak{R}^{\text{b}}(\mathbf{M};h)\right]
     \nonumber\\
  &=& \dfrac{1}{2}\!\!\int\limits_{(\Delta^{n-1})^2}\!\!\sum_{i=1}^{n}(\mathbf{x}_i-\mathbf{y}_i)
      (\ln\mathbf{x}_i-\ln\mathbf{y}_i)
      \mathrm{d}\nu(\mathbf{x})\mathrm{d}\nu(\mathbf{y})\nonumber\\ 
  &=& \!\!n\!\!\int\limits_{\Delta^{n-1}}\!\!\mathbf{x}_1\ln \mathbf{x}_1 \mathrm{d}\nu(\mathbf{x}) -
      n\!\!\!\int\limits_{(\Delta^{n-1})^2} \!\!\!\!\mathbf{x}_1\ln
      \mathbf{y}_1\mathrm{d}\nu(\mathbf{x})\mathrm{d}\nu(\mathbf{y})  \nonumber\\
  &=&  n\int_0^1 x (n-1)(1-x)^{n-2}\ln x\mathrm{d}x+
      \nonumber\\ 
  && \!\! n\int_0^1\!\!\!\int_0^1 x
      (n-1)^2(1-x)^{n-2}(1-y)^{n-2}\ln y\mathrm{d}x\mathrm{d}y \nonumber\\
  &=& \dfrac{n-1}{n}.
\end{eqnarray}

Here we use the fact that 
$$\int\limits_{\{\theta\in\Delta^{n-1}:\theta(h)=a\}}\mathrm{d}\mathbf{x}=(n-1)(1-a)^{n-2}.$$

Furthermore, since integral on $(\Delta^{n-1})^2$ with measure $\nu\otimes\nu$ is
symmetric on $\mathbf{x}$ and $\mathbf{y}$, we have

\begin{equation}\small
  \label{sm:eq:value_of_E}
  \mathfrak{E}=\!\!\!\!
  \int\limits_{(\Delta^{n-1})^{2}}
  \!\!\!\!\!\ln\left(\sum_{i=1}^{n}\frac{\mathbf{x}_i}{\mathbf{y}_i}\right)
  \mathrm{d}\nu(\mathbf{x})\mathrm{d}\nu(\mathbf{y})-\ln n-\frac{n-1}{n}.
\end{equation}

In general, we calculate the integral in Eq.~\eqref{sm:eq:value_of_E} by Monte
Carlo method since other numerical integral methods we tried becomes slow
dramatically as 
$n$ grows. In particular, when $n=2$, an expression related to the dilogarithm
$Li_2$ can be obtained (can be easily checked in Wolfram software).

\begin{figure}[!ht]
  \centering
  \includegraphics[scale=0.07]{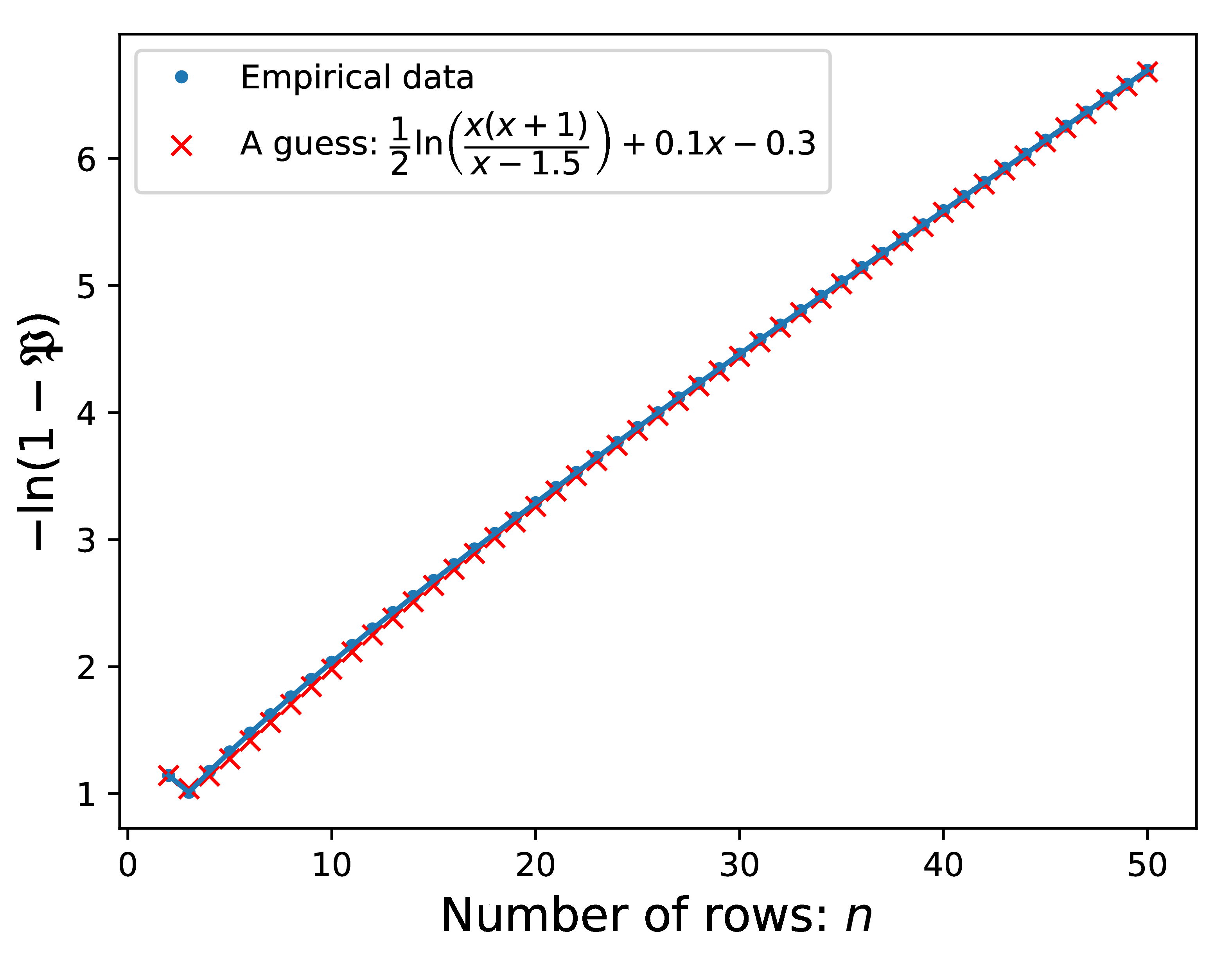}
  \caption{A guess of the $\mathfrak{P}$ values for each $n$ together with empirical data}
  \label{sm:fig:empirical_2col}
\end{figure}

And we have an empirical formula to describe the relation between 
$\mathfrak{P}$ and $n$, shown in Fig.~\ref{sm:fig:empirical_2col}:
\begin{equation}
    \label{sm:eq:empirical_2col}
    \mathfrak{P}_{(n,2)}=1-\sqrt{\dfrac{x-1.5}{x(x+1)}}e^{-0.1x+0.3}
\end{equation}

\section{Theory and Empirical Data on Stability}
\subsection{Proof of Proposition~\ref{prop:asympt_posterior}}
\begin{prop}[Proposition~\ref{prop:asympt_posterior}]
  \label{sm:prop:asympt_posterior}
  Given a sequence of identical independent $\mathcal{D}$-valued 
  random variables $(\mathrm{D}_i)_{i\ge1}$ following uniform distribution. Let 
  $\mu_0\in\mathcal{P}(\Delta^{m-1})$ be a prior distribution on $\Delta^{m-1}$, and
  $\mu_{k+1}=\Psi_{\mathrm{D}_k}^{\mathbf{L}}(\mu_k)$, then $\mu_k$ converges, 
  in probability, to
  $\sum_{i\in\mathcal{H}}a_i\delta_{i}$, where
  $a_i=\mathbb{E}_{\mu_0}\left[\theta(i)\right]$.
\end{prop}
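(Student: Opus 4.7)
The plan is to reinterpret the dynamics as a bounded martingale on $\Delta^{m-1}$ and then identify its limit. Let $(\Theta_k)_{k\ge 0}$ be random elements of $\Delta^{m-1}$ with $\mathrm{Law}(\Theta_k)=\mu_k$, coupled by the recursion $\Theta_{k+1}=T_{\mathrm{D}_{k+1}}(\Theta_k)$ with $(\mathrm{D}_j)$ i.i.d.\ uniform on $\mathcal{D}$, and let $\mathcal{F}_k=\sigma(\Theta_0,\mathrm{D}_1,\dots,\mathrm{D}_k)$. For every $i\in\mathcal{H}$, the column-sum property of Sinkhorn scaling gives $\sum_{d=1}^{n} T_d(\theta)(i) = n\theta(i)$, hence
\[
\mathbb{E}[\Theta_{k+1}(i)\mid\mathcal{F}_k]=\frac{1}{n}\sum_{d=1}^{n}T_d(\Theta_k)(i)=\Theta_k(i).
\]
So each coordinate $(\Theta_k(i))_k$ is a bounded martingale; in particular $\mathbb{E}_{\mu_k}[\theta(i)]=a_i$ is preserved along $k$.

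Doob's martingale convergence theorem then yields $\Theta_k\to\Theta_\infty$ almost surely and in $L^2$ for some $\Theta_\infty$ valued in $\Delta^{m-1}$. Combining $L^2$ convergence with the recursion,
\[
\mathbb{E}\!\left[\frac{1}{n}\sum_{d=1}^{n}\|T_d(\Theta_k)-\Theta_k\|^2\right]=\mathbb{E}\!\left[\|\Theta_{k+1}-\Theta_k\|^2\right]\longrightarrow 0,
\]
and continuity of each $T_d$ on $\Delta^{m-1}$ plus Fatou's lemma force $T_d(\Theta_\infty)=\Theta_\infty$ for every $d\in\mathcal{D}$, almost surely. Thus $\Theta_\infty$ lies almost surely in the common fixed-point set $\mathrm{Fix}:=\{\theta\in\Delta^{m-1}:T_d(\theta)=\theta\ \forall d\}$.

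Next I would show $\mathrm{Fix}=\{\delta_1,\dots,\delta_m\}$. If $T_d(\theta)=\theta$ for all $d$, every row of $\mathbf{L}^{\langle n\theta\rangle}$ equals $\theta$, so this matrix is rank one. For $\theta$ in the interior of $\Delta^{m-1}$ the scaling is $D_1\mathbf{L}D_2$ with positive diagonal $D_1,D_2$, so it has the same rank as $\mathbf{L}$; assumption (ii) yields $\mathrm{rank}(\mathbf{L})\ge 2$, a contradiction. If $\theta$ lies on a face indexed by $S:=\{i:\theta(i)>0\}$ with $|S|\ge 2$, then $\mathbf{L}^{\langle n\theta\rangle}$ has zero columns outside $S$ while its $S$-block is the Sinkhorn scaling of the submatrix $\mathbf{L}_{(\_,S)}$ (still positive with non-proportional columns) with column sums $n\theta|_S$ and row sums $\mathbf{e}_n$; the same rank argument applied on this block yields a contradiction. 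Hence $|S|=1$ and $\theta$ is a vertex.

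Writing $p_j:=\mathbb{P}(\Theta_\infty=\delta_j)$, the preserved expectations give $p_j=\mathbb{E}[\Theta_\infty(j)]=\lim_k\mathbb{E}[\Theta_k(j)]=a_j$, so $\mathrm{Law}(\Theta_\infty)=\sum_{j\in\mathcal{H}}a_j\delta_{\delta_j}$; since $\Theta_k\to\Theta_\infty$ almost surely, the deterministic measures $\mu_k$ converge weakly to this limit, which is the content of the proposition. The main obstacle is the fixed-point analysis: Sinkhorn scaling degenerates on the boundary of $\Delta^{m-1}$, and one must combine rank preservation under positive diagonal scalings with the reduction to smaller sub-simplices to exclude non-vertex boundary fixed points; the rest is a fairly standard bounded-martingale argument on the simplex.
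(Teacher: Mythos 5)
Your proof is correct, and it takes a genuinely different route from the paper's. The paper proves the concentration at the vertices by contradiction: it introduces the strictly convex Lyapunov functional $\mathscr{L}(\mu)=\mathbb{E}_\mu\left[\|\theta-\mathbf{e}/m\|_2^2\right]$, shows via Jensen's inequality that one round of the dynamics strictly increases $\mathscr{L}$ by a uniform amount $B>0$ on the compact set $\Delta_\epsilon=\{\theta:\theta(i)\le 1-\epsilon\ \forall i\}$, and derives a contradiction with the bound $\mathscr{L}(\mu)\le m$ if $\mu_k(\Delta_\epsilon)$ does not vanish; the identification of the limit then uses the same mean-preservation identity $\sum_{d}T_d(\theta)(i)=n\theta(i)$ that you use. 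You instead observe that this identity makes each coordinate $(\Theta_k(i))_k$ a bounded martingale, invoke Doob's theorem for almost sure and $L^2$ convergence, and deduce $T_d(\Theta_\infty)=\Theta_\infty$ for all $d$ from $\mathbb{E}\|\Theta_{k+1}-\Theta_k\|^2\to 0$ together with continuity of $T_d$ and Fatou. Both arguments ultimately hinge on the same two facts --- mean preservation and the characterization of the common fixed-point set as the vertex set --- but your martingale route is shorter, avoids the compactness/contradiction machinery, and delivers the stronger conclusion of almost sure convergence of the coupled chain (the paper only claims convergence in probability). Conversely, the paper's Lyapunov argument is self-contained and quantifies a per-step increment, which it reuses elsewhere. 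Your fixed-point analysis via rank preservation under positive diagonal scalings, with the reduction to the sub-simplex indexed by the support of $\theta$, is a valid substitute for the paper's cross-ratio degeneration argument; the only point you should make explicit is that for boundary $\theta$ the matrix $\mathbf{L}^{\langle n\theta\rangle}$ is the continuous extension of the interior Sinkhorn scaling (zero columns off the support, diagonal scaling of the positive subblock on it), which is exactly the continuity of $T_d$ on the closed simplex that the paper asserts from the cited Sinkhorn-scaling literature.
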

To show the above proposition, we need the following lemma:
\begin{smlem}
  \label{sm:lem:ex_invariant_uniform_sample}
  Given the conditions in Proposition~\ref{sm:prop:asympt_posterior},
  then for any $k\in\mathbb{N}$ and $h\in\mathcal{H}$,
  \begin{equation}
    \label{sm:eq:ex_inv_uniform_sample}
    \mathbb{E}_{\mu_k}(\theta(h))=\mathbb{E}_{\mu_0}(\theta(h)).
  \end{equation}
\end{smlem}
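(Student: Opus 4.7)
The plan is to prove the statement by induction on $k$, with the base case $k=0$ trivial. For the inductive step, it suffices to show that one application of $\Psi_{\mathrm{D}}^{\mathbf{L}}$ with $\mathrm{D}$ uniform on $\mathcal{D}$ preserves the expectation of the coordinate function $\theta\mapsto\theta(h)$; applying this to $\mu_k$ in place of $\mu_0$ then closes the induction.

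Concretely, I would unfold the definition of $\Psi_{\mathrm{D}}^{\mathbf{L}}$ given just before the proposition. Since $\mathrm{D}$ is uniform, $\mathrm{P}(\mathrm{D}=d)=1/n$, so
\begin{equation*}
\mathbb{E}_{\mu_{k+1}}(\theta(h))
=\int_{\Delta^{m-1}}\theta(h)\,\mathrm{d}(\Psi_{\mathrm{D}}^{\mathbf{L}}(\mu_k))(\theta)
=\frac{1}{n}\sum_{d\in\mathcal{D}}\int_{\Delta^{m-1}}\theta(h)\,\mathrm{d}(T_{d\ast}\mu_k)(\theta).
\end{equation*}
Next I would apply the standard change-of-variables formula for push-forwards, $\int f\,\mathrm{d}(T_{d\ast}\mu_k) = \int (f\circ T_d)\,\mathrm{d}\mu_k$ (valid since $T_d$ is measurable, as noted in Section~\ref{sec:consistency}), to rewrite the expression as $\frac{1}{n}\int_{\Delta^{m-1}}\sum_{d}T_d(\theta)(h)\,\mathrm{d}\mu_k(\theta)$, after interchanging the finite sum and the integral.

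The crucial step is then to evaluate $\sum_{d\in\mathcal{D}}T_d(\theta)(h)$. By definition, $T_d(\theta)(h) = \mathbf{L}^{\langle n\theta\rangle}_{(d,h)}$, and $\mathbf{L}^{\langle n\theta\rangle}$ is the Sinkhorn scaling of $\mathbf{L}$ whose column sums equal $n\theta$. In particular, the $h$-th column sum equals $n\theta(h)$, so $\sum_{d}T_d(\theta)(h) = n\theta(h)$. Substituting gives
\begin{equation*}
\mathbb{E}_{\mu_{k+1}}(\theta(h))
=\frac{1}{n}\int_{\Delta^{m-1}} n\theta(h)\,\mathrm{d}\mu_k(\theta)
=\mathbb{E}_{\mu_k}(\theta(h)),
\end{equation*}
which, combined with the inductive hypothesis, yields the claim.

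I do not anticipate a real obstacle: the uniform-sampling assumption exactly cancels the $1/n\theta(h)$ weight that would otherwise appear in $\tau_d$ (as in Lemma~\ref{sm:lem:expectation_property}), and the Sinkhorn column-sum identity does the rest. The only point worth checking carefully is the interchange of sum and integral and the push-forward identity for $T_d$, both of which follow from the continuity/measurability of $T_d$ already recorded in the paper.
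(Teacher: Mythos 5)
Your proposal is correct and follows essentially the same route as the paper's own proof: reduce to showing one step of $\Psi_{\mathrm{D}}^{\mathbf{L}}$ preserves $\mathbb{E}[\theta(h)]$, push the integral through $T_{d\ast}$, and use the Sinkhorn column-sum identity $\sum_{d}T_d(\theta)(h)=n\theta(h)$ together with the uniform weights $1/n$. No gaps.
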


\begin{proof}
  It suffices to show
  $\mathbb{E}_{\mu_{k+1}}(\theta(h))=\mathbb{E}_{\mu_k}(\theta(h))$ for any $k$.
  \begin{eqnarray}
    \mathbb{E}_{\mu_{k+1}}(\theta(h))
    &=&
        \int_{\Delta^{m-1}}\theta(h)\mathrm{d}(\mu_{k+1})(\theta)
        \nonumber\\
    &=&
        \sum_{d\in\mathcal{D}}\mathrm{D_k}(d)\int_{\Delta^{m-1}}T_d(\theta)(h)\mathrm{d}\mu_k(\theta)
        \nonumber\\
    &=&
        \int_{\Delta^{m-1}}\dfrac{1}{n}\sum_{d\in\mathcal{D}}T_d(\theta)(h)\mathrm{d}(\mu_{k})(\theta)
        \nonumber\\
    &=&\int_{\Delta^{m-1}}\theta(h)\mathrm{d}(\mu_{k})(\theta)=\mathbb{E}_{\mu_{k}}(\theta(h)).
        \nonumber
  \end{eqnarray}
\end{proof}

\begin{proof}[Proof of Proposition~\ref{sm:prop:asympt_posterior}]
  We first show the following result:

  For any $\epsilon>0$, let
  $$\Delta_{\epsilon}:=\left\{\theta\in\Delta^{m-1}:\theta(i)\le1-\epsilon,\forall
    i=1,2,\dots,m\right\},$$
  then $\lim\limits_{k\rightarrow\infty}\mu_k(\Delta_{\epsilon})=0$.

  We prove this by contradiction. Suppose the limit does not exist or is not
  $0$, then there is a positive number $C$ and a subsequence
  $(\mu_{k_i})_{i\in\mathbb{N}}$ such that $\mu_{k_i}(\Delta_{\epsilon})>C$ for
  all $i$.
  
  We define a linear functional
  $\mathscr{L}(\mu):=\mathbb{E}_{\mu}f(\theta)$, where
  $f(\theta)=||\theta-u||_2^2$  with $u=\dfrac{\mathbf{e}}{m}$ the center of
  $\Delta^{m-1}$.

  By definition, for a random variable following uniform distribution on
  $\mathcal{D}$,
  $\mathscr{L}\left(\Psi_{\mathrm{D}}^{\mathbf{L}}(\mu)\right)=
  \mathbb{E}_{\mu}\left(\mathbb{E}_{d\sim\mathrm{D}}f(T_d(\theta))\right)$. 

  Consider that $f$ is a strictly convex function, by Jensen's inequality,
  $\mathbb{E}_{d\sim\mathrm{D}}f(T_d(\theta))\ge
  f(\mathbb{E}_{d\sim\mathrm{D}}T_d(\theta))=f(\theta)$, with equality if and
  only if $T_d(\theta)=\theta$ for all $d\in\mathcal{D}$, equivalently by the
  assumptions on matrix $\mathbf{L}$, $\theta=\delta_h$ for some $h\in\mathcal{H}$.
  (This is because we assume $\mathbf{L}$ have distinct
  columns, thus not all $2$-by-$2$ cross-ratios are $1$, for any pair of columns.
  however, after $T_d$ all $2$-by-$2$ cross-ratios are $1$, indicating the existence of degeneration on every
  pair of columns. This can only happen when $\theta=\delta_h$ for some $h\in\mathcal{H}$.)
  

  Thus for any $\theta\in\Delta_\epsilon$,
  $\mathbb{E}_{d\sim\mathrm{D}}f(T_d(\theta))>f(\theta)$. As $\Delta_\epsilon$
  is compact, there is a lower bound $B>0$, such that
  $\mathbb{E}_{d\sim\mathrm{D}}f(T_d(\theta))-f(\theta)>B$ for all
  $\theta\in\Delta_\epsilon$. Thus 
  \begin{eqnarray}
    \!\!&&\mathscr{L}(\mu_{k_{i}+1})\nonumber\\
    \!\!&=&
        \!\!\!\!\!\int\limits_{\Delta^{m-1}\backslash\Delta_\epsilon}
        \!\!\!\mathbb{E}_{d\sim\mathcal{D}}f(T_d(\theta))\mathrm{d}\mu_{k_i}+
        \int\limits_{\Delta_\epsilon}\mathbb{E}_{d\sim\mathcal{D}}
        f(T_d(\theta))\mathrm{d}\mu_{k_i}
        \nonumber\\
    \!\!&>&
        \!\!\!\!\int\limits_{\Delta^{m-1}\backslash\Delta_\epsilon}
        \!\!f(\theta)\mathrm{d}\mu_{k_i}+
        \int\limits_{\Delta_\epsilon}f(\theta)\mathrm{d}\mu_{k_i}+BC
            \nonumber\\
    \!\!&=&
            \mathscr{L}(\mu_{k_i})+BC\nonumber
  \end{eqnarray}
  for all $i\in\mathbb{N}$ and simply
  $\mathscr{L}(\mu_{k+1})\ge\mathscr{L}(\mu_k)$ for general $k$.

  Therefore, $\mathscr{L}(\mu_k)$ is unbounded as $k\rightarrow\infty$ since
  there is at least a $BC>0$ increment at each $k_i$.

  However, by definition, $f$ is bounded by $m$ since $\sqrt{m}$ is
  the diameter of $\Delta^{m-1}$ under $2$-norm, thus $\mathscr{L}(\mu)\le m$.

  Such a contradiction shows that the opposite of our assumption,
  $\lim\limits_{k\rightarrow\infty}\mu_k(\Delta_\epsilon)=0$, is valid.

  Consider that $\epsilon$ is arbitrary, and in
  Lemma~\ref{sm:lem:ex_invariant_uniform_sample} we show that 
  $\mathbb{E}_{\mu_k}\theta(h)$ is invariant, thus $\mu_k$ approaches
  $\sum_{i\in\mathcal{H}}a_i\delta_i$ with $a_i=\mathbb{E}_{\mu_0}\theta(i)$ in
  probability. 
\end{proof}

\subsection{Empirical Data for Stability: Perturbation on Prior}

We sample $5$ matrices of size $3\times3$, each of them are 
column-normalized, and their columns are sampled independently and uniformly on
$\Delta^2$, listed below:
\begin{eqnarray}
  \label{sm:eq:matrix_samples_3}
  \mathbf{M}_1&=&\left(
    \begin{array}{ccc}
      0.6559 & 0.5505 & 0.7310  \\
      0.1680 & 0.3359 & 0.0403  \\
      0.1760 & 0.1136 & 0.2287
    \end{array}
  \right)\nonumber\\
  \mathbf{M}_2&=&\left(
    \begin{array}{ccc}
      0.2461 & 0.6600 & 0.4310  \\
      0.6785 & 0.0655 & 0.2325  \\
      0.0754 & 0.2746 & 0.3365
    \end{array}
  \right)\nonumber\\
  \mathbf{M}_3&=&\left(
  \begin{array}{ccc}
    0.7286 & 0.1937 & 0.7620  \\
    0.0739 & 0.4786 & 0.1999  \\
    0.1974 & 0.3277 & 0.0382
  \end{array}
  \right)\nonumber\\
  \mathbf{M}_4&=&\left(
  \begin{array}{ccc}
    0.4745 & 0.2024 & 0.5946  \\
    0.2898 & 0.7499 & 0.1313  \\
    0.2357 & 0.0477 & 0.2741
  \end{array}
  \right)\nonumber\\
  \mathbf{M}_5&=&\left(
  \begin{array}{ccc}
    0.2207 & 0.5466 & 0.1605  \\
    0.3828 & 0.3807 & 0.5697  \\
    0.3965 & 0.0727 & 0.2698
  \end{array}
  \right)
\end{eqnarray}

And the $5$ sampled priors are:
\begin{eqnarray}
  \label{sm:eq:prior_samples_3}
  \theta_1&=&\left(0.3333, 0.3333, 0.3333\right)^{\top} \nonumber\\
  \theta_2&=&\left(0.1937, 0.4291, 0.3771\right)^{\top} \nonumber\\
  \theta_3&=&\left(0.4544, 0.0814, 0.4641\right)^{\top} \nonumber\\
  \theta_4&=&\left(0.5955, 0.2995, 0.1051\right)^{\top} \nonumber\\
  \theta_5&=&\left(0.4771, 0.0593, 0.4636\right)^{\top}
\end{eqnarray}

These names (including the rank $4$ samples below) are overriding the 
previously defined identical symbols 
in this part and in the corresponding subsection in the main paper.
In the $4\times4$ cases, we sample $3$ matrices in the same way as in $3\times3$
case.

\begin{eqnarray}
  \label{sm:eq:matrix_samples_4}
  \mathbf{M}_1'\!\!&=&\!\!\left(
                  \begin{array}{cccc}
                    0.3916 & 0.2306 & 0.0460 & 0.0404  \\
                    0.1408 & 0.6350 & 0.2139 & 0.2310  \\
                    0.2375 & 0.0275 & 0.1667 & 0.2412  \\
                    0.2301 & 0.1068 & 0.5734 & 0.4874
                  \end{array}
  \right)\nonumber\\
  \mathbf{M}_2'\!\!&=&\!\!\left(
                  \begin{array}{cccc}
                    0.3744 & 0.6892 & 0.0112 & 0.3200  \\
                    0.3204 & 0.2320 & 0.4498 & 0.3530  \\
                    0.0291 & 0.0688 & 0.3865 & 0.0653  \\
                    0.2761 & 0.0100 & 0.1526 & 0.2618
                  \end{array}
  \right)\nonumber\\
  \mathbf{M}_3'\!\!&=&\!\!\left(
                  \begin{array}{cccc}
                    0.2885 & 0.0873 & 0.2319 & 0.1009  \\
                    0.0653 & 0.2239 & 0.0575 & 0.2584  \\
                    0.5934 & 0.3276 & 0.2283 & 0.3925  \\
                    0.0529 & 0.3612 & 0.4823 & 0.2482
                  \end{array}
  \right)
\end{eqnarray}

And $3$ corresponding priors are sampled:

\begin{eqnarray}
  \label{sm:eq:prior_samples_4}
  \theta_1'&=&\left(0.2500, 0.2500, 0.2500, 0.2500 \right)^{\top} \nonumber\\
  \theta_2'&=&\left(0.1789, 0.3664, 0.2915, 0.1632 \right)^{\top} \nonumber\\
  \theta_3'&=&\left(0.4460, 0.4676, 0.0821, 0.0043 \right)^{\top}
\end{eqnarray}

\begin{figure}[!ht]
  \centering
  \includegraphics[scale=0.06,trim=0 50 0 0,clip]{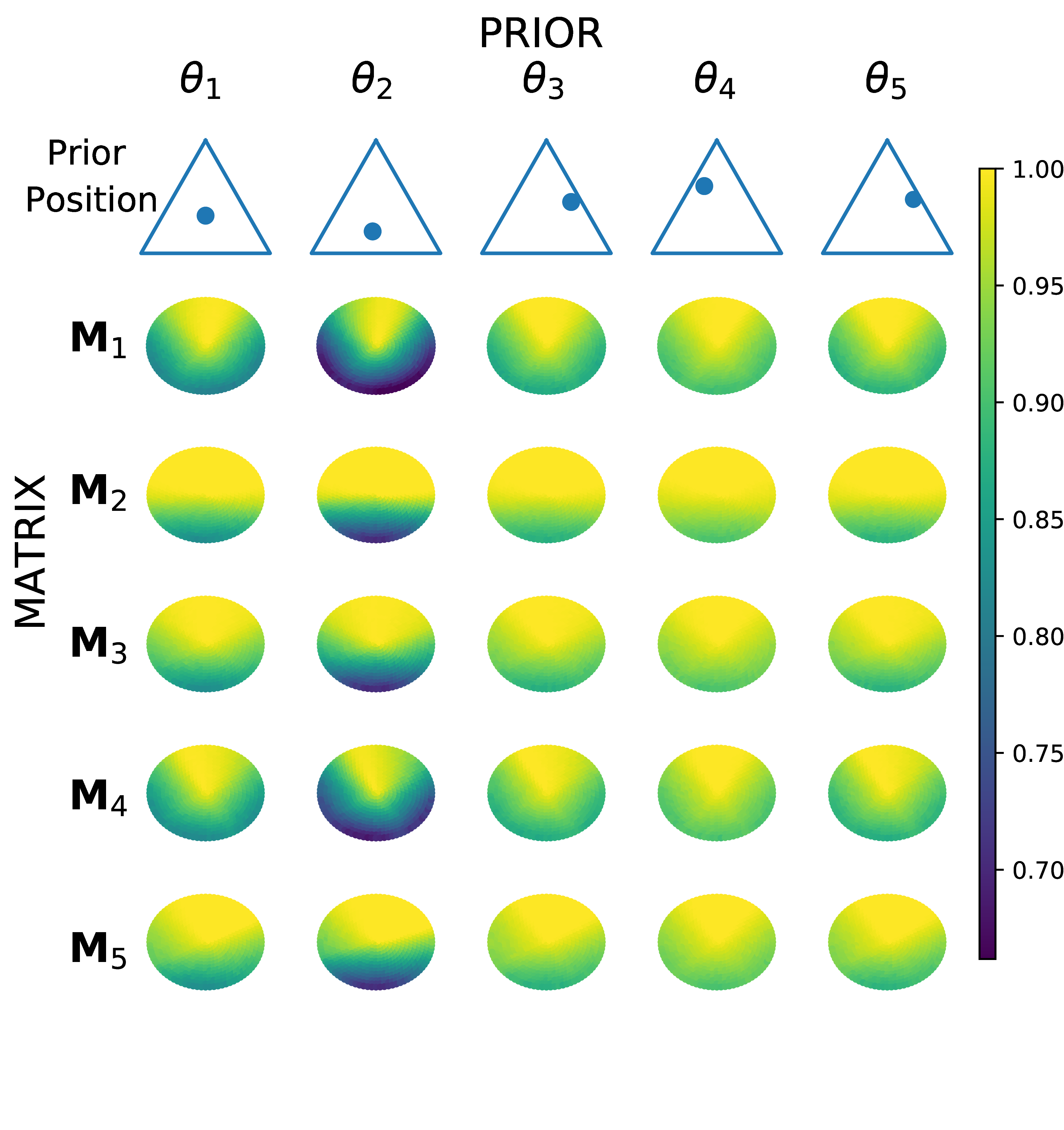}
  \caption{Successful rate of SCBI perturbed on prior. Each entry corresponds
    to a pair $\mathbf{M}$ and $\theta_0^T$. The first row shows for each
    prior $\theta_0^T$, the position it locates in $\Delta^2$ and the range of
    $\theta_0^L$ in the simulation. The $5$ rows below are the zoom-in version
    of the shaded area in each case, whose color at each point represents the
    successful rate when $\theta_0^L$ locates at that point.\label{sm:fig:mercury} }
\end{figure}

The value we use to test the effectiveness of perturbed SCBI is called the
successful rate, which is $\mathbb{E}\left[\theta_\infty^L(h)\right]=
\mathbb{E}_{\mu_\infty^L}\left[\theta(h)\right]$ where $h$  
is the true hypothesis that the teacher teaches (Definition~5.2). 
Successful rate is well defined, i.e. the limit exists, 
according to the convergences in probability
(Theorem~\ref{sm:thm:consistency_SCBI} and Proposition~\ref{sm:prop:asympt_posterior})
with an $\epsilon$ discussion based on them.
To find the successful rate,
we use Monte-Carlo method on $10^4$ teaching sequences, and use
Proposition~\ref{sm:prop:asympt_posterior} to accelerate the simulation.

We can estimate an upper bound of the standard deviation (precision) of 
the empirical successful rate calculated based on Proposition~\ref{sm:prop:asympt_posterior}. The successful rate
of a single teaching sequence
is between $0$ and $1$, thus with a standard deviation smaller than $1$.
So the standard deviation of the empirical successful rate is bounded
by $(N)^{-1/2}$ where $N$ is the number of sample sequences.
Actually the precision is much smaller since the successful rate for a 
single sequence is much more stable.

Our first simulation is shown in Fig.~\ref{sm:fig:mercury}, where we take
$\theta_0^L$ evenly on a series of concentric circles centered at $\theta_0^T$.
There are $14$ such circles with radius $0.005$ to $0.07$. On the $i$-th layer
(smallest circle is the first layer) we take $6i$ many points evenly separated,
the upper right figure in Fig.~\ref{sm:fig:radial_3} shows how the points are
taken in detail.

Thus we have $6$ groups of points each distributed along a ray. We plot the
successful rate versus the distance from the center along each ray in
Fig.~\ref{sm:fig:radial_3} for all the $25$ combinations of $\mathbf{M}$ and
$\theta_0^T$.

\begin{figure*}[!ht]
  \centering
  \includegraphics[scale=0.079]{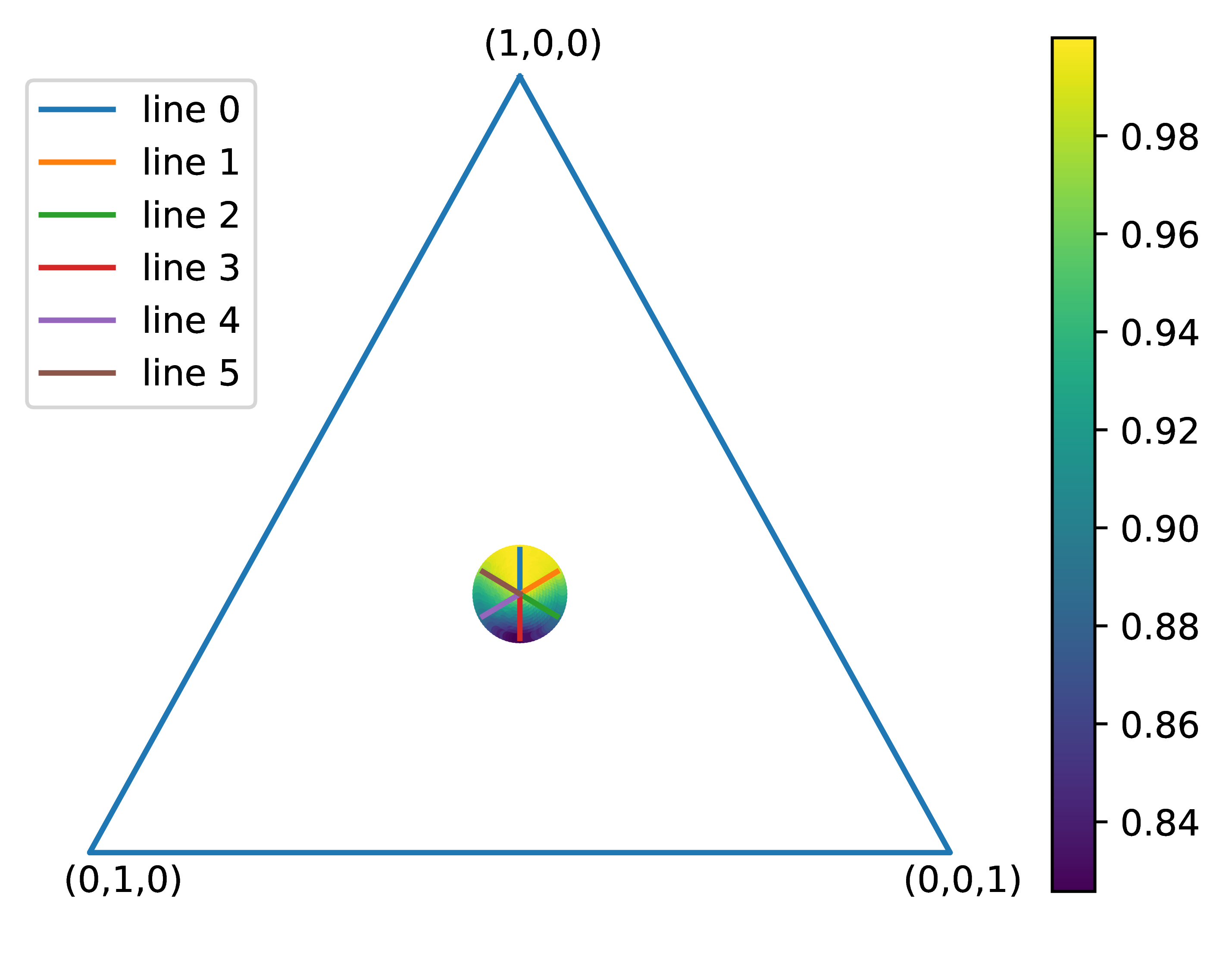}
  \includegraphics[scale=0.079]{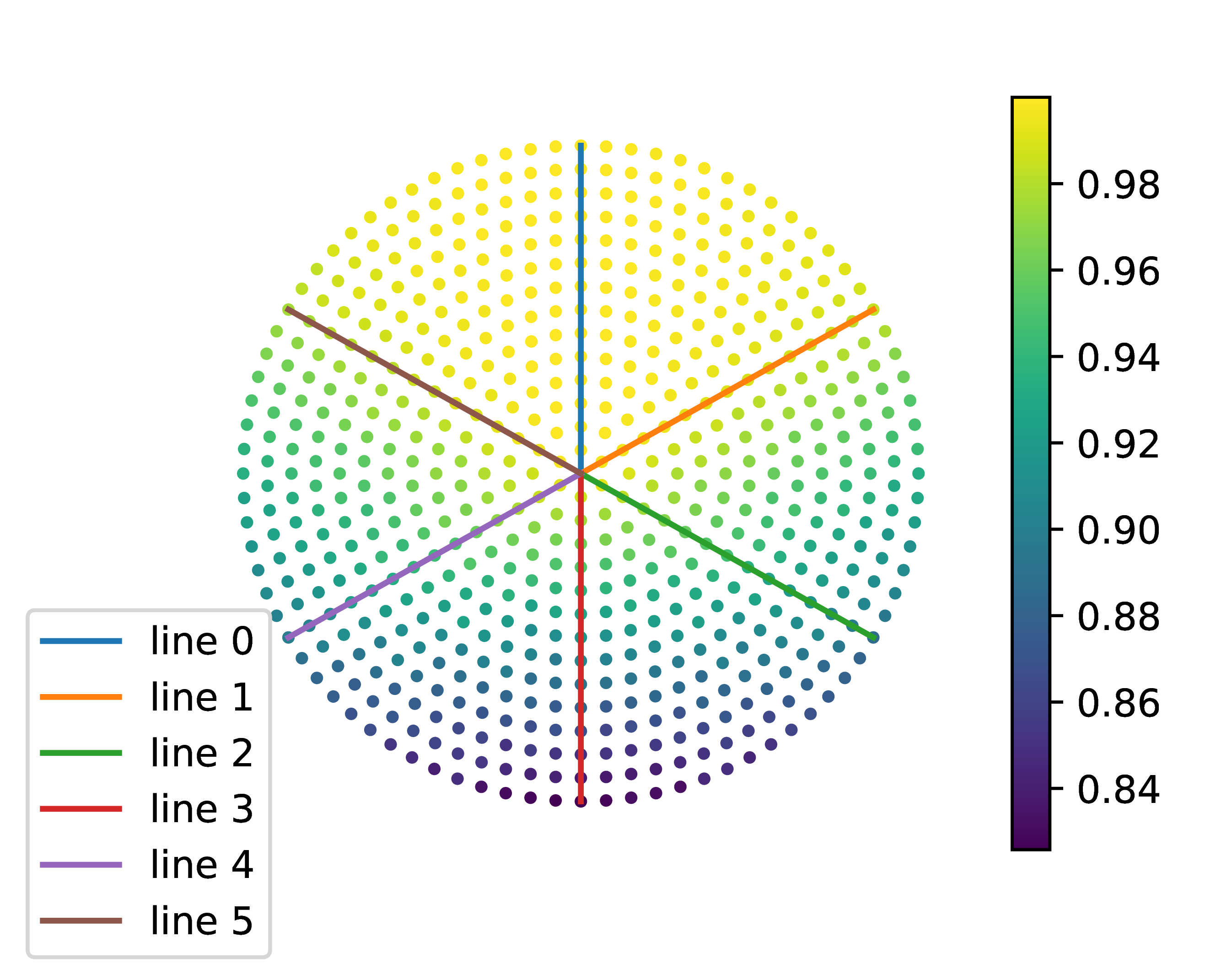}
  \includegraphics[scale=0.079,trim=0 50 0 0,clip]{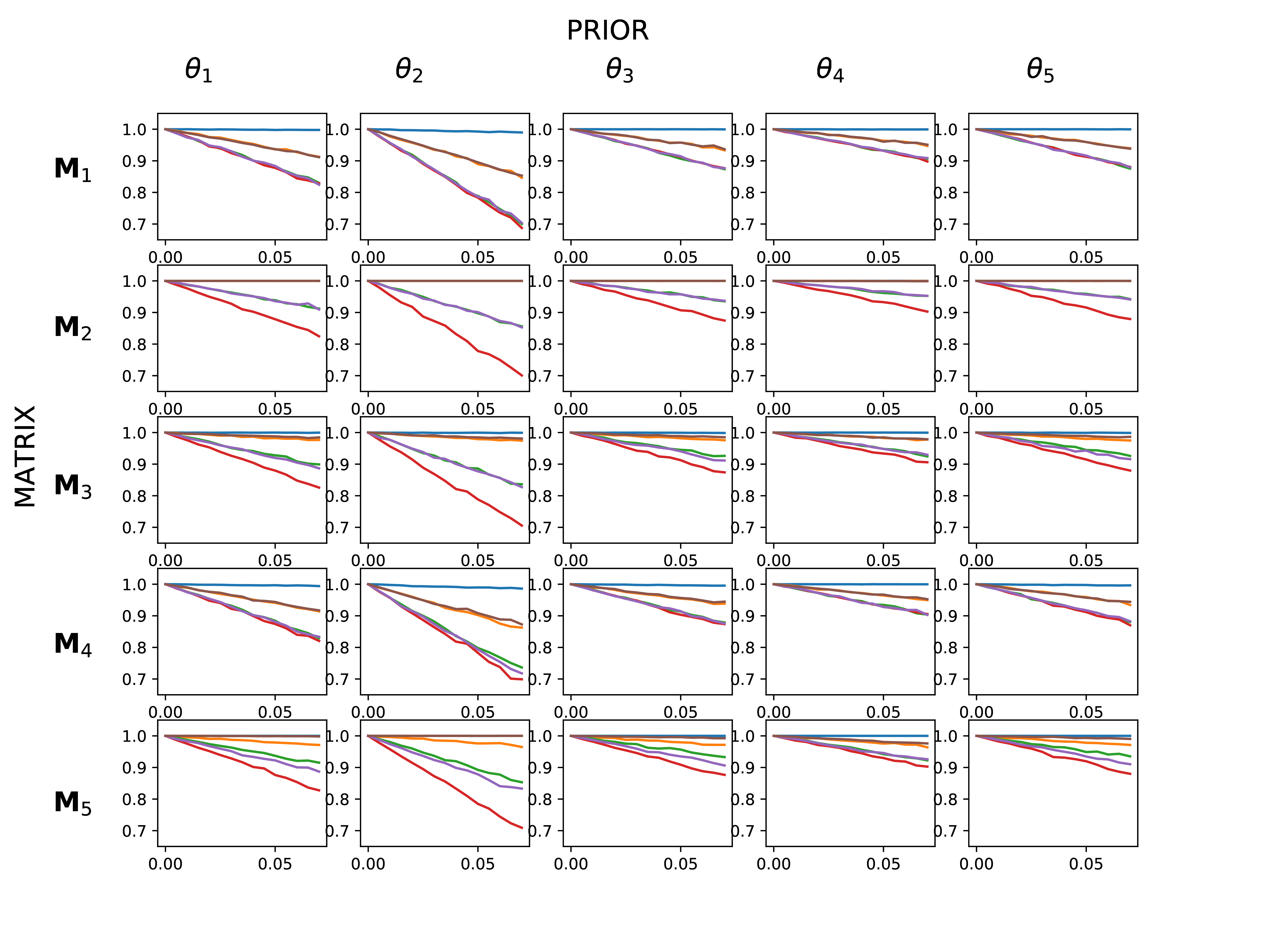}
  \caption{Upper-Left: the six rays at the center $\theta_1$. Upper-Right:
    zoom-in figure of the six rays in general. Lower: Successful rate versus
    distance to center along $6$ rays. Fig.~5 in the main paper contains 
    the Row 3 Column 1 picture of the lower one (with a different $y$-scale).\label{sm:fig:radial_3} } 
\end{figure*}

To have a similar directional data for matrices of size $4\times4$, we take a
sample of $15$ directions in $\mathbb{R}^3$ (showing in Fig.~\ref{sm:fig:radial_4} in spherical coordinates
centered at $(1/4,1/4,1/4,1/4)^\top$, with $(1,0,0,0)^\top$ as $\phi=0$ axis and $(0,1,0,0)^\top$ on the half-plane given by $\theta=0$) and simulate the perturbations of
$\theta_0^L$ in $\Delta^3$ along the $15$ directions. On each ray, we take $20$
evenly placed $\theta_0^L$ with distance to the center $\theta_0^T$ from $0.005$
to $0.100$. Then we plot the
successful rate versus the distance in Fig.~\ref{sm:fig:radial_4} for all $9$
cases as before.

\begin{figure*}[!ht]
  \centering
  \includegraphics[scale=0.13]{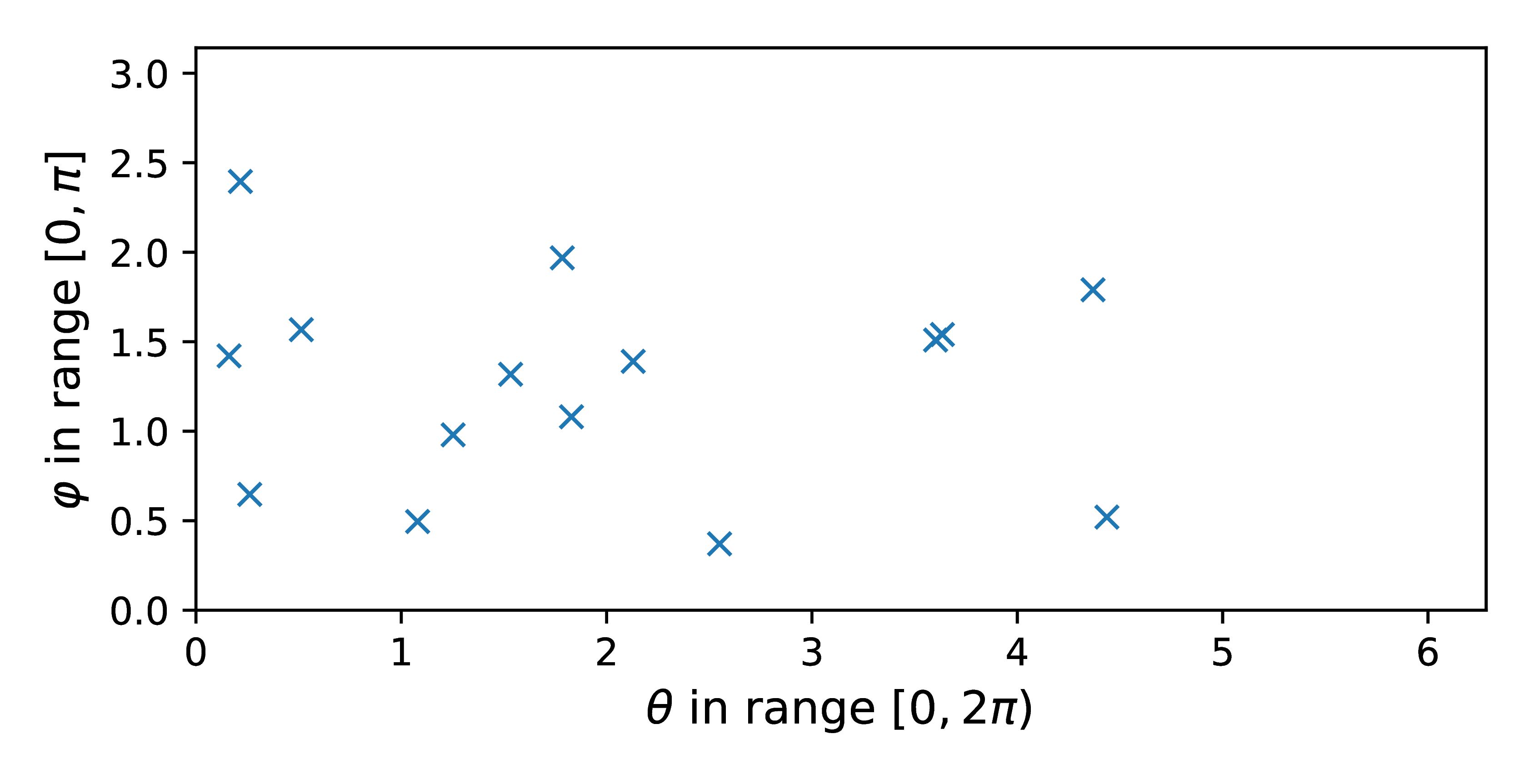}
  \includegraphics[scale=0.086,trim=0 35 0 0,clip]{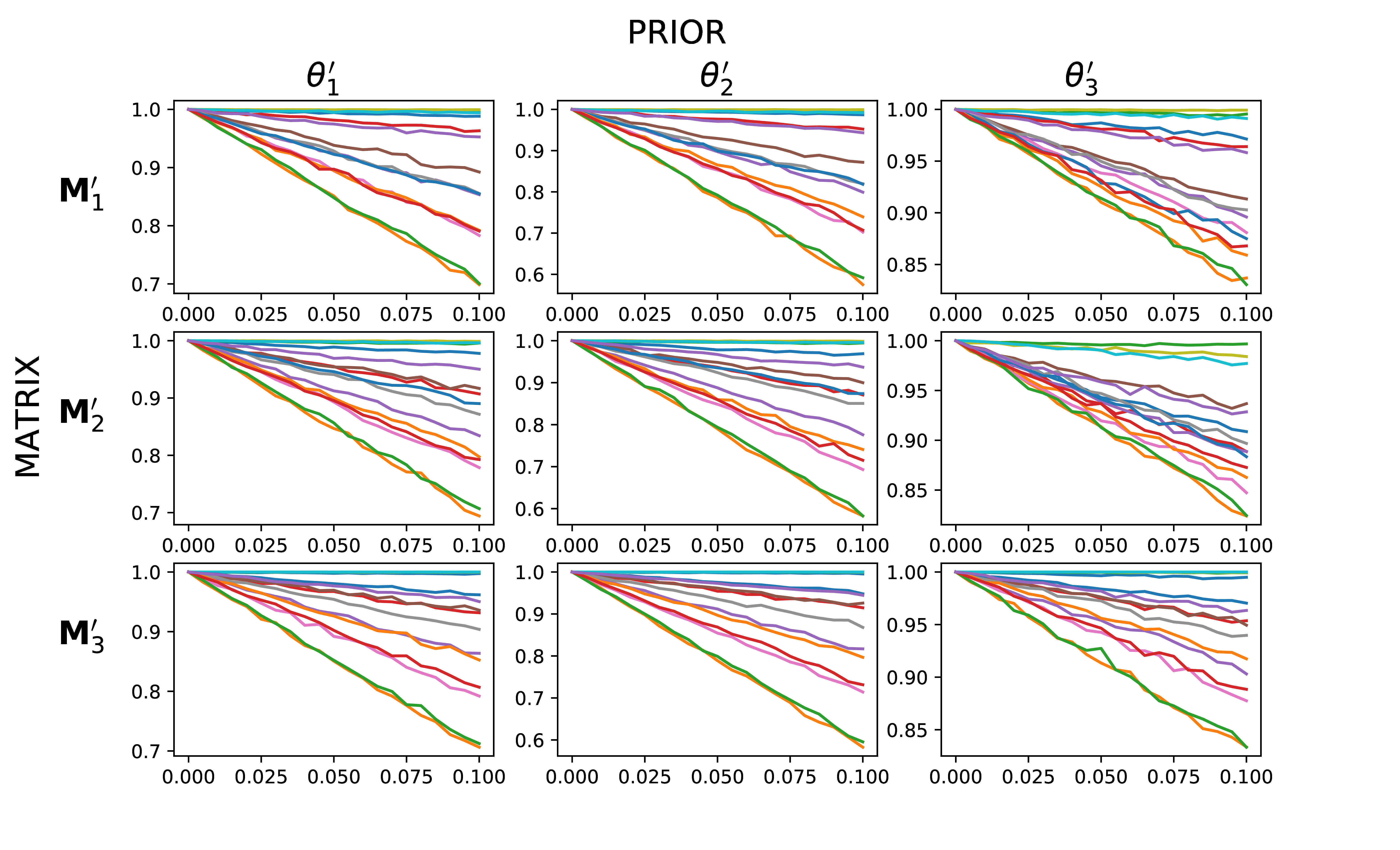}
  \caption{Upper: the sampled directions in spherical coordinates.
  Lower: Successful rate versus distance to center, along 15 rays, for all the
  $9$ cases of matrices of size $4\times4$. The plot at Row 1 Column 1 appears in Fig.~5 of the main file.}
  \label{sm:fig:radial_4}
\end{figure*}

\begin{remark}
  This part provides evidences of linear influence of the perturbation distance on
  the successful rate along a fixed direction.  
\end{remark}

Next we explore the global behavior of perturbations on prior.
Here we sample for each combination of $\mathbf{M}$ and $\theta_0^T$ a set of
$300$ points for $\theta_0^L$ evenly distributed in $\Delta^3$.

In Fig.~\ref{sm:fig:trapezoid}, we plot the successful rate versus the value of
$\theta_0^L(h)$, for all $25$ situations.
\begin{figure*}[!ht]
  \centering
  \includegraphics[scale=0.062,trim=20 50 70 10, clip]{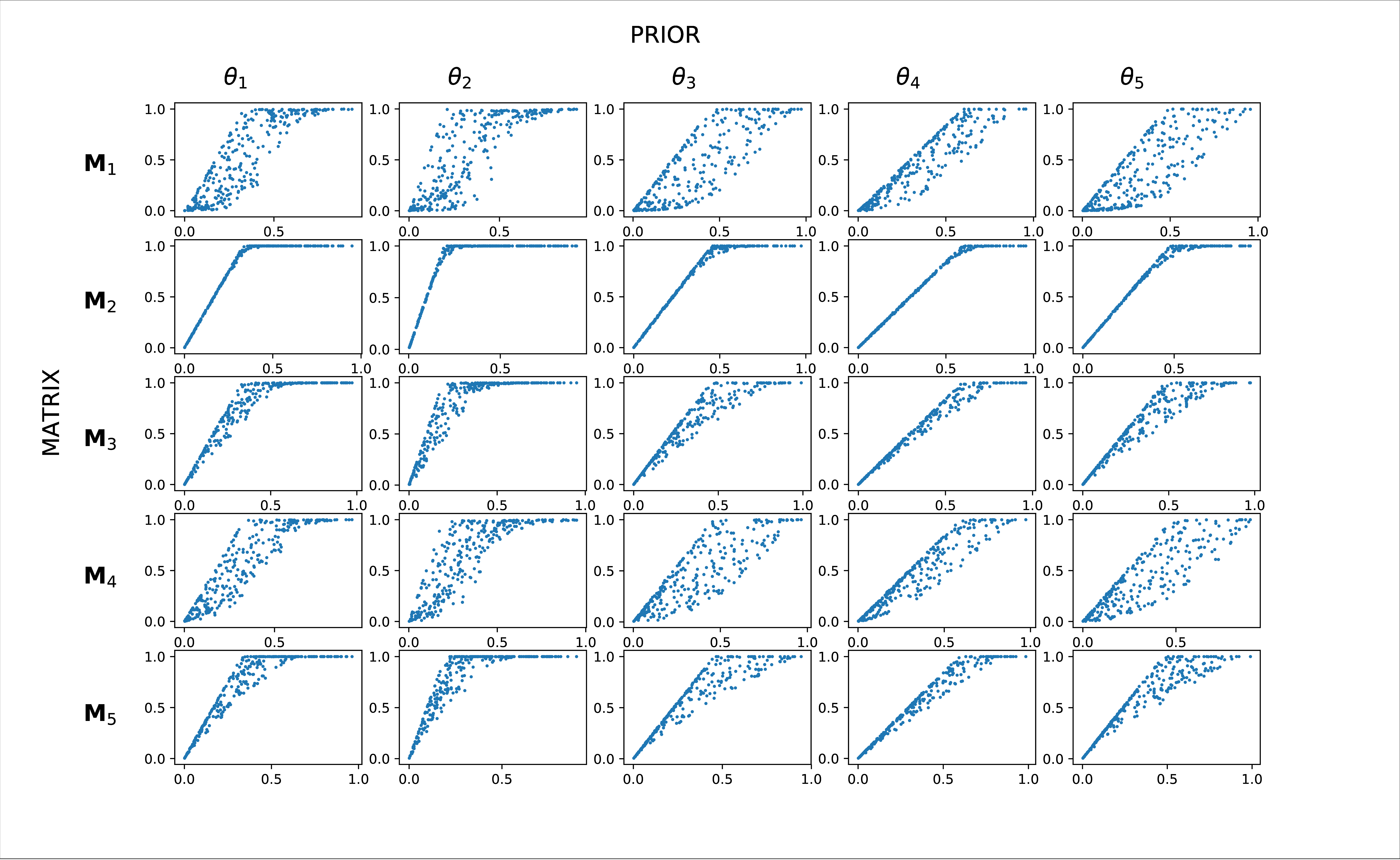}
  \caption{The 25 cases of $3\times3$ matrices, with successful rate versus
    $\theta_0^T(h)$ plotted.
    Plot at Row 3 Column 1 appears in Fig.~5 of the main file.
    \label{sm:fig:trapezoid} }
\end{figure*}

We plot in Fig.~\ref{sm:fig:reverse} the distance to center as $x$-coordinates,
for $9$ situations with matrices of size $4\times4$.

\begin{figure*}[!ht]
  \centering
  \includegraphics[scale=0.08]{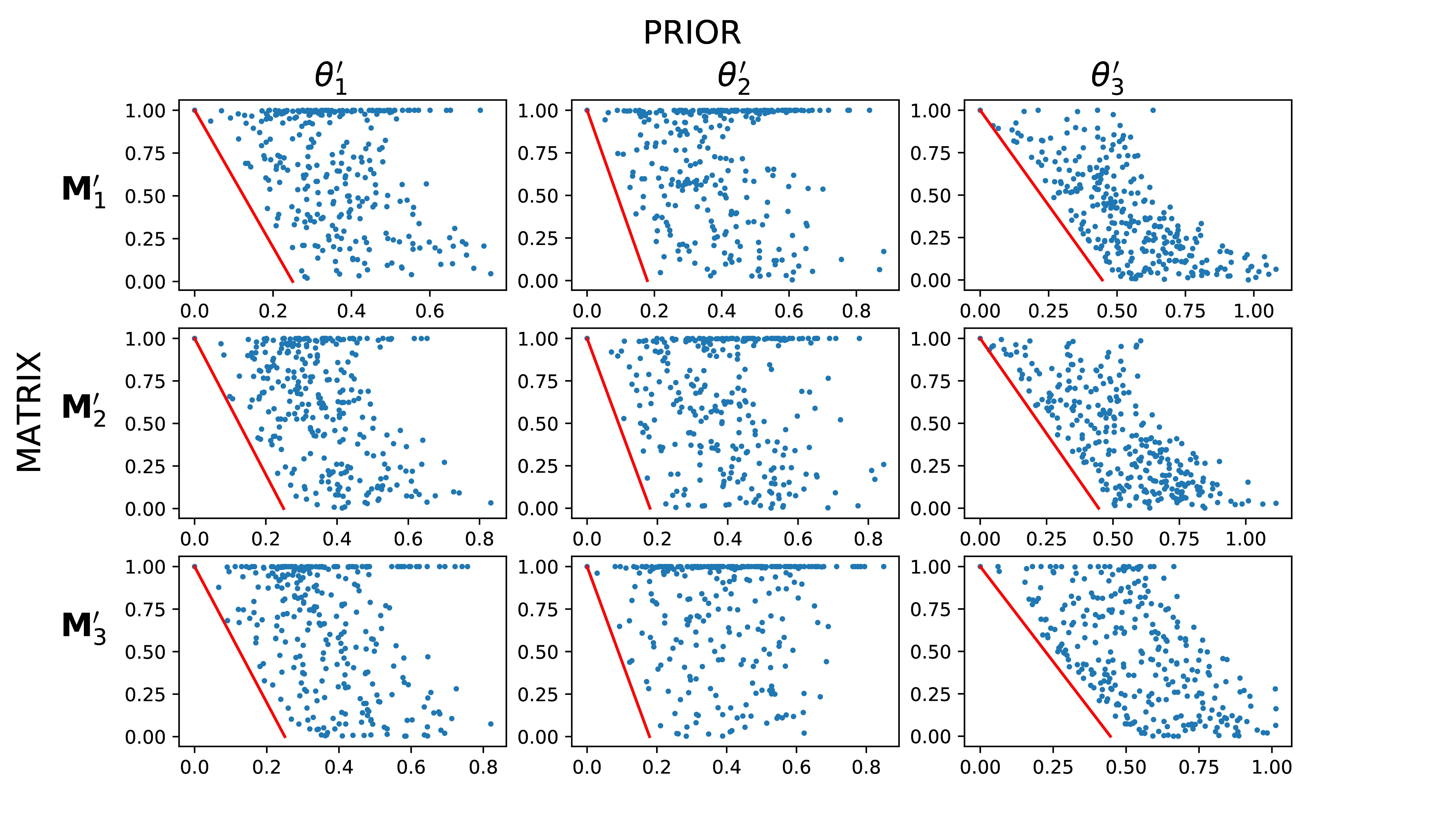}
  \caption{The 9 cases of $4\times4$ matrices, with successful rate versus the
    distance to the center plotted. Plot at Row 1 Column 1 appears in Fig.~5 of the main file. Red line is the lower bound given in
    Conjecture~5.3.\label{sm:fig:reverse} }
\end{figure*}
In this part, we observe that there is a lower bound of the successful rate
which depends linearly on the distance to center, with slope bounded by
$\frac{1}{\theta_0^T(h)}$ (Conjecture~5.3).

\subsection{Empirical Data for Stability: Perturbation on Matrix}

Fig.~\ref{sm:fig:mv} shows the behavior of perturbations on all sampled
$3\times3$ matrices in Section~5. Perturbations are taken only along the
relevant column / irrelevant column, since a perturbation on the 
target column is equivalent of the combination of a perturbation on
other two columns (they have the same set of Cross-ratios, which determines
the SCBI behavior). The cycle path in each plot is the equi-normalized-KL path,
with any point on the path having the same normalized-KL to the target 
column as that of the original matrix $\mathbf{T}$. 

These graphs should not be confused with the ones occur in the prior perturbed
part, as we are plotting each column of the matrix here (the simplex is actually $\mathcal{P}(\mathcal{D})$), while we were plotting
the priors in previous discussion (the simplex is $\mathcal{P}(\mathcal{H})$).
\begin{figure*}[!hb]
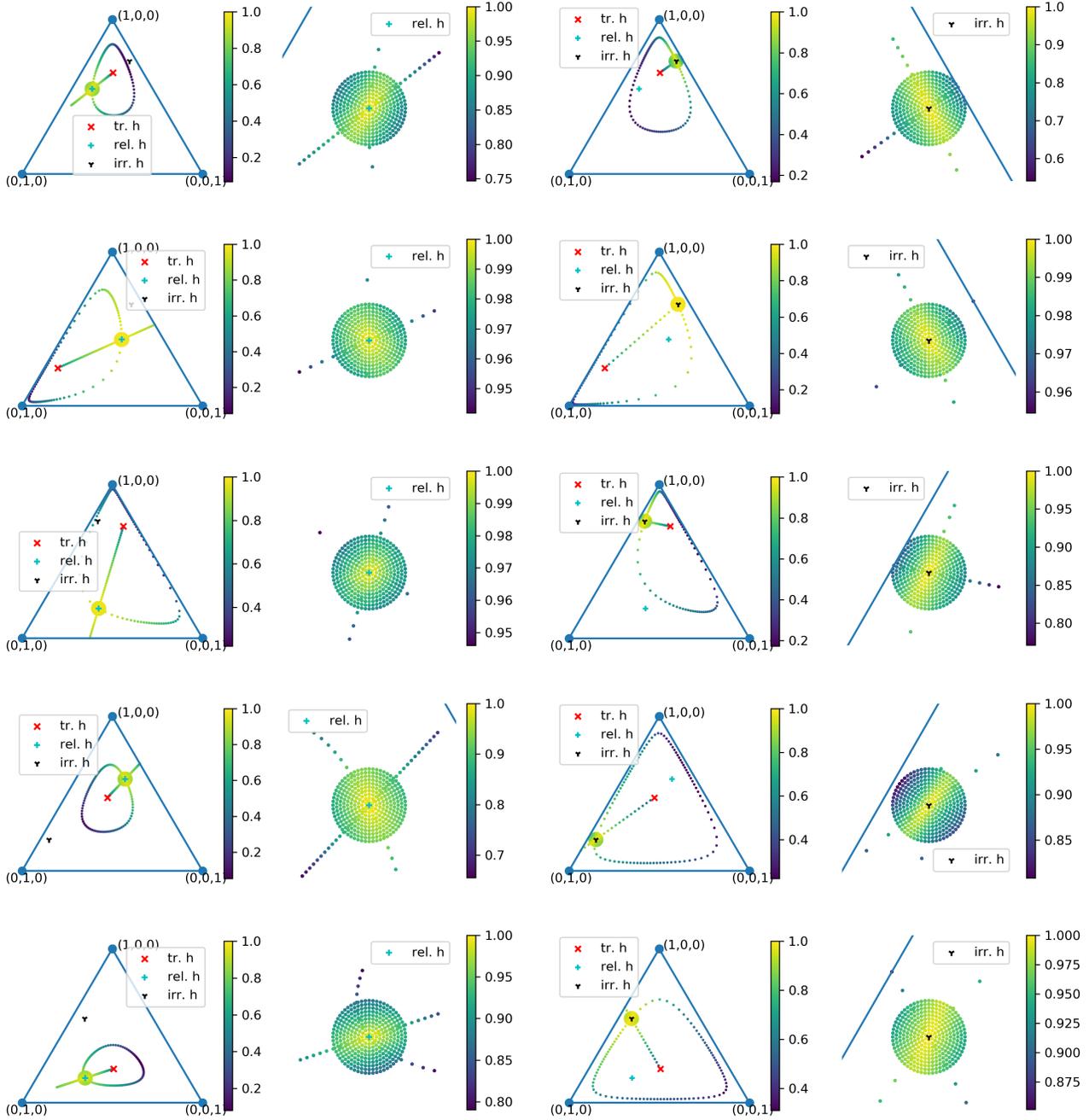

  \centering
  \foreach \x in {0,5,10,15,20}{
    \includegraphics[scale=0.56, trim=10 0 20 0,clip]{figures_SI/MV_Irr_Case_\x_Original.png}
    \includegraphics[scale=0.56, trim=20 0 10 0,clip]{figures_SI/MV_Irr_Case_\x_Zoomin.png}
    \includegraphics[scale=0.56, trim=0 0 20 0,clip]{figures_SI/MV_Rel_Case_\x_Original.png}
    \includegraphics[scale=0.56, trim=10 0 5 0,clip]{figures_SI/MV_Rel_Case_\x_Zoomin.png}

  }
  \caption{Perturbations on matrix $\mathbf{L}$. First column: Perturbations on 
    the irrelevant column of $\mathbf{L}$. Second column: zoom-in of the first row.
    Third column: Perturbations on the relevant column of $\mathbf{L}$. Last column:
    zoom-in of the third column.
    The scales of color in the zoomed figures are different from that of the
    original ones.
    Fig.~6 in the main paper is the third row here.
    \label{sm:fig:mv} }
\end{figure*}

\newpage
\section*{Acknowledgements}

This material is based on research sponsored by the Air Force Research Laboratory and DARPA under agreement number FA8750-17-2-0146 and the Army Research Office and DARPA under agreement HR00112020039. The U.S. Government is authorized to reproduce and distribute reprints for Governmental purposes notwithstanding any copyright notation thereon.
This work was also supported by DoD grant 72531RTREP and NSF MRI 1828528 to PS.

\newpage
\bibliographystyle{icml2020}
\bibliography{references}

\begin{thebibliography}{2}
\providecommand{\natexlab}[1]{#1}
\providecommand{\url}[1]{\texttt{#1}}
\expandafter\ifx\csname urlstyle\endcsname\relax
  \providecommand{\doi}[1]{doi: #1}\else
  \providecommand{\doi}{doi: \begingroup \urlstyle{rm}\Url}\fi

\bibitem[Fienberg et~al.(1970)]{fienberg1970iterative}
Stephen~E Fienberg et~al.
\newblock An iterative procedure for estimation in contingency tables.
\newblock \emph{The Annals of Mathematical Statistics}, 41\penalty0
  (3):\penalty0 907--917, 1970.

\bibitem[Miescke and Liese(2008)]{miescke2008decision}
Klaus-J. Miescke and Friedrich Liese.
\newblock \emph{Statistical Decision Theory: Estimation, Testing, and
  Selection}.
\newblock Springer, 2008.
\newblock \doi{https://doi.org/10.1007/978-0-387-73194-0}.

\end{thebibliography}


\begin{thebibliography}{35}
\providecommand{\natexlab}[1]{#1}
\providecommand{\url}[1]{\texttt{#1}}
\expandafter\ifx\csname urlstyle\endcsname\relax
  \providecommand{\doi}[1]{doi: #1}\else
  \providecommand{\doi}{doi: \begingroup \urlstyle{rm}\Url}\fi

\bibitem[Bengio et~al.(2009)Bengio, Louradour, Collobert, and
  Weston]{bengio2009curriculum}
Bengio, Y., Louradour, J., Collobert, R., and Weston, J.
\newblock Curriculum learning.
\newblock In \emph{Proceedings of the 26th annual international conference on
  machine learning}, pp.\  41--48. ACM, 2009.

\bibitem[Berger et~al.(1994)Berger, Moreno, Pericchi, Bayarri, Bernardo, Cano,
  De~la Horra, Mart{\'\i}n, R{\'\i}os-Ins{\'u}a, Betr{\`o},
  et~al.]{berger1994overview}
Berger, J.~O., Moreno, E., Pericchi, L.~R., Bayarri, M.~J., Bernardo, J.~M.,
  Cano, J.~A., De~la Horra, J., Mart{\'\i}n, J., R{\'\i}os-Ins{\'u}a, D.,
  Betr{\`o}, B., et~al.
\newblock An overview of robust bayesian analysis.
\newblock \emph{Test}, 3\penalty0 (1):\penalty0 5--124, 1994.

\bibitem[Bonawitz et~al.(2011)Bonawitz, Shafto, Gweon, Goodman, Spelke, and
  Schulz]{bonawitz2011double}
Bonawitz, E., Shafto, P., Gweon, H., Goodman, N.~D., Spelke, E., and Schulz, L.
\newblock The double-edged sword of pedagogy: Instruction limits spontaneous
  exploration and discovery.
\newblock \emph{Cognition}, 120\penalty0 (3):\penalty0 322--330, 2011.

\bibitem[Bridgers et~al.(in press)Bridgers, Jara-Ettinger, and
  Gweon]{bridgers2019young}
Bridgers, S., Jara-Ettinger, J., and Gweon, H.
\newblock Young children consider the expected utility of others' learning to
  decide what to teachn.
\newblock \emph{Nature Human Behaviour}, in press.

\bibitem[Csibra \& Gergely(2009)Csibra and Gergely]{csibra2009natural}
Csibra, G. and Gergely, G.
\newblock Natural pedagogy.
\newblock \emph{Trends in cognitive sciences}, 13\penalty0 (4):\penalty0
  148--153, 2009.

\bibitem[Doliwa et~al.(2014)Doliwa, Fan, Simon, and Zilles]{Doliwa2014}
Doliwa, T., Fan, G., Simon, H.~U., and Zilles, S.
\newblock Recursive teaching dimension, {VC}-dimension and sample compression.
\newblock \emph{Journal of Machine Learning Research}, 15\penalty0
  (1):\penalty0 3107--3131, 2014.

\bibitem[Doob(1949)]{doob1949application}
Doob, J.~L.
\newblock Application of the theory of martingales.
\newblock \emph{Le calcul des probabilites et ses applications}, pp.\  23--27,
  1949.

\bibitem[Eaves \& Shafto(2016)Eaves and Shafto]{Eaves2016a}
Eaves, B.~S. and Shafto, P.
\newblock Parameterizing developmental changes in epistemic trust.
\newblock \emph{Psychonomic Bulletin \& Review}, pp.\  1--30, 2016.

\bibitem[Eaves~Jr et~al.(2016)Eaves~Jr, Feldman, Griffiths, and
  Shafto]{Eaves2016c}
Eaves~Jr, B.~S., Feldman, N.~H., Griffiths, T.~L., and Shafto, P.
\newblock Infant-directed speech is consistent with teaching.
\newblock \emph{Psychological review}, 123\penalty0 (6):\penalty0 758, 2016.

\bibitem[Elman(1993)]{elman1993learning}
Elman, J.~L.
\newblock Learning and development in neural networks: The importance of
  starting small.
\newblock \emph{Cognition}, 48\penalty0 (1):\penalty0 71--99, 1993.

\bibitem[Fienberg et~al.(1970)]{fienberg1970iterative}
Fienberg, S.~E. et~al.
\newblock An iterative procedure for estimation in contingency tables.
\newblock \emph{The Annals of Mathematical Statistics}, 41\penalty0
  (3):\penalty0 907--917, 1970.

\bibitem[Fisac et~al.(2017)Fisac, Gates, Hamrick, Liu, Hadfield-Menell,
  Palaniappan, Malik, Sastry, Griffiths, and Dragan]{fisac2017pragmatic}
Fisac, J.~F., Gates, M.~A., Hamrick, J.~B., Liu, C., Hadfield-Menell, D.,
  Palaniappan, M., Malik, D., Sastry, S.~S., Griffiths, T.~L., and Dragan,
  A.~D.
\newblock Pragmatic-pedagogic value alignment.
\newblock \emph{arXiv preprint arXiv:1707.06354}, 2017.

\bibitem[Frank \& Goodman(2012)Frank and Goodman]{frank2012predicting}
Frank, M.~C. and Goodman, N.~D.
\newblock Predicting pragmatic reasoning in language games.
\newblock \emph{Science}, 336\penalty0 (6084):\penalty0 998--998, 2012.

\bibitem[Ghahramani(2015)]{ghahramani2015probabilistic}
Ghahramani, Z.
\newblock Probabilistic machine learning and artificial intelligence.
\newblock \emph{Nature}, 521\penalty0 (7553):\penalty0 452, 2015.

\bibitem[Goodman \& Stuhlm{\"u}ller(2013)Goodman and
  Stuhlm{\"u}ller]{goodman2013knowledge}
Goodman, N.~D. and Stuhlm{\"u}ller, A.
\newblock Knowledge and implicature: Modeling language understanding as social
  cognition.
\newblock \emph{Topics in cognitive science}, 5\penalty0 (1):\penalty0
  173--184, 2013.

\bibitem[Hadfield-Menell et~al.(2016)Hadfield-Menell, Russell, Abbeel, and
  Dragan]{hadfield2016cooperative}
Hadfield-Menell, D., Russell, S.~J., Abbeel, P., and Dragan, A.
\newblock Cooperative inverse reinforcement learning.
\newblock In \emph{Advances in neural information processing systems}, pp.\
  3909--3917, 2016.

\bibitem[Hershkowitz et~al.(1988)Hershkowitz, Rothblum, and
  Schneider]{hershkowitz1988classifications}
Hershkowitz, D., Rothblum, U.~G., and Schneider, H.
\newblock Classifications of nonnegative matrices using diagonal equivalence.
\newblock \emph{SIAM journal on Matrix Analysis and Applications}, 9\penalty0
  (4):\penalty0 455--460, 1988.

\bibitem[Ho et~al.(2016)Ho, Littman, MacGlashan, Cushman, and
  Austerweil]{ho2016showing}
Ho, M.~K., Littman, M., MacGlashan, J., Cushman, F., and Austerweil, J.~L.
\newblock Showing versus doing: Teaching by demonstration.
\newblock In \emph{Advances in Neural Information Processing Systems}, pp.\
  3027--3035, 2016.

\bibitem[Jara-Ettinger et~al.(2016)Jara-Ettinger, Gweon, Schulz, and
  Tenenbaum]{jara2016naive}
Jara-Ettinger, J., Gweon, H., Schulz, L.~E., and Tenenbaum, J.~B.
\newblock The na{\"\i}ve utility calculus: Computational principles underlying
  commonsense psychology.
\newblock \emph{Trends in cognitive sciences}, 20\penalty0 (8):\penalty0
  589--604, 2016.

\bibitem[Kadane et~al.(1978)Kadane, Chuang, et~al.]{kadane1978stable}
Kadane, J.~B., Chuang, D.~T., et~al.
\newblock Stable decision problems.
\newblock \emph{The Annals of Statistics}, 6\penalty0 (5):\penalty0 1095--1110,
  1978.

\bibitem[Miescke \& Liese(2008)Miescke and Liese]{miescke2008decision}
Miescke, K.-J. and Liese, F.
\newblock \emph{Statistical Decision Theory: Estimation, Testing, and
  Selection}.
\newblock Springer, 2008.
\newblock \doi{https://doi.org/10.1007/978-0-387-73194-0}.

\bibitem[Murphy(2012)]{murphy2012machine}
Murphy, K.~P.
\newblock \emph{Machine learning: a probabilistic perspective}.
\newblock MIT press, 2012.

\bibitem[Schneider(1989)]{schneider1989matrix}
Schneider, M.~H.
\newblock Matrix scaling, entropy minimization, and conjugate duality. i.
  existence conditions.
\newblock \emph{Linear Algebra and its Applications}, 114:\penalty0 785--813,
  1989.

\bibitem[Shafto \& Goodman(2008)Shafto and Goodman]{shafto2008teaching}
Shafto, P. and Goodman, N.
\newblock Teaching games: Statistical sampling assumptions for learning in
  pedagogical situations.
\newblock In \emph{Proceedings of the 30th annual conference of the Cognitive
  Science Society}, pp.\  1632--1637. Cognitive Science Society Austin, TX,
  2008.

\bibitem[Shafto et~al.(2012)Shafto, Goodman, and Frank]{shafto2012learning}
Shafto, P., Goodman, N.~D., and Frank, M.~C.
\newblock Learning from others: The consequences of psychological reasoning for
  human learning.
\newblock \emph{Perspectives on Psychological Science}, 7\penalty0
  (4):\penalty0 341--351, 2012.

\bibitem[Shafto et~al.(2014)Shafto, Goodman, and Griffiths]{Shafto2014}
Shafto, P., Goodman, N.~D., and Griffiths, T.~L.
\newblock A rational account of pedagogical reasoning: Teaching by, and
  learning from, examples.
\newblock \emph{Cognitive Psychology}, 71:\penalty0 55--89, 2014.

\bibitem[Skinner(1958)]{skinner1958teaching}
Skinner, B.~F.
\newblock Teaching machines.
\newblock \emph{Science}, 128\penalty0 (3330):\penalty0 969--977, 1958.

\bibitem[Tenenbaum et~al.(2011)Tenenbaum, Kemp, Griffiths, and
  Goodman]{tenenbaum2011grow}
Tenenbaum, J.~B., Kemp, C., Griffiths, T.~L., and Goodman, N.~D.
\newblock How to grow a mind: Statistics, structure, and abstraction.
\newblock \emph{science}, 331\penalty0 (6022):\penalty0 1279--1285, 2011.

\bibitem[Tomasello(1999)]{tomasello1999}
Tomasello, M.
\newblock \emph{The cultural origins of human cognition}.
\newblock Harvard University Press, Cambridge, MA, 1999.

\bibitem[Wang et~al.(2019{\natexlab{a}})Wang, Paranamana, and
  Shafto]{wang2018generalizing}
Wang, P., Paranamana, P., and Shafto, P.
\newblock Generalizing the theory of cooperative inference.
\newblock \emph{AIStats}, 2019{\natexlab{a}}.

\bibitem[Wang et~al.(2019{\natexlab{b}})Wang, Wang, Paranamana, and
  Shafto]{wang2019mathematical}
Wang, P., Wang, J., Paranamana, P., and Shafto, P.
\newblock A mathematical theory of cooperative communication,
  2019{\natexlab{b}}.

\bibitem[Yang et~al.(2018)Yang, Yu, Givchi, Wang, Vong, and
  Shafto]{YangYGWVS18}
Yang, S.~C., Yu, Y., Givchi, A., Wang, P., Vong, W.~K., and Shafto, P.
\newblock Optimal cooperative inference.
\newblock In \emph{{AISTATS}}, volume~84 of \emph{Proceedings of Machine
  Learning Research}, pp.\  376--385. {PMLR}, 2018.

\bibitem[Zhu(2013)]{zhu2013machine}
Zhu, X.
\newblock Machine teaching for bayesian learners in the exponential family.
\newblock In \emph{Advances in Neural Information Processing Systems}, pp.\
  1905--1913, 2013.

\bibitem[Zhu(2015)]{Zhu2015}
Zhu, X.
\newblock Machine teaching: An inverse problem to machine learning and an
  approach toward optimal education.
\newblock In \emph{AAAI}, pp.\  4083--4087, 2015.

\bibitem[Zilles et~al.(2008)Zilles, Lange, Holte, and Zinkevich]{Zilles2008}
Zilles, S., Lange, S., Holte, R., and Zinkevich, M.
\newblock Teaching dimensions based on cooperative learning.
\newblock In \emph{COLT}, pp.\  135--146, 2008.

\end{thebibliography}





\end{document}